\documentclass[11pt]{article} % For LaTeX2e
\usepackage{fullpage}
\usepackage{hyperref}
\usepackage{url}

\usepackage{amsfonts}
\usepackage{amsmath}
\usepackage{graphicx}
\usepackage{color}

% For citations
%\usepackage{natbib}

% For algorithms
\usepackage{algorithm}
\usepackage{algorithmic}
\usepackage{ifthen}
\usepackage[algo2e,ruled]{algorithm2e}

\newtheorem{theorem}{Theorem}
\newtheorem{lemma}[theorem]{Lemma}
\newtheorem{fact}[theorem]{Fact}
\newtheorem{corollary}[theorem]{Corollary}

%YM
%\usepackage{theomac} %To repeat theorems
%\newtheoremWithMacro{rtheorem}[theorem]{Theorem}
%\newtheoremWithMacro{rlemma}[theorem]{Lemma}
%\newtheoremWithMacro{rfact}[theorem]{Fact}
%\newcounter{rfact}

\newcommand{\qed}{\hfill $\Box$}
\newenvironment{proof}{\par\noindent{\bf Proof.}}{\qed \par\smallskip\noindent}

\newcommand{\field}[1]{\mathbb{#1}}

\newcommand{\E}{\field{E}}
\renewcommand{\Pr}{\field{P}}
\newcommand{\Ind}[1]{\field{I}{\{#1\}}}

\newcommand{\dt}{\displaystyle}

\newcommand{\loss}{\ell}
\newcommand{\hloss}{\widehat{\loss}}

\newcommand{\gain}{g}
\newcommand{\hgain}{\widehat{\gain}}

\newcommand{\hp}{\widehat{p}}
\newcommand{\hs}{\widehat{s}}
\newcommand{\hd}{\widehat{d}}
\newcommand{\hP}{\widehat{P}}
\newcommand{\hG}{\widehat{G}}
\newcommand{\hS}{\widehat{S}}

\newcommand{\reach}[1]{\xrightarrow{{#1}}}
\newcommand{\gammab}{\gamma^{(b)}}
\newcommand{\gammabt}{\gamma^{(b_t)}}
\newcommand{\Tb}{T^{(b)}}

\newcommand{\mas}{\mbox{\tt mas}}

%%%%%%%%%%%%%%%%%%%%% Mannor-Shamir macros %%%%%%%%%%%%%%%%%%%%%%%%

\newcommand{\subsecref}[1]{Subsection~\ref{#1}}

\renewcommand{\eqref}[1]{Eq.~(\ref{#1})}
\newcommand{\lemref}[1]{Lemma~\ref{#1}}

\newcommand{\thmref}[1]{Thm.~\ref{#1}}

\newcommand{\algref}[1]{Algorithm~\ref{#1}}

\newcommand{\Ocal}{\mathcal{O}}

\title{
%From Bandits to Experts via Side-Information
%A Graph-Theoretic Look \\ at Nonstochastic Bandits
%Graph-Theoretic Observation Structures  \\ for Non-stochastic Multi-Arm Bandits
Nonstochastic Multi-Armed Bandits\\ with Graph-Structured Feedback
\footnote{Preliminary versions of this manuscript appeared in \cite{AlCGM13,MS11}.}
 %Besides full results and polished proofs, this manuscript also includes novel high-probability regret bounds, which were missing from previous versions.}
}

\date{}

\author{
Noga Alon\\
Tel Aviv University, Tel Aviv, Israel,\\
Institute for Advanced Study, Princeton, United States,\\
and
Microsoft Research, Herzliya, Israel\\
\texttt{nogaa@tau.ac.il}
\and
Nicol\`o Cesa-Bianchi\\
Universit\`a degli Studi di Milano, Italy \\
\texttt{nicolo.cesa-bianchi@unimi.it}
\and
Claudio Gentile\\
University of Insubria, Italy\\
\texttt{claudio.gentile@uninsubria.it}
\and
Shie Mannor\\
%Department of Electrical Engineering\\
Technion, %Israel Institute of Technology,
Israel\\
\texttt{shie@ee.technion.ac.il}
\and
Yishay Mansour \\
Microsoft Research and\\
Tel-Aviv University, Israel\\
\texttt{mansour@tau.ac.il}
\and
Ohad Shamir \\
%Department of Computer Science and Applied Mathematics
Weizmann Institute of Science, Israel\\
%Rehovot 7610001, Israel
\texttt{ohad.shamir@weizmann.ac.il}
}

% The \author macro works with any number of authors. There are two commands
% used to separate the names and addresses of multiple authors: \And and \AND.
%
% Using \And between authors leaves it to \LaTeX{} to determine where to break
% the lines. Using \AND forces a linebreak at that point. So, if \LaTeX{}
% puts 3 of 4 authors names on the first line, and the last on the second
% line, try using \AND instead of \And before the third author name.

\begin{document}

\maketitle

\begin{abstract}
We present and study a partial-information model of online learning, where
a decision maker repeatedly chooses from a finite set of actions, and
observes some subset of the associated losses. This naturally models
several situations where the losses of different actions are related, and
knowing the loss of one action provides information on the loss of other
actions. Moreover, it generalizes and interpolates between the well studied
full-information setting (where all losses are revealed) and the bandit
setting (where only the loss of the action chosen by the player is
revealed). We provide several algorithms addressing different variants of
our setting, and provide tight regret bounds depending on
combinatorial properties of the information feedback structure.
\end{abstract}

\section{Introduction}
%
%CG (Sept 27th): I just rephrased, but I'm not sure I did a good job...
%OS (Sept 28th): Rephrased according to what Nicolo requested.

Prediction with expert advice ---see, e.g.,
\cite{cb+97,cbl06,FreundSc95,LittlestoneWa94,vo90}--- is a general abstract
framework for studying sequential decision problems. For example, consider a
weather forecasting problem, where each day we receive predictions from
various experts, and we need to devise our forecast. At the end of the day,
we observe how well each expert did, and we can use this information to
improve our forecasting in the future. Our goal is that over time, our
performance converges to that of the best expert in hindsight. More formally,
such problems are often modeled as a repeated game between a player and an
adversary, where each round, the adversary privately assigns a loss value to
each action in a fixed set (in the example above, the discrepancy in the
forecast if we follows a given expert's advice). Then the player chooses an
action (possibly using randomization), and incurs the corresponding loss. The
goal of the player is to control regret, which is defined as the cumulative
excess loss incurred by the player as compared to the best fixed action over
a sequence of rounds.

In some situations, however, the player only gets partial feedback on the
loss associated with each action. For example, consider a web advertising
problem, where every day one can choose an ad to display to a user, out of a
fixed set of ads. As in the forecasting problem, we sequentially choose
actions from a given set, and may wish to control our regret with respect to
the best fixed ad in hindsight. However, while we can observe whether a
displayed ad was clicked on, we do not know what would have happened if we
chose a different ad to display. In our abstract framework, this corresponds
to the player observing the loss of the action picked, but not the losses of
other actions. This well-known setting is referred to as the (non-stochastic)
multi-armed bandit problem, which in this paper we denote as the
\emph{bandit} setting. In contrast, we refer to the previous setting, where
the player observes the losses of all actions, as the \emph{expert} setting.
In this work, our main goal is to bridge between these two feedback settings,
and create a spectrum of models in between.

Before continuing, let us first quantify the performance attainable in the
expert and the bandit setting. Letting $K$ be the number of available
actions, and $T$ be the number of played rounds, the best possible regret for
the expert setting is of order $\sqrt{\ln (K)\,T}$. This optimal rate is
achieved by the Hedge algorithm~\cite{FreundSc95} or the Follow the Perturbed
Leader algorithm~\cite{Kalai:05}. In the bandit setting, the optimal regret
is of order $\sqrt{KT}$, achieved by the INF
algorithm~\cite{DBLP:conf/colt/AudibertB09}. A bandit variant of Hedge,
called Exp3~\cite{AuerCeFrSc02}, achieves a regret with a slightly worse
bound of order $\sqrt{K\ln(K)\,T}$. Thus, switching from the full-information
expert setting to the partial-information bandit setting increases the
attainable regret by a multiplicative factor of $\sqrt{K}$, up to extra
logarithmic factors.
%CG (Aug 29th, 2014): added from Mannor-Shamir
This exponential difference in terms of the dependence on $K$ can be crucial
in problems with large action sets. The intuition for this difference in
performance has long been that in the bandit setting, we only get $1/K$ of
the information obtained in the expert setting (as we observe just a single
loss, rather than all $K$ at each round), hence the additional $K$-factor
under the square root in the bound.

While the bandit setting received much interest, it can be criticized for not
capturing additional side-information we often have on the losses of the
different actions. As a motivating example, consider the problem of web
advertising mentioned earlier. In the standard multi-armed bandits setting,
we assume that we have no information whatsoever on whether undisplayed ads
would have been clicked on.
%CG (Aug 29th, 2014): slight rephrasing
However, in many relevant cases, the semantic relationship among actions
(ads) implies that we do indeed have some side-information. For instance, if
two ads $i$ and $j$ are for similar vacation packages in Hawaii, and ad $i$
was displayed and clicked on by some user, it is likely that the other ad $j$
would have been clicked on as well. In contrast, if ad $i$ is for high-end
running shoes, and ad $j$ is for wheelchair accessories, then a user who
clicked on one ad is unlikely to click on the other. This sort of
side-information is not captured by the standard bandit setting. A similar
type of side-information arises in product recommendation systems hosted on
online social networks, in which users can befriend each other. In this case,
it has been observed that social relationships reveal similarities in tastes
and interests~\cite{said2010social}. Hence, a product liked by some user may
also be liked by the user's friends. A further example, not in the marketing
domain, is route selection: We are given a graph of possible routes
connecting cities. When we select a route connecting two cities, we observe
the cost (say, driving time or fuel consumption) of the ``edges'' along that
route and, in addition, we have complete information on sub-routes including
any subset of the edges.\footnote{ Though this example may also be viewed as
an instance of combinatorial bandits~\cite{Cesa-BianchiL12}, the model we
propose is more general. For example, it does not assume linear losses, which
could arise in the routing example from the partial ordering of sub-routes. }

In this paper, we present and study a setting which captures these types of
side-information, and in fact interpolates between the bandit setting and the
expert setting. This is done by defining a \emph{feedback system}, under
which choosing a given action also reveals the losses of some subset of the
other actions. This feedback system can be viewed as a directed and
time-changing graph $G_t$ over actions: an arc (directed edge) from action
$i$ to action $j$ implies that when playing action $i$ at round $t$ we get
information also about the loss of action $j$ at round $t$. Thus, the expert
setting is obtained by choosing a complete graph over actions (playing any
action reveals all losses), and the bandit setting is obtained by choosing an
empty edge set (playing an action only reveals the loss of that action). The
attainable regret turns out to depend on non-trivial combinatorial properties
of this graph. To describe our results, we need to make some distinctions in
the setting that we consider.

\paragraph{Directed vs.\ symmetric setting.} In some situations, the
      side-information between two actions is symmetric ---for example, if
      we know that both actions will have a similar loss. In that case, we
      can model our feedback system $G_t$ as an undirected graph. In
      contrast, there are situations where the side-information is not
      symmetric. For example, consider the side-information gained from
      asymmetric social links, such as followers of celebrities. In such
      cases, followers might be more likely to shape their preferences
      after the person they follow, than the other way around. Hence, a
      product liked by a celebrity is probably also liked by his/her
      followers, whereas a preference expressed by a follower is more often
      specific to that person. Another example in the context of ad
      placement is when a person buying a video game console might also buy a
      high-def cable to connect it to the TV set. Vice versa, interest in
      high-def cables need not indicate an interest in game consoles. In
      such situations, modeling the feedback system via a directed graph
      $G_t$ is more suitable. Note that the symmetric setting is a special
      case of the directed setting, and therefore handling the symmetric
      case is easier than the directed case.
\paragraph{Informed vs.\ uninformed setting.} In some cases, the
      feedback system is known to the player before each round, and can
      be utilized for choosing actions. For example, we may know beforehand
      which pairs of ads are related, or we may know the users who are friends of another user.
      We denote this setting as the informed setting. In
      contrast, there might be cases where the player does not have full
      knowledge of the feedback system before choosing an action, and we denote this harder setting as the uninformed setting. For example, consider a firm recommending products to users of an online social network. If the network is owned by a third party, and therefore not fully visible, the system may still be able to run its recommendation policy by only accessing small portions of the social graph around each chosen action (i.e., around each user to whom a recommendation is sent).
%CG (Aug 29th, 2014): Further elaboration needed here.
%NCB: Added example for the uninformed setting.

%CG (Aug 29th, 2014): Added
\medskip
Generally speaking, our contribution lies in both characherizing the regret
bounds that can be achieved in the above settings as a function of
combinatorial properties of the feedback systems, as well as providing
efficient sequential decision algorithms working in those settings. More
specifically, our contributions can be summarized as follows (see
Section~\ref{s:prel} for a brief review of the relevant combinatorial
properties of graphs).
\paragraph{Uninformed setting.}
We present an algorithm (Exp3-SET) that achieves $
    \widetilde{\Ocal}\Bigl(\sqrt{\ln(K)\sum_{t=1}^T \mas(G_t)}\Bigr)
$
      %CG (Aug 29th, 2014): footnote added
      %NCB moved up
      regret in expectation, where
      $\mas(G_t)$ is the
      size of the maximal acyclic graph in $G_t$. In the symmetric setting,
      $\mas(G_t)=\alpha(G_t)$ ($\alpha(G_t)$ is the independence number of $G_t$), and we
      prove that the resulting regret bound is optimal up to logarithmic
      factors, when $G_t=G$ is fixed for all rounds. Moreover, we show that
      Exp3-SET attains $\Ocal\bigl(\sqrt{\ln(K)\,T}\bigr)$ regret when the
      feedback graphs $G_t$ are random graphs generated from a standard
      Erd\H{o}s-Renyi model.
\paragraph{Informed setting.} We present an algorithm (Exp3-DOM) that
achieves expected regret of $
    \Ocal\Bigl(\ln(K)\sqrt{\ln(KT)\sum_{t=1}^T\alpha(G_t)}\Bigr)
$, for both the symmetric
      and directed cases. Since our lower bound also applies to the
      informed setting, this characterizes the attainable regret in the
      informed setting, up to logarithmic factors. Moreover, we present
      another algorithm (ELP.P), that achieves
$ \Ocal\Bigl(\sqrt{\ln(K/\delta)\sum_{t=1}^T \mas(G_t)}\Bigr) $
      regret
      with probability at least $1-\delta$ over the algorithm's internal
      randomness. Such a high-probability guarantee is stronger than the
      guarantee for Exp3-DOM, which holds just in expectation, and turns out to be
      of the same order in the symmetric case. However, in the directed case,
      the regret bound may be weaker since $\mas(G_t)$ may be larger than
      $\alpha(G_t)$. Moreover, ELP.P requires us to solve a linear program
      at each round, whereas Exp3-DOM only requires finding an approximately
      minimal dominating set, which can be done by a standard greedy set
      cover algorithm.

\medskip
Our results interpolate between the bandit and expert settings: When $G_t$ is
a full graph for all $t$ (which means that the player always gets to see all
losses, as in the expert setting), then $\mas(G_t)=\alpha(G_t)=1$, and we
recover the standard guarantees for the expert setting: $\sqrt{T}$ up to
logarithmic factors. In contrast, when $G_t$ is the empty graph for all $t$
(which means that the player only observes the loss of the action played, as
in the bandit setting), then $\mas(G_t)=\alpha(G_t)=K$, and we recover the
standard $\sqrt{KT}$ guarantees for the bandit setting, up to logarithmic
factors. In between are regret bounds scaling like $\sqrt{BT}$, where $B$
lies between $1$ and $K$, depending on the graph structure (again, up to
log-factors).

Our results are based on the algorithmic framework for handling the standard
bandit setting introduced in \cite{AuerCeFrSc02}. In this framework, the
full-information Hedge algorithm is combined with unbiased estimates of the
full loss vectors in each round. The key challenge is designing an
appropriate randomized scheme for choosing actions, which correctly balances
exploration and exploitation or, more specifically, ensures small regret
while simultaneously controlling the variance of the loss estimates. In our
setting, this variance is subtly intertwined with the structure of the
feedback system. For example, a key quantity emerging in the analysis of
Exp3-DOM can be upper bounded in terms of the independence number of the
graphs. This bound (Lemma~\ref{l:weightedamlemma} in the appendix) is based
on a combinatorial construction which may be of independent interest.

For the uninformed setting, our work was recently improved by \cite{KNVM14},
whose main contribution is an algorithm attaining
$\Ocal\Bigl(\sqrt{\ln(K)\ln(KT)\sum_{t=1}^{T}\alpha(G_t)}\Bigr)$ expected
regret in the uninformed and directed setting using a novel implicit
exploration idea. Up to log factors, this matches the performance of our
Exp3-DOM and ELP.P algorithms, without requiring prior knowledge of the
feedback system. On the other hand, their bound holds only in expectation
rather than with high probability.

%CG (Aug 29th, 2014): Added
\paragraph{Paper Organization:}
%The paper is organized as follows.
In the next section, we formally define our learning protocols, introduce our
main notation, and recall the combinatorial properties of graphs that we
require. In Section~\ref{s:symm}, we tackle the uninformed setting, by
introducing Exp3-SET, with upper and lower bounds on regret based on both the
size of the maximal acyclic subgraph (general directed case) and the
independence number (symmetric case). In Section~\ref{s:directed}, we handle
the informed setting through the two algorithms Exp3-DOM
(Section~\ref{ss:expdom}) on which we prove regret bounds in expectation, and
ELP.P (Section~\ref{sec:elpp}) whose bounds hold in the more demanding high
probability regime. We conclude the main text with Section~\ref{s:conc},
where we discuss open questions, and possible directions for future research.
All technical proofs are provided in the appendices. We organized such proofs
based on which section of the main text the corresponding theoretical claims
occur.

%\subsection*{Preliminaries and basic definitions}
\section{Learning protocol, notation, and preliminaries}\label{s:prel}
As stated in the introduction, we consider adversarial decision problems with
a finite action set $V = \{1,\dots,K\}$. At each time $t=1,2,\dots$, a player
(the ``learning algorithm'') picks some action $I_t \in V$ and incurs a
bounded loss $\loss_{I_t,t} \in [0,1]$. Unlike the adversarial bandit
problem~\cite{AuerCeFrSc02,cbl06}, where only the played action $I_t$ reveals
its loss $\loss_{I_t,t}$, here we assume all the losses in a subset
$S_{I_t,t} \subseteq V$ of actions are revealed after $I_t$ is played. More
formally, the player observes the pairs $(i,\loss_{i,t})$ for each $i \in
S_{I_t,t}$. We also assume $i\in S_{i,t}$ for any $i$ and $t$, that is, any
action reveals its own loss when played. Note that the bandit setting
($S_{i,t} = \{i\}$) and the expert
setting ($S_{i,t} = V$) %---see, e.g.,~\cite{LittlestoneWa94,vo90,cb+97})
are both special cases of this framework. We call $S_{i,t}$ the {\em feedback
set} of action $i$ at time $t$, and write $i \reach{t} j$ when at time $t$
playing action $i$ also reveals the loss of action $j$. (We sometimes write
$i \reach{} j$ when time $t$ plays no role in the surrounding context.) With
this notation, $S_{i,t} = \{j\in V\,:\, i \reach{t} j\}$. The family of
feedback sets $\{S_{i,t}\}_{i\in V}$ we collectively call the {\em feedback
system} at time $t$.

The adversaries we consider are nonoblivious. Namely, each loss $\loss_{i,t}$
and feedback set $S_{i,t}$ at time $t$ can be arbitrary functions of the past
player's actions $I_1,\dots,I_{t-1}$ (note, though, that the regret is
measured with respect to a fixed action assuming the adversary would have
chosen the same losses, so our results do not extend to truly adaptive
adversaries in the sense of \cite{DBLP:conf/icml/DekelTA12}). The performance
of a player $A$ is measured through the expected regret
\[
    \max_{k\in V} \E\bigl[L_{A,T} - L_{k,T}\bigl]
\]
where $L_{A,T} = \loss_{I_1,1} + \cdots + \loss_{I_T,T}$ and $L_{k,T} =
\loss_{k,1} + \cdots + \loss_{k,T}$
%\[
%     = \sum_{t=1}^T \loss_{I_t,t} \qquad\text{and}\qquad L_{k,T} = \sum_{t=1}^T \loss_{k,t}
%\]
are the cumulative losses of the player and of action $k$,
respectively.\footnote { Although we defined the problem in terms of losses,
our analysis can be applied to the case when actions return rewards $g_{i,t}
\in [0,1]$ via the transformation $\loss_{i,t} = 1 - g_{i,t}$. } The
expectation is taken with respect to the player's internal randomization
(since losses are allowed to depend on the player's past random actions,
$L_{k,T}$ may also be random). In Section~\ref{s:symm} we also consider a
variant in which the feedback system is randomly generated according to a
specific stochastic model.
% In this respect, we distinguish between {\em adversarial} and {\em random} feedback systems.
For simplicity, we focus on a finite horizon setting, where the number of
rounds $T$ is known in advance. This can be easily relaxed using a standard
doubling trick.

We also consider the harder setting where the goal is to bound the actual
regret
\[
     L_{A,T} - \max_{k\in V} L_{k,T}
\]
with high probability $1-\delta$ with respect to the player's internal
randomization, and where the regret bound depends logarithmically on
$1/\delta$. Clearly, a high probability bound on the actual regret implies a
similar bound on the expected regret.

Whereas some of our algorithms need to know the feedback system at the
beginning of each step $t$, others need it only at the end of each step. We
thus consider two online learning settings: the {\em informed} setting, where
the full feedback system $\{S_{i,t}\}_{i\in V}$ selected by the adversary is
made available to the learner {\em before} making the choice $I_t$;
%This is essentially the ``side-information" framework first considered in \cite{MS11}.
and the {\em uninformed setting}, where no information whatsoever regarding
the time-$t$ feedback system is given to the learner prior to prediction, but
only following the prediction and with the associated loss information.

We find it convenient at this point to adopt a graph-theoretic interpretation
of feedback systems.
%as in \cite{MS11}.
At each step $t=1,2,\dots,T$, the feedback system $\{S_{i,t}\}_{i\in V}$
defines a directed graph $G_t = (V,D_t)$, the feedback graph, where $V$
%CG (Aug 29th, 2014): switched back to V, as it doesn't depend on time, same below
%$V_t$
is the set of actions and $D_t$ is the set of arcs (i.e., ordered pairs of
nodes). For $j \neq i$, the arc $(i,j)$ belongs to $D_t$ if and only if $i
\reach{t} j$ (the self-loops created by $i \reach{t} i$ are intentionally
ignored). Hence, we can equivalently define $\{S_{i,t}\}_{i\in V}$ in terms
of $G_t$. Observe that the outdegree $d_{i,t}^+$ of any $i \in V$ equals
$|S_{i,t}|-1$. Similarly, the indegree $d_{i,t}^-$ of $i$ is the number of
actions $j \neq i$ such that $i \in S_{j,t}$ (i.e., such that $j
\reach{t}i$).
A notable special case of the above is when the feedback system is symmetric:
$j \in S_{i,t}$ if and only if $i \in S_{j,t}$ for all $i,j$ and $t$. In
words, playing $i$ at time $t$ reveals the loss of $j$ if and only if playing
$j$ at time $t$ reveals the loss of $i$. A symmetric feedback system defines
an undirected graph $G_t$ or, more precisely, a directed graph having, for
every pair of nodes $i,j \in V$, either no arcs or length-two directed
cycles.
Thus, from the point of view of the symmetry of the feedback system, we also
distinguish between the {\em directed} case ($G_t$ is a general directed
graph) and the {\em symmetric} case ($G_t$ is an undirected graph for all
$t$).
% For instance, combining the terminology introduced so far, the adversarial, informed, and directed setting is when $G_t$ is an adversarially-generated directed graph disclosed to the algorithm in round $t$ before prediction, while the random, uninformed, and directed setting is when $G_t$ is a randomly generated directed graph which is not given to the algorithm before prediction.

The analysis of our algorithms depends on certain properties of the sequence
of graphs $G_t$. Two graph-theoretic notions playing an important role here
are those of {\em independent sets} and {\em dominating sets}. Given an
undirected graph $G = (V,E)$, an independent set of $G$ is any subset $T
\subseteq V$ such that no two $i,j \in T$ are connected by an edge in $E$,
i.e., $(i,j)\not\in E$. An independent set is {\em maximal} if no proper
superset thereof is itself an independent set. The size of any largest (and
thus maximal) independent set is the {\em independence number} of $G$,
denoted by $\alpha(G)$. If $G$ is directed, we can still associate with it an
independence number: we simply view $G$ as undirected by ignoring arc
orientation. If $G = (V,D)$ is a directed graph, then a subset $R \subseteq
V$ is a dominating set for $G$ if for all $j \not\in R$ there exists some $i
\in R$ such that $(i,j) \in D$. In our bandit setting, a time-$t$ dominating
set $R_t$ is a subset of actions with the property that the loss of any
remaining action in round $t$ can be observed by playing some action in
$R_t$. A dominating set is {\em minimal} if no proper subset thereof is
itself a dominating set. The domination number of directed graph $G$, denoted
by $\gamma(G)$, is the size of any smallest (and therefore minimal)
dominating set for $G$; see Figure~\ref{f:1} for examples.
\begin{figure}[t!]
\begin{picture}(-40,220)(-40,220)
\scalebox{0.6}{\includegraphics{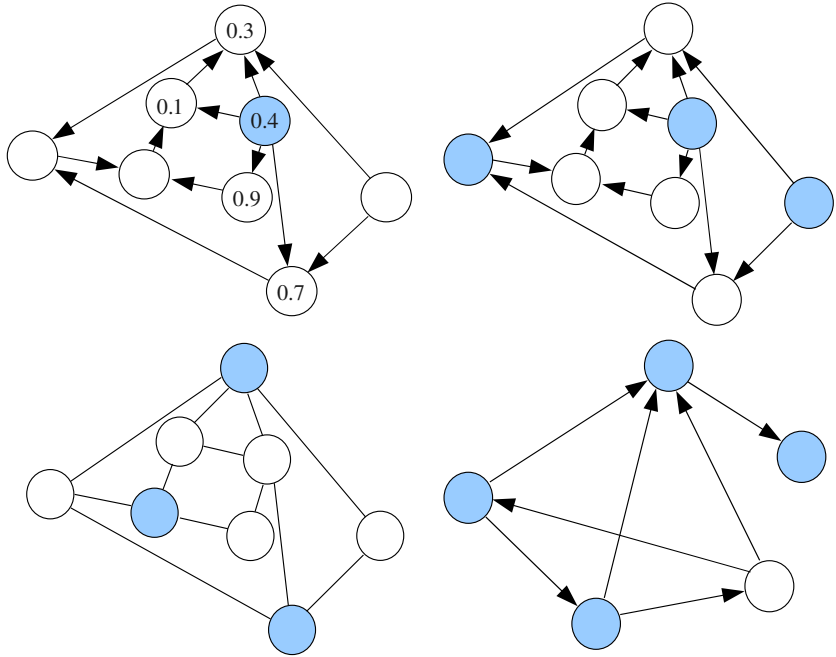}}
\end{picture}
\vspace{0.6in}
\caption{An example for some graph-theoretic concepts. {\bf Top Left:} A feedback system with $K = 8$ actions (self-loops omitted).
The light blue action reveals its loss $0.4$, as well as the losses of the other four actions it points to.
{\bf Top Right:} The light blue nodes are a minimal dominating set for the same graph.
The rightmost action is included in any dominating set, since no other action is dominating it.
{\bf Bottom Left:} A symmetric feedback system where the light blue nodes are a maximal independent set. This is the same graph as before, but edge orientation has been removed. {\bf Bottom Right:} The light blue nodes are a maximum acyclic subgraph of the depicted $5$-action graph.
\label{f:1}
}
\end{figure}

Computing a minimum dominating set for an arbitrary directed graph $G_t$ is
equivalent to solving a minimum set cover problem on the associated feedback
system $\{S_{i,t}\}_{i\in V}$. Although minimum set cover is NP-hard, the
well-known Greedy Set Cover algorithm~\cite{Chv79}, which repeatedly selects
from $\{S_{i,t}\}_{i\in V}$ the set containing the largest number of
uncovered elements so far, computes a dominating set $R_t$ such that $|R_t|
\leq \gamma(G_t)\,(1+\ln K)$.

We can also lift the notion of independence number of an undirected graph to
directed graphs through the notion of {\em maximum acyclic subgraphs}. Given
a directed graph $G = (V,D)$, an acyclic subgraph of $G$ is any graph $G' =
(V',D')$ such that $V'\subseteq V$, and $D'= D\cap \bigl(V'\times V'\bigr)$,
with no (directed) cycles. We denote by $\mas(G)= |V'|$ the maximum size of
such $V'$. Note that when $G$ is undirected (more precisely, as above, when
$G$ is a directed graph having for every pair of nodes $i,j \in V$ either no
arcs or length-two cycles), then $\mas(G) = \alpha(G)$, otherwise $\mas(G)
\geq \alpha(G)$. In particular, when $G$ is itself a directed acyclic graph,
then $\mas(G) = |V|$. See Figure~\ref{f:1} (bottom right) for a simple
example. Finally, we let $\Ind{A}$ denote the indicator function of event
$A$.

%\section{Algorithms without explicit exploration}
%What we can do without exploration: undirected graphs (simpler algorithm and simpler proofs), random graphs (random permutation of vertex names in a given directed graph, random permutation of edge orientations in a given undirected graph, Erdos-Renyi random graphs).
\section{The uninformed setting}\label{s:symm}
%
%In this section, we show that a simple variant of the Exp3 algorithm~\cite{AuerCeFrSc02} obtains optimal regret (to within logarithmic factors) in the symmetric and uninformed setting.
%We then show that even the harder adversarial directed setting lends itself to an analysis, though with a weaker regret bound.

In this section we investigate the setting in which the learner must select
an action without any knowledge of the current feedback system. We introduce
a simple general algorithm, Exp3-SET (Algorithm~\ref{a:lossalg}), that works
in both the directed and symmetric cases. In the symmetric case, we show that
the regret bound achieved by the algorithm is optimal to within logarithmic
factors.

When the feedback graph $G_t$ is a fixed clique or a fixed edgeless graph,
Exp3-SET reduces to the Hedge algorithm or, respectively, to the Exp3
algorithm. Correspondingly, the regret bound for Exp3-SET yields the regret
bound of Hedge and that of Exp3 as special cases.

\begin{algorithm2e}[t]
\SetKwSty{textrm} \SetKwFor{For}{{\bf For}}{}{}
\SetKwIF{If}{ElseIf}{Else}{if}{}{else if}{else}{} \SetKwFor{While}{while}{}{}
\textbf{Parameter:} $\eta \in [0,1]$\\
\textbf{Initialize:} $w_{i,1} = 1$ for all $i \in V = \{1,\ldots,K\}$\\
\For{$t=1,2,\dots$:} { {
\begin{enumerate}
\item Feedback system $\{S_{i,t}\}_{i\in V}$ and losses $\loss_t$ are
    generated but not disclosed~;
\item Set ${\dt p_{i,t} = \frac{w_{i,t}}{W_t}}$ for each $i\in V$, where
    ${\dt W_t = \sum_{j \in V} w_{j,t}}$~;
\item Play action $I_t$ drawn according to distribution $p_t =
    (p_{1,t},\dots,p_{K,t})$~;
\item Observe:
\begin{enumerate}
\item pairs $(i,\loss_{i,t})$ for all $i \in S_{I_t,t}$;
\item Feedback system $\{S_{i,t}\}_{i\in V}$ is disclosed;
\end{enumerate}
\item For any $i \in V$ set $w_{i,t+1} =
    w_{i,t}\,\exp\bigl(-\eta\,\hloss_{i,t}\bigr)$, where
\[
    \hloss_{i,t}
=
    \frac{\loss_{i,t}}{q_{i,t}}\,\Ind{i \in S_{I_t,t}}
\qquad\text{and}\qquad
    q_{i,t} = \sum_{j \,:\, j \reach{t} i} p_{j,t}~.
\]
\end{enumerate}
} } \caption{The Exp3-SET algorithm (for the uninformed setting)}
\label{a:lossalg}
\end{algorithm2e}
%% ----------------------------------------------------------------------------
%
Similar to Exp3, Exp3-SET uses importance sampling loss estimates
$\hloss_{i,t}$ that divide each observed loss $\loss_{i,t}$ by the
probability $q_{i,t}$ of observing it. This probability $q_{i,t}$ is the
probability of observing the loss of action $i$ at time $t$, i.e., it is
simply the sum of all $p_{j,t}$ (the probability of selecting action $j$ at
time $t$) such that $j \reach{t} i$ (recall that this sum always includes
$p_{i,t}$).

In the expert setting, we have $q_{i,t} = 1$ for all $i$ and $t$, and we
recover the Hedge algorithm. In the bandit setting, $q_{i,t} = p_{i,t}$ for
all $i$ and $t$, and we recover the Exp3 algorithm (more precisely, we
recover the variant Exp3Light of Exp3 that does not have an explicit
exploration term, see \cite{Cesa-BianchiMS05} and also
\cite[Theorem~2.7]{Stoltz2005}).

In what follows, we show that the regret of Exp3-SET can be bounded in terms
of the key quantity
\begin{equation}\label{e:Qt}
    Q_t = \sum_{i \in V} \frac{p_{i,t}}{q_{i,t}} = \sum_{i \in V} \frac{p_{i,t}}{\sum_{j \,:\, j \reach{t} i} p_{j,t}}~.
\end{equation}
Each term $p_{i,t}/q_{i,t}$ can be viewed as the probability of drawing $i$
from $p_t$ conditioned on the event that $\loss_{i,t}$ was observed.
%Similar to~\cite{MS11},
A key aspect to our analysis is the ability to deterministically and
non-vacuously\footnote{An obvious upper bound on $Q_t$ is $K$, since
$p_{i,t}/q_{i,t}\leq 1$.} upper bound $Q_t$ in terms of certain quantities
defined on $\{S_{i,t}\}_{i\in V}$. We do so in two ways, either irrespective
of how small each $p_{i,t}$ may be (this section) or depending on suitable
lower bounds on the probabilities $p_{i,t}$ (Section~\ref{s:directed}). In
fact, forcing lower bounds on $p_{i,t}$ is equivalent to adding exploration
terms to the algorithm, which can be done only when $\{S_{i,t}\}_{i\in V}$ is
known before each prediction (i.e., in the informed setting).

% then we can afford the uninformed setting.
% In fact, in Section~\ref{s:directed} we see that in the adversarial and directed setting, $Q_t$ does not admit a
% nonvacuous upper bound unless a lower bound on certain $p_{i,t}$
% is forced through an exploration term. The computation of the exploration term needs $\{S_{i,t}\}_{i\in V}$ before each
% prediction, an information available only in the informed setting.

The following result, whose proof is in Appendix~\ref{s:appendix:thm:noexp},
is the building block for all subsequent results in the uninformed setting.
\begin{lemma}\label{thm:noexp}
The regret of Exp3-SET satisfies
\begin{equation}
\label{eq:set-bound}
    \max_{k \in V} \E\bigl[L_{A,T} - L_{k,T}\bigr]
\le
    \frac{\ln K}{\eta} + \frac{\eta}{2}\,\sum_{t=1}^T \E[Q_t]~.
\end{equation}
\end{lemma}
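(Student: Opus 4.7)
The plan is to follow the standard Hedge/Exp3-style potential argument, with the feedback-graph structure entering only when we take expectations of the variance-like term. Set $W_t = \sum_{i \in V} w_{i,t}$ and track $\ln(W_{T+1}/W_1)$ from two sides.

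For the upper bound, note that $W_{t+1}/W_t = \sum_i p_{i,t}\exp(-\eta\,\hloss_{i,t})$. Since $\hloss_{i,t} \geq 0$, I would apply the inequality $e^{-x} \leq 1 - x + x^2/2$ valid for all $x \geq 0$, followed by $\ln(1+z) \leq z$, to obtain
\[
\ln\frac{W_{t+1}}{W_t} \;\leq\; -\eta \sum_{i\in V} p_{i,t}\hloss_{i,t} \;+\; \frac{\eta^2}{2}\sum_{i\in V} p_{i,t}\hloss_{i,t}^2.
\]
Summing over $t$ and using the lower bound $\ln(W_{T+1}/W_1) \geq \ln(w_{k,T+1}/K) = -\eta\sum_t \hloss_{k,t} - \ln K$ for any fixed $k \in V$, rearranging yields the deterministic inequality
\[
\sum_{t=1}^T \sum_{i\in V} p_{i,t}\,\hloss_{i,t} \;-\; \sum_{t=1}^T \hloss_{k,t} \;\leq\; \frac{\ln K}{\eta} \;+\; \frac{\eta}{2}\sum_{t=1}^T \sum_{i\in V} p_{i,t}\,\hloss_{i,t}^2.
\]

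The second step is to take expectations, and here the feedback system enters. Conditionally on the history up to the end of round $t-1$, both $\{S_{i,t}\}_{i\in V}$ and $\loss_t$ are fixed (nonoblivious adversary), and the only randomness is $I_t \sim p_t$. Thus $\E_t[\Ind{i \in S_{I_t,t}}] = \sum_{j \,:\, j \reach{t} i} p_{j,t} = q_{i,t}$, which gives $\E_t[\hloss_{i,t}] = \loss_{i,t}$. Chaining these identities yields $\E\bigl[\sum_t \sum_i p_{i,t}\hloss_{i,t}\bigr] = \E[L_{A,T}]$ and $\E\bigl[\sum_t \hloss_{k,t}\bigr] = \E[L_{k,T}]$. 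For the variance term, the same computation gives
\[
\E_t\bigl[p_{i,t}\,\hloss_{i,t}^2\bigr] \;=\; \frac{p_{i,t}\,\loss_{i,t}^2}{q_{i,t}} \;\leq\; \frac{p_{i,t}}{q_{i,t}},
\]
using $\loss_{i,t} \in [0,1]$, so summing over $i \in V$ gives $\E_t\bigl[\sum_i p_{i,t}\hloss_{i,t}^2\bigr] \leq Q_t$ by definition of $Q_t$ in \eqref{e:Qt}.

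Combining the two steps and taking the maximum over $k \in V$ outside the expectation delivers the claimed bound. The proof is essentially routine; the only subtlety to verify is that the feedback system $\{S_{i,t}\}_{i\in V}$ and losses $\loss_t$ are measurable with respect to the history at the start of round $t$, so that $\E_t$ treats $p_{i,t}, q_{i,t}, \loss_{i,t}$ as constants. Since the nonoblivious adversary specifies these as functions of $I_1,\dots,I_{t-1}$ only, this holds, and no further obstacle arises.
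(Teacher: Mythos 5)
Your proposal is correct and follows essentially the same route as the paper's proof in Appendix~\ref{s:appendix:thm:noexp}: the potential argument with $e^{-x}\le 1-x+x^2/2$, the comparison with $\ln(w_{k,T+1}/W_1)$, and then conditional expectations yielding $\E_t[\hloss_{i,t}]=\loss_{i,t}$ and $\E_t[\hloss_{i,t}^2]\le 1/q_{i,t}$ so that the second-order term becomes $Q_t$. The measurability point you flag at the end is exactly the conditioning convention the paper adopts at the start of its appendix, so there is no gap.
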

In the expert setting, $q_{i,t}=1$ for all $i$ and $t$ implies $Q_t = 1$
deterministically for all $t$. Hence, the right-hand side
of~(\ref{eq:set-bound}) becomes $
    ({\ln K})/{\eta} + ({\eta}/{2})\,T~,
$ corresponding to the Hedge bound with a slightly larger constant in the
second term; see, e.g., \cite[Page~72]{cbl06}. In the bandit setting,
$q_{i,t}=p_{i,t}$ for all $i$ and $t$ implies $Q_t = K$ deterministically for
all $t$. Hence, the right-hand side of~(\ref{eq:set-bound}) takes the form $
({\ln K})/{\eta} + ({\eta}/{2})\,KT~, $ equivalent to the Exp3 bound; see,
e.g., \cite[Equation~3.4]{bubeck2012regret}.

We now move on to the case of general feedback systems, for which we can
prove the following result (proof is in Appendix~\ref{s:appendix:c:ndag}).
\begin{theorem}\label{c:ndag}
The regret of Exp3-SET satisfies
\[
    \max_{k \in V} \E\bigl[L_{A,T} - L_{k,T}\bigr]
\le
    \frac{\ln K}{\eta} + \frac{\eta}{2}\,\sum_{t=1}^T \E[\mas(G_t)]~.
\]
If $\mas(G_t) \leq m_t$ for $t=1,\dots,T$, then setting $\eta = \sqrt{(2\ln
K)\big/\sum_{t=1}^T m_t}$ gives
\[
    \max_{k \in V} \E\bigl[L_{A,T} - L_{k,T}\bigr]
\le
    \sqrt{2(\ln K)\sum_{t=1}^T m_t}~.
\]
\end{theorem}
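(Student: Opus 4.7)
The plan is to reduce the theorem to a purely deterministic combinatorial bound and then tune $\eta$. Lemma~\ref{thm:noexp} already hands us
\[
\max_{k \in V} \E\bigl[L_{A,T} - L_{k,T}\bigr] \le \frac{\ln K}{\eta} + \frac{\eta}{2}\sum_{t=1}^T \E[Q_t],
\]
so the entire task is to bound $Q_t$ pointwise in terms of $\mas(G_t)$, take expectations, and optimize.

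The heart of the proof, and what I expect to be the main obstacle, is the deterministic combinatorial claim: for any directed graph $G=(V,D)$ on $K$ nodes with a self-loop at every vertex, and any probability distribution $p$ on $V$,
\[
\sum_{i \in V} \frac{p_i}{p_i + \sum_{j : (j,i)\in D} p_j} \le \mas(G).
\]
My approach would be a peeling argument: iteratively identify a vertex $v$ lying on some directed cycle of the current graph that can be removed without inflating the bound. A naive induction fails because deleting an arbitrary cycle vertex both shrinks denominators of its successors (potentially increasing the remaining summands) and can drop $\mas$ by one, so the choice of $v$ matters. A clean option is to fix a maximum acyclic subgraph $V^{*}$ at the outset and peel off vertices of $V \setminus V^{*}$ one at a time; since deleting such a $v$ leaves $V^{*}$ acyclic in the smaller graph, $\mas$ is preserved, and an averaging step should show that the drop in the sum compensates the summand charged to $v$. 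When all of $V \setminus V^{*}$ has been removed, the remaining graph is acyclic, so each of its $|V^{*}| = \mas(G)$ summands is at most $1$, yielding the claim. An alternative I would try in parallel is a topological ordering argument on a DAG that witnesses $\mas(G)$, possibly combined with an LP-duality view of the maximum acyclic subgraph.

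Once $Q_t \le \mas(G_t)$ is established, monotonicity of expectation gives $\E[Q_t] \le \E[\mas(G_t)]$, which plugged into Lemma~\ref{thm:noexp} yields the first displayed bound of the theorem. For the second bound, substituting the assumption $\mas(G_t) \le m_t$ reduces the right-hand side to $(\ln K)/\eta + (\eta/2)\sum_t m_t$; elementary calculus minimizes this expression over $\eta > 0$ at $\eta = \sqrt{2\ln K \big/ \sum_t m_t}$, producing the stated $\sqrt{2(\ln K)\sum_t m_t}$ regret bound.
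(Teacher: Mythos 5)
Your outer structure is exactly the paper's: Theorem~\ref{c:ndag} is obtained by combining Lemma~\ref{thm:noexp} with the deterministic combinatorial bound $Q_t \le \mas(G_t)$ (the paper's Lemma~\ref{lemma:nDGA}), followed by the routine optimization over $\eta$, which you carry out correctly. The gap is precisely where you anticipated it, in the proof of the combinatorial claim, and your proposed peeling argument does not close it. The inductive step you need is that deleting a vertex $v \notin V^{*}$ does not increase the potential $\Phi(G,p)=\sum_i p_i/q_i$ (with $q_i = p_i+\sum_{j:j\reach{}i}p_j$), i.e.\ that the loss of the term $p_v/q_v$ is compensated by the growth of the terms of $v$'s out-neighbors. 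This is false for an arbitrary vertex outside a fixed maximum acyclic subgraph. Take the directed $3$-cycle $1\to 2\to 3\to 1$ with $p=(0.5,\,0.2,\,0.3)$ and fix $V^{*}=\{1,2\}$, so $v=3$: one computes $\Phi(G,p)=\tfrac{0.5}{0.8}+\tfrac{0.2}{0.7}+\tfrac{0.3}{0.5}\approx 1.51$ while $\Phi(G-3,p)=1+\tfrac{0.2}{0.7}\approx 1.29$, so the potential strictly drops under deletion and the chain $\Phi(G,p)\le\Phi(G-v,p)\le\mas(G-v)=\mas(G)$ breaks at its first link. In this example a different choice of $V^{*}$ (equivalently, of which cycle vertex to peel) happens to work, but your plan fixes $V^{*}$ once and for all, and you give no criterion for selecting a peelable vertex nor an argument that one exists at every stage; ``an averaging step should show'' is exactly the missing content.

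The paper's Lemma~\ref{lemma:nDGA} avoids this compensation issue entirely by running the greedy construction in the opposite direction: rather than peeling vertices \emph{outside} a prescribed acyclic set, it repeatedly selects the vertex $i_1$ \emph{minimizing} $q_i = p_i+\sum_{j\in N_i^-}p_j$, places $i_1$ into the acyclic set being built, and deletes $i_1$ \emph{together with its entire in-neighborhood} $N_{i_1}^-$. The point is that the total contribution of the whole deleted block, evaluated at the original (hence largest) denominators, is $\sum_{r\in N_{i_1}^-\cup\{i_1\}} p_r/q_r \le \bigl(\sum_r p_r\bigr)/q_{i_1} = 1$ by minimality, so each round costs at most $1$; and because all arcs into each selected vertex from the surviving graph are destroyed when its in-neighbors are removed, the selected vertices induce an acyclic subgraph, giving $\Phi(G,p)\le|V'|\le\mas(G)$. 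If you want to salvage your route, you would need to prove a selection rule guaranteeing the compensation inequality at every step; as written, the argument is incomplete at its central point.
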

As we pointed out in Section~\ref{s:prel}, $\mas(G_t) \geq \alpha(G_t)$, with
equality holding when $G_t$ is an undirected graph. Hence, in the special
case when $G_t$ is symmetric, we obtain the following result.
%, originally proven in~\cite{MS11} for the informed setting; see also Section~\ref{sec:elpp}.
%
\begin{corollary}\label{thm:symmetric}
In the symmetric case, the regret of Exp3-SET satisfies
\[
    \max_{k \in V} \E\bigl[L_{A,T} - L_{k,T}\bigr]
\le
    \frac{\ln K}{\eta} + \frac{\eta}{2}\,\sum_{t=1}^T \E[\alpha(G_t)]~.
\]
If $\alpha(G_t) \leq \alpha_t$ for $t = 1, \ldots, T$, then setting $\eta =
\sqrt{(2\ln K)\big/\sum_{t=1}^T \alpha_t}$ gives
\[
    \max_{k \in V} \E\bigl[L_{A,T} - L_{k,T}\bigr]
\le
    \sqrt{2(\ln K)\sum_{t=1}^T \alpha_t}~.
\]
\end{corollary}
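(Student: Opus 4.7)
The plan is to derive Corollary~\ref{thm:symmetric} as an essentially immediate consequence of Theorem~\ref{c:ndag} together with the graph-theoretic identity $\mas(G)=\alpha(G)$ that holds whenever $G$ is symmetric (equivalently, undirected, in the sense made precise in Section~\ref{s:prel}). There is no new probabilistic or algorithmic content to supply: everything about the algorithm, the importance-weighted estimators $\hloss_{i,t}$, and the bound on $\sum_t \E[Q_t]$ in terms of $\sum_t \E[\mas(G_t)]$ has already been taken care of upstream of this statement.

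First I would invoke Theorem~\ref{c:ndag} verbatim, giving
\[
    \max_{k \in V} \E\bigl[L_{A,T} - L_{k,T}\bigr]
\le
    \frac{\ln K}{\eta} + \frac{\eta}{2}\,\sum_{t=1}^T \E[\mas(G_t)].
\]
Next I would quote the observation from Section~\ref{s:prel} that for any symmetric feedback system the associated graph $G_t$ is undirected, and for such a graph $\mas(G_t) = \alpha(G_t)$; substituting this identity inside the expectation yields the first displayed inequality of the corollary.

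For the second displayed inequality, I would carry out the same standard tuning step as in Theorem~\ref{c:ndag}. Given the deterministic upper bounds $\alpha(G_t)\le \alpha_t$, the right-hand side of the first inequality is at most $\frac{\ln K}{\eta} + \frac{\eta}{2}\sum_{t=1}^T \alpha_t$. Minimizing the function $\eta \mapsto \frac{A}{\eta} + \frac{\eta B}{2}$ with $A = \ln K$ and $B = \sum_{t=1}^T \alpha_t$ is elementary: the minimizer is $\eta^\star = \sqrt{2A/B} = \sqrt{(2\ln K)\big/\sum_{t=1}^T \alpha_t}$, and the minimum value is $\sqrt{2AB} = \sqrt{2(\ln K)\sum_{t=1}^T \alpha_t}$, as claimed.

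There is no genuine obstacle here; the only thing to be careful about is the ordering of operations with the expectation and the deterministic upper bounds $\alpha_t$, but this is routine since $\alpha(G_t) \le \alpha_t$ is assumed to hold pointwise (hence in expectation). All the technical difficulty of the result was absorbed into Lemma~\ref{thm:noexp} and Theorem~\ref{c:ndag}; Corollary~\ref{thm:symmetric} is merely their specialization to the undirected case.
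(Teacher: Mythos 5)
Your proposal is correct and matches the paper's own treatment exactly: the corollary is obtained by specializing Theorem~\ref{c:ndag} via the identity $\mas(G_t)=\alpha(G_t)$ for undirected $G_t$ noted in Section~\ref{s:prel}, followed by the standard optimization of $\eta\mapsto \frac{\ln K}{\eta}+\frac{\eta}{2}\sum_{t=1}^T\alpha_t$. Nothing further is needed.
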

Note that both Theorem \ref{c:ndag} and Corollary \ref{thm:symmetric} require
the algorithm to know upper bounds on $\mas(G_t)$ and $\alpha(G_t)$, which
may be computationally non-trivial --  we return and expand on this issue in
section \ref{sec:elpp}.

In light of~Corollary~\ref{thm:symmetric}, one may wonder whether
Lemma~\ref{thm:noexp} is powerful enough to allow a control of regret in
terms of the independence number even in the directed case. Unfortunately,
the next result shows that ---in the directed case--- $Q_t$ cannot be
controlled unless specific properties of $p_t$ are assumed. More precisely,
we show that even for simple directed graphs, there exist distributions $p_t$
on the vertices such that $Q_t$ is linear in the number of nodes while the
independence number\footnote { In this specific example, the maximum acyclic
subgraph has size $K$, which confirms the looseness of Theorem~\ref{c:ndag}.
} is $1$.
\begin{fact}
\label{l:bad} Let $G = (V,D)$ be a total order on $V = \{1,\dots,K\}$, i.e.,
such that for all $i \in V$, arc $(j,i) \in D$ for all $j = i+1,\dots,K$. Let
$p = (p_1, \ldots, p_K)$ be a distribution on $V$ such that $p_i=2^{-i}$, for
$i<K$ and $p_k=2^{-K+1}$. Then
\[
Q = \sum_{i = 1}^K \frac{p_i}{p_i+\sum_{j\,:\,j\reach{}i} p_j} =
\sum_{i = 1}^K \frac{p_i}{\sum_{j=i}^K p_j} = \frac{K+1}{2}~.
\]
\end{fact}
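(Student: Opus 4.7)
The plan is to prove Fact~\ref{l:bad} by direct computation. There is no real obstacle; the work is in seeing why the slightly unusual choice $p_K = 2^{-K+1}$ (instead of $2^{-K}$) was made, namely so that the tail sums of $p$ collapse to clean powers of two.

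First I would verify that $p$ is indeed a probability distribution: $\sum_{i=1}^{K-1} 2^{-i} + 2^{-K+1} = (1 - 2^{-K+1}) + 2^{-K+1} = 1$. Next I would unpack the graph structure. Since $G$ is the total order with arc $(j,i) \in D$ whenever $j > i$, the set $\{j : j \reach{} i\}$ equals $\{i+1,\dots,K\}$, so the denominator of the $i$th term in $Q$ is
\[
p_i + \sum_{j : j \reach{} i} p_j = \sum_{j=i}^{K} p_j~.
\]

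The heart of the calculation is then to evaluate this tail sum. For $i \le K-1$, using the geometric series,
\[
\sum_{j=i}^{K} p_j = \sum_{j=i}^{K-1} 2^{-j} + 2^{-K+1} = \bigl(2^{-i+1} - 2^{-K+1}\bigr) + 2^{-K+1} = 2^{-i+1}~,
\]
which is precisely the cancellation the choice of $p_K$ was designed to achieve. For $i=K$ the tail sum is just $p_K = 2^{-K+1}$.

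Consequently, the $i$th term of $Q$ is $p_i / 2^{-i+1} = 2^{-i}/2^{-i+1} = 1/2$ for every $i < K$, while the $K$th term is $p_K/p_K = 1$. Summing gives
\[
Q = (K-1)\cdot \tfrac{1}{2} + 1 = \tfrac{K+1}{2}~,
\]
as claimed. Since the independence number of the (undirected skeleton of the) total order is $1$ while $Q$ grows linearly in $K$, this demonstrates the announced gap between $Q_t$ and $\alpha(G_t)$ in the directed case.
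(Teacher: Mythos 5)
Your computation is correct and is essentially identical to the paper's proof: both reduce the denominator to the tail sum $\sum_{j=i}^K p_j = 2^{-i+1}$ for $i<K$ (which is exactly what the choice $p_K = 2^{-K+1}$ guarantees), so that each of the first $K-1$ terms equals $1/2$ and the last equals $1$. You merely spell out the intermediate steps that the paper compresses into a single display.
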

Next, we discuss lower bounds on the achievable regret for arbitrary
algorithms. The following theorem provides a lower bound on the regret in
terms of the independence number $\alpha(G)$, for a constant graph $G_t=G$
(which may be directed or undirected).
\begin{theorem}\label{th:lowerbound}
Suppose $G_t = G$ for all $t$ with $\alpha(G)>1$. There exist two constants
$C_1,C_2 > 0$ such that whenever $T \ge C_1\alpha(G)^3$, then for any
algorithm there exists an adversarial strategy for which the expected regret
of the algorithm is at least $C_2\sqrt{\alpha(G) T}$.
\end{theorem}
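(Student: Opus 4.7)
The plan is to construct a stochastic hidden-arm adversary on a maximum independent set of $G$, then apply the classical Auer--Cesa-Bianchi--Freund--Schapire KL-divergence template for $K$-armed bandits with $K = \alpha(G)$.

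First I would pick a maximum independent set $I \subseteq V$ of the underlying undirected graph, so $|I| = \alpha := \alpha(G)$, and draw $i^* \in I$ uniformly at random. For a parameter $\epsilon = c\sqrt{\alpha/T}$ with a small constant $c$ to be tuned, I set independently at each round $\ell_{i,t} \sim \mathrm{Ber}(1/2)$ for $i \in I\setminus\{i^*\}$, $\ell_{i^*,t}\sim\mathrm{Ber}(1/2-\epsilon)$, and $\ell_{i,t}=1$ for $i\notin I$. Writing $\E_j$ for the conditional expectation under $\{i^*=j\}$ and $N_2$ for the number of rounds playing outside $I$, a direct calculation gives $\E_j[L_{A,T}-L_{j,T}] = \tfrac12\E_j[N_2] + \epsilon(T-\E_j[T_j])$. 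Two features of the construction are essential: the deterministic loss $1$ outside $I$ forces every such play to contribute at least $1/2$ to the regret, and the plant lives on an independent set, so the feedback the algorithm can obtain about arms of $I$ is restricted in a way that mirrors a plain $\alpha$-armed bandit.

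Next I would carry out the standard KL--Pinsker--Cauchy chain. Let $\Pr_0$ be the reference measure under which every $i\in I$ has $\mathrm{Ber}(1/2)$ loss. The chain rule gives $\mathrm{KL}(\Pr_0\|\Pr_j) \le c_0\epsilon^2\E_0[\tau_j]$ where $\tau_j = \sum_t \Ind{j\in S_{I_t,t}}$. The independence of $I$ yields the crucial combinatorial bound $\sum_{j\in I}\tau_j \le N_1 + \alpha N_2$, since a round inside $I$ reveals the loss of exactly one $I$-arm (the one played), while a round outside $I$ reveals at most $\alpha$ of them. Combined with Pinsker and Cauchy--Schwarz over $j\in I$, this controls the Bayes-averaged quantities $\bar T := \tfrac1\alpha\sum_j\E_j[T_j]$ and $\bar N_2 := \tfrac1\alpha\sum_j\E_j[N_2]$ in terms of $M := \E_0[N_2]$.

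A case analysis on $M$ then finishes the argument. If $M \le T/\alpha$, then $\sum_j \mathrm{KL}(\Pr_0\|\Pr_j) = O(\epsilon^2 T)$ and the computation reduces essentially verbatim to the classical $\alpha$-armed bandit lower bound, yielding $\bar R \ge \epsilon(T-\bar T) = \Omega(\sqrt{\alpha T})$ for $\epsilon = c\sqrt{\alpha/T}$ with $c$ small. If $M > T/\alpha$, the hypothesis $T \ge C_1\alpha^3$ becomes essential: it guarantees $T/\alpha \ge \sqrt{\alpha T}$, so $M$ itself is already at the target order under $\Pr_0$, and for $C_1$ large enough relative to $c$ and $c_0$ a second use of Pinsker preserves $\bar N_2 \ge M/2$, giving $\bar R \ge \bar N_2/2 = \Omega(\sqrt{\alpha T})$. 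The main obstacle is this large-$M$ case: the Pinsker correction between $M$ and $\bar N_2$ is a priori of order $\epsilon T\sqrt{M}$ and can swamp $M$ itself in an intermediate regime $T/\alpha < M \ll \alpha T$; the cubic threshold $T \ge C_1\alpha^3$ is exactly what allows one to tune $\epsilon$ and the case boundary so that at least one of the two lower-bound components is active in every regime. The construction applies uniformly to the directed and the symmetric setting, since $\alpha(G)$ is defined via the underlying undirected graph and no member of $I$ has a $G$-arc (in either direction) pointing to another member of $I$.
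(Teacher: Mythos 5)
Your construction and analysis track the paper's proof of Theorem~\ref{th:lowerbound} almost exactly: the same planted-arm adversary supported on a maximum independent set with deterministic loss $1$ outside it, the same regret identity $\tfrac12\E_j[N_2]+\epsilon(T-\E_j[T_j])$ (the paper's \eqref{eq:regretl}), the same chain-rule/Pinsker/Cauchy--Schwarz chain, the same key use of independence (the loss of an arm $j$ in the independent set is observable only by playing $j$ itself or by playing outside the set, which is the paper's bound $\E_0\bigl[|\{t: i\in S_{I_t}\}|\bigr]\le\E_0[T_i+T_\Delta]$), and the same case split on the expected time spent outside the independent set under the reference measure, with $T\ge C_1\alpha^3$ converting that threshold into $\sqrt{\alpha T}$.

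The one genuine gap is in your large-$M$ branch. You propose to keep the planted adversary and recover $\bar N_2\ge M/2$ by ``a second use of Pinsker,'' claiming that taking $C_1$ large enough makes this work. It does not: the averaged Pinsker correction to $\E_j[N_2]$ is of order $\epsilon T\sqrt{M}\sim\sqrt{\alpha T M}$ (using $\epsilon=c\sqrt{\alpha/T}$ and your own bound $\tfrac1\alpha\sum_j\E_0[\tau_j]\le T/\alpha+M\le 2M$), and $\sqrt{\alpha TM}\le M/2$ forces $M\gtrsim\alpha T$, which is impossible since $M\le T$ and $\alpha>1$. The condition $T\ge C_1\alpha^3$ relates $T$ to $\alpha$ and cannot repair this, so your two components leave an uncovered window, roughly $T/(c^2c_0\alpha)\lesssim M\lesssim c^2c_0\alpha T$, for every choice of the constants. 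The repair is the one the paper uses and that you brush past in the phrase ``$M$ itself is already at the target order under $\Pr_0$'': in the large-$M$ case the adversary simply plays the reference distribution $\Pr_0$ itself (i.e., sets $\epsilon=0$), under which every arm of the independent set has expected cumulative loss $T/2$ and the algorithm's expected regret equals $\E_0[N_2]/2=M/2>\tfrac12\,T/\alpha\ge\tfrac12\sqrt{\alpha T}$ --- no transfer between measures is needed. With that substitution your argument is complete and coincides with the paper's.
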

%
%A proof is provided in Appendix~\ref{subsec:prooflowerbound}.
The intuition of the proof (provided in
Appendix~\ref{subsec:prooflowerbound}) is the following: if the graph $G$ has
$\alpha(G)$ non-adjacent vertices, then an adversary can make this problem as
hard as a standard bandit problem, played on $\alpha(G)$ actions. Since for
bandits on $K$ actions there is a $\Omega(\sqrt{KT})$ lower bound on the
expected regret, a variant of the proof technique leads to a
$\Omega(\sqrt{\alpha(G) T})$ lower bound in our case.

One may wonder whether a sharper lower bound exists which applies to the
general directed adversarial setting and involves the larger quantity
$\mas(G)$. Unfortunately, the above measure does not seem to be related to
the optimal regret: using Lemma~\ref{cl:1} in Appendix~\ref{sa:erdos} (see
proof of Theorem~\ref{thm:random_er} below) one can exhibit a sequence of
graphs each having a large acyclic subgraph, on which the regret of Exp3-SET
is still small.

\paragraph{Random feedback systems.}
We close this section with a study of Lemma~\ref{thm:noexp} in a setting
where the feedback system is stochastically generated via the Erd\H{o}s-Renyi
model. This is a standard model for random directed graphs $G = (V,D)$, where
we are given a density parameter $r \in [0,1]$ and, for any pair $i,j \in V$,
arc $(i,j)\in D$ with independent probability $r$ (self loops, i.e., arcs
$(i,i)$ are included by default here). We have the following result.
\begin{theorem}\label{thm:random_er}
For $t=1,2,\dots$, let $G_t$ be an independent draw from the Erd\H{o}s-Renyi
model with fixed parameter $r \in [0,1]$. Then the regret of Exp3-SET
satisfies
\[
    \max_{k \in V} \E\bigl[L_{A,T} - L_{k,T}\bigr]
\le
    \frac{\ln K}{\eta} + \frac{\eta\,T}{2r}\Bigl(1 - (1-r)^{K}\Bigr)~.
\]
In the above, expectations are computed with respect to both the algorithm's
randomization and the random generation of $G_t$ occurring at each round. In
particular, setting $\eta = \sqrt{\frac{2r\ln K}{T\bigl(1 -
(1-r)^{K}\bigr)}}$, gives
\[
    \max_{k \in V} \E\bigl[L_{A,T} - L_{k,T}\bigr]
\le
    \sqrt{\frac{2(\ln K)T\bigl(1 - (1-r)^{K}\bigr)}{r}}~.
\]
\end{theorem}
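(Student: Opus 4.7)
My plan is to start from Lemma~\ref{thm:noexp}, which bounds the regret by $(\ln K)/\eta + (\eta/2)\sum_{t=1}^T \E[Q_t]$ with $Q_t = \sum_{i\in V} p_{i,t}/q_{i,t}$ as defined in \eqref{e:Qt}. Since in the Erd\H{o}s-Renyi model the graph $G_t$ is drawn independently of the past history (and hence of $p_t$), it suffices to prove the identity $\E_{G_t}[Q_t\mid p_t] = (1-(1-r)^K)/r$ deterministically, as a function of $r$ and $K$ only. Substituting into Lemma~\ref{thm:noexp} would then directly yield the first displayed bound, and optimizing $\eta$ the second.

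To establish that identity, condition on $p_t$ and write, for each $i\in V$, $q_{i,t} = p_{i,t} + \sum_{j\neq i} p_{j,t}\,X_{j,i}$, where the $X_{j,i} = \Ind{j\reach{t}i}$ are mutually independent $\mathrm{Bernoulli}(r)$ variables. Using the integral representation $1/x = \int_0^\infty e^{-sx}\,ds$ (valid since $q_{i,t}\ge p_{i,t}>0$), Tonelli's theorem, and the elementary identity $\E\bigl[e^{-s p_{j,t} X_{j,i}}\bigr] = 1-r+re^{-sp_{j,t}}$, one obtains
\[
\E\!\left[\frac{p_{i,t}}{q_{i,t}}\,\bigg|\,p_t\right] \;=\; \int_0^\infty p_{i,t}\,e^{-sp_{i,t}}\prod_{j\neq i}\bigl(1-r+re^{-sp_{j,t}}\bigr)\,ds.
\]
Summing over $i$, the key observation is that the integrand is, up to a factor $-1/r$, the derivative of the \emph{single} auxiliary function $F(s) = \prod_{j\in V}\bigl(1-r+re^{-sp_{j,t}}\bigr)$; indeed
\[
\sum_{i\in V} p_{i,t}\,e^{-sp_{i,t}}\prod_{j\neq i}\bigl(1-r+re^{-sp_{j,t}}\bigr) \;=\; -\frac{1}{r}\,F'(s),
\]
by a direct product-rule computation.

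Integrating this identity from $0$ to $\infty$ telescopes to $\E[Q_t\mid p_t] = \tfrac{1}{r}\bigl(F(0)-F(\infty)\bigr)$; since $F(0)=1$ and $F(\infty)=(1-r)^K$, the conditional expectation collapses to $(1-(1-r)^K)/r$, independently of $p_t$. Taking outer expectation over the past gives the same value for $\E[Q_t]$ at every round $t$, and the two stated bounds follow immediately from Lemma~\ref{thm:noexp} together with the standard tuning of $\eta$. The main obstacle is spotting the telescoping derivative structure that makes the dependence on $p_t$ vanish; a more pedestrian attempt (Jensen's inequality on $1/q_{i,t}$, or a worst-case combinatorial bound over graph realizations) either points the wrong way or fails to reproduce the exact constant that makes the bound interpolate continuously between the expert setting ($r=1$, giving $1$) and the bandit setting ($r\to 0$, giving $K$).
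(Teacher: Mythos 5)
Your proposal is correct, and it reaches the paper's key identity $\E[Q_t\mid p_t]=\tfrac{1}{r}\bigl(1-(1-r)^K\bigr)$ by a genuinely different route. The paper (Lemmas~\ref{cl:1} and~\ref{cl:2}) first symmetrizes over a uniformly random permutation of the probability vector, exploiting the exchangeability of the Erd\H{o}s-Renyi distribution to reduce each term to $\E\bigl[1/(1+\mathrm{Bin}(K-1,r))\bigr]$, and then evaluates that expectation via the binomial identity $\sum_{i=0}^{K-1}\binom{K-1}{i}r^i(1-r)^{K-1-i}\tfrac{1}{i+1}=\tfrac{1}{rK}\bigl(1-(1-r)^K\bigr)$. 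You instead use the Laplace representation $1/q_{i,t}=\int_0^\infty e^{-sq_{i,t}}\,ds$ (valid since $q_{i,t}\ge p_{i,t}>0$, which Exp3-SET guarantees), factor the expectation using only the mutual independence of the arcs pointing into $i$, and observe that the sum over $i$ telescopes as $-\tfrac{1}{r}F'(s)$ for $F(s)=\prod_{j}\bigl(1-r+re^{-sp_{j,t}}\bigr)$, with $F(0)=1$ and $F(\infty)=(1-r)^K$. Your argument is more self-contained (no combinatorial symmetrization lemma, no binomial identity) and isolates the fact that only arc-independence per target node is needed, not full exchangeability; the paper's route has the side benefit of the clean per-node formula $\tfrac{1}{rK}\bigl(1-(1-r)^K\bigr)$ and of Lemma~\ref{cl:1}, which the authors reuse to comment on the looseness of the $\mas(\cdot)$ bound. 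Two small points worth stating explicitly if you write this up: the evaluation $F(\infty)=(1-r)^K$ uses $p_{j,t}>0$ for all $j$ (true for Exp3-SET since all weights stay positive), and the case $r=0$ should be read as the limit $\lim_{r\to 0^+}\tfrac{1-(1-r)^K}{r}=K$, exactly as in the paper. The final step of feeding the constant $\E[Q_t]$ into Lemma~\ref{thm:noexp} and tuning $\eta$ matches the paper.
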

Note that as $r$ ranges in $[0,1]$ we interpolate between the multi-arm
bandit\footnote{ Observe that $\lim_{r\rightarrow 0^+} \frac{1 -
(1-r)^{K}}{r} = K$. } ($r=0$) and the expert ($r=1$) regret bounds.

Finally, note that standard results from the theory of Erd\H{o}s-Renyi graphs
---at least in the symmetric case (see, e.g., \cite{f90})--- show that when the
density parameter $r$ is constant, the independence number of the resulting
graph has an inverse dependence on $r$. This fact, combined with the lower
bound above, gives a lower bound of the form $\sqrt{{T}/{r}}$, matching (up
to logarithmic factors) the upper bound of Theorem~\ref{thm:random_er}.

% \input{random_graph}
%\section{Algorithm 2}
%Why we need exploration in certain cases. Bad example with linear order: there exists a bad assignment of probability to edges that makes $Q$ big.

\section{The informed setting}\label{s:directed}
The lack of a lower bound matching the upper bound provided by
Theorem~\ref{c:ndag} is a good indication that something more sophisticated
has to be done in order to upper bound the key quantity $Q_t$ defined
in~(\ref{e:Qt}). This leads us to consider more refined ways of allocating
probabilities $p_{i,t}$ to nodes. We do so by taking advantage of the
informed setting, in which the learner can access $G_t$ before selecting the
action $I_t$. The algorithm Exp3-DOM, introduced in this section, exploits
the knowledge of $G_t$ in order to achieve an optimal (up to logarithmic
factors) regret bound.

Recall the problem uncovered by Fact~\ref{l:bad}: when the graph induced by
the feedback system is directed, $Q_t$ cannot be upper bounded, in a
non-vacuous way, independent of the choice of probabilities $p_{i,t}$. The
new algorithm Exp3-DOM controls these probabilities by adding an exploration
term to the distribution $p_t$. This exploration term is supported on a
dominating set of the current graph $G_t$, and computing such a dominating
set before selection of the action at time $t$ can only be done in the
informed setting. Intuitively, exploration on a dominating set allows to
control $Q_t$ by increasing the probability $q_{i,t}$ that each action $i$ is
observed. If the dominating set is also minimal, then the variance caused by
exploration can be bounded in terms of the independence number (and
additional logarithmic factors) just like the undirected case.

Yet another reason why we may need to know the feedback system beforehand is
when proving high probability results on the regret. In this case, operating
with a feedback term for the probabilities $p_{i,t}$ seems unavoidable. In
Section~\ref{sec:elpp} we present another algorithm, called ELP.P,
%a variant of the original ELP algorithm introduced by \cite{MS11},
which can deliver regret bounds that hold with high probability over its
internal randomization.

\begin{algorithm2e}[t]
\SetKwSty{textrm} \SetKwFor{For}{{\bf For}}{}{}
\SetKwIF{If}{ElseIf}{Else}{if}{}{else if}{else}{} \SetKwFor{While}{while}{}{}
{\bf Input:} Exploration parameters $\gammab \in (0,1]$ for $b \in \bigl\{0,1,\ldots, \lfloor \log_2 K\rfloor\bigr\}$\\
{\bf Initialization:} $w^{(b)}_{i,1} = 1$ for all $i \in V = \{1, \ldots,
K\}$
and $b \in \bigl\{0,1,\ldots, \lfloor \log_2 K\rfloor\bigr\}$\\
\For{$t=1,2,\dots$ :} { {
\begin{enumerate}
\item Feedback system $\{S_{i,t}\}_{i\in V}$ is generated {\em and
    disclosed}, (losses $\loss_t$ are generated and not disclosed);
\item Compute a dominating set $R_t\subseteq V$ for $G_t$ associated with
    $\{S_{i,t}\}_{i\in V}$~;
\item Let $b_t$ be such that $|R_t| \in \bigl[2^{b_t},2^{b_t+1}-1\bigr]$;
\item Set $W^{(b_t)}_t = \sum_{i \in V} w^{(b_t)}_{i,t};$
\item Set ${\dt p^{(b_t)}_{i,t} = \bigl(1-\gammabt\bigr)
    \frac{w^{(b_t)}_{i,t}}{W^{(b_t)}_{t}} + \frac{\gammabt}{|R_t|}
    \Ind{i\in R_t} }$;
% for $i\in R_t$ \\ and ${\dt p^{(b_t)}_{i,t} = \bigl(1-\gammabt\bigr) \frac{w^{(b_t)}_{i,t}}{W^{(b_t)}_{t}}  }$ for $i\not\in R_t$;
\item Play action $I_t$ drawn according to distribution $p^{(b_t)}_t =
    \bigl(p^{(b_t)}_{1,t},\dots,p^{(b_t)}_{K,t}\bigr)$~;
%\item Adversary selects sets $S_{i,t}$, $i \in [K]$, such that $\cup_{i\in R_t} S_{i,t}=A$, note that this is {\em without} revealing them to the algorithm except $R_t$;
\item Observe pairs $(i,\loss_{i,t})$ for all $i \in S_{I_t,t}$;
\item For any $i \in V$ set $w^{(b_t)}_{i,t+1} =
    w^{(b_t)}_{i,t}\,\exp\bigl(-\gammabt\,\hloss^{(b_t)}_{i,t}/2^{b_t}\bigr)$,
    where
\[
    \hloss^{(b_t)}_{i,t}
=
    \frac{\loss_{i,t}}{q^{(b_t)}_{i,t}}\,\Ind{i \in S_{I_t,t}}
\qquad\text{and}\qquad
    q^{(b_t)}_{i,t} = \sum_{j \,:\, j \reach{t} i} p^{(b_t)}_{j,t}~.
\]
%CG: notice that we are using $2^{b_t}$ rather than $r_t$ here
\end{enumerate}
} } \caption{The Exp3-DOM algorithm (for the informed setting)}
\label{a:exp3dom}
\end{algorithm2e}
%% ----------------------------------------------------------------------------

\subsection{Bounds in expectation: the Exp3-DOM algorithm}\label{ss:expdom}
The Exp3-DOM algorithm (see Algorithm~\ref{a:exp3dom}) for the informed
setting runs $\mathcal{O}(\log K)$ variants of Exp3 (with explicit
exploration) indexed by $b = 0,1,\dots,\lfloor \log_2 K\rfloor$. At time $t$
the algorithm is given the current feedback system $\{S_{i,t}\}_{i\in V}$,
and computes a dominating set $R_t$ of the directed graph $G_t$ induced by
$\{S_{i,t}\}_{i\in V}$. Based on the size $|R_t|$ of $R_t$, the algorithm
uses instance $b_t = \lfloor \log_2|R_t|\rfloor$ to draw action $I_t$. We use
a superscript $b$ to denote the quantities relevant to the variant of Exp3
indexed by $b$. Similarly to the analysis of Exp3-SET, the key quantities are
\[
    q^{(b)}_{i,t} = \sum_{j \,:\, i \in S_{j,t}} p^{(b)}_{j,t} = \sum_{j \,:\, j \reach{t} i} p^{(b)}_{j,t}
\qquad\text{and}\qquad
    Q^{(b)}_t = \sum_{i \in V} \frac{p^{(b)}_{i,t}}{q^{(b)}_{i,t}}~,\qquad b = 0, 1, \ldots, \lfloor \log_2 K\rfloor~.
\]
Let $\Tb = \bigl\{ t=1,\dots,T \,:\, |R_t| \in [2^b,2^{b+1}-1] \bigr\}$.
Clearly, the sets $\Tb$ are a partition of the time steps $\{1,\dots,T\}$, so
that $\sum_b |\Tb| = T$. Since the adversary adaptively chooses the
dominating sets $R_t$ (through the adaptive choice of the feedback system at
time $t$), the sets $\Tb$ are random variables.
This causes a problem in tuning the parameters $\gammab$. For this reason, we
do not prove a regret bound directly for Exp3-DOM, where each instance uses a
fixed $\gammab$, but for a slight variant of it (described in the proof of
Lemma~\ref{thm:alg} --- see Appendix~\ref{sa:lemma_exp3_dom}), where each
$\gammab$ is set through a doubling trick.
%SM Should we provide the actual algorithm? 12-SEP
%
\begin{lemma}\label{thm:alg}
In the directed case, the regret of Exp3-DOM satisfies
\begin{equation}
\label{eq:gammabfixed}
    \max_{k\in V} \E\bigl[L_{A,T} - L_{k,T}\bigr]
\le
    \sum_{b=0}^{\lfloor\log_2 K\rfloor} \left( \frac{2^b\ln K}{\gammab}
    +  \gammab\E\left[\sum_{t \in \Tb} \left(1 + \frac{Q^{(b)}_t}{2^{b+1}}\right)\right]\right)~.
\end{equation}
Moreover, if we use a doubling trick to choose $\gammab$ for each $b =
0,\dots,\lfloor \log_2 K\rfloor$, then
\begin{equation}
\label{eq:doublingtrick}
  \max_{k\in V} \E\bigl[L_{A,T} - L_{k,T}\bigr]
=
{\mathcal O}\left((\ln K)\,\E\left[\sqrt{\sum_{t=1}^T \left(4|R_t|
+ Q_t^{(b_t)}\right)}\right] + (\ln K) \ln(KT)\right)~.
\end{equation}
\end{lemma}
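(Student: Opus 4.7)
The plan is to first prove (\ref{eq:gammabfixed}) by analyzing each of the $\lfloor \log_2 K\rfloor+1$ Exp3-style instances of Exp3-DOM separately on its own (random) subset $\Tb$ of rounds, and then upgrade to (\ref{eq:doublingtrick}) by applying a data-dependent doubling trick to each $\gammab$ independently and aggregating across $b$ via Cauchy--Schwarz.

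Fix $b$. Instance $b$ updates its weights only on rounds $t \in \Tb$, where it samples from $p^{(b)}_t = (1-\gammab)\,\tilde p_{\cdot,t} + (\gammab/|R_t|)\,\Ind{\cdot\in R_t}$, with $\tilde p_{i,t} = w^{(b)}_{i,t}/W^{(b)}_t$ and effective learning rate $\eta_b := \gammab/2^b$. This is a standard Exp3-with-exploration setup, so I would run the classical potential-function analysis: track $\ln(W^{(b)}_{T+1}/W^{(b)}_1)$, upper-bound each factor $\exp(-\eta_b\,\hloss^{(b)}_{i,t})$ using $e^{-x} \le 1 - x + x^2/2$ (valid for $x \ge 0$), and sandwich from below by $-\eta_b\sum_{t\in\Tb}\hloss^{(b)}_{k,t} - \ln K$. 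Rearranging yields the pathwise inequality
\[
    \sum_{t\in\Tb}\Bigl(\sum_i \tilde p_{i,t}\,\hloss^{(b)}_{i,t} - \hloss^{(b)}_{k,t}\Bigr) \;\le\; \frac{2^b\ln K}{\gammab} + \frac{\gammab}{2\cdot 2^b}\sum_{t\in\Tb}\sum_i \tilde p_{i,t}\bigl(\hloss^{(b)}_{i,t}\bigr)^2.
\]
The crucial input is the dominating-set lower bound $q^{(b)}_{i,t} \ge \gammab/|R_t| \ge \gammab/2^{b+1}$, which holds for every $i \in V$ because each such $i$ either belongs to $R_t$ or is dominated by some $j \in R_t$. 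This both keeps $\eta_b\,\hloss^{(b)}_{i,t}$ bounded (validating the quadratic approximation of $e^{-x}$) and, after conditioning on the past, gives $\sum_i \tilde p_{i,t}\,\E\bigl[(\hloss^{(b)}_{i,t})^2 \mid \text{past}\bigr] \le Q^{(b)}_t/(1-\gammab)$. Converting $\tilde p_{\cdot,t}$ back to $p^{(b)}_{\cdot,t}$ costs at most $\gammab$ per round (the forced-exploration mass, using $\loss\in[0,1]$), and invoking $\E[\hloss^{(b)}_{i,t} \mid \text{past}] = \loss_{i,t}$ recovers the true-loss regret on $\Tb$; summing over $b$ delivers (\ref{eq:gammabfixed}).

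For (\ref{eq:doublingtrick}), the obstacle is that the $\gammab$ minimizing (\ref{eq:gammabfixed}) depends on $\E[\sum_{t\in\Tb}(1+Q^{(b)}_t/2^{b+1})]$, which is both a priori unknown and random (since $\Tb$ is adversarial). I would apply the standard data-dependent doubling trick \emph{independently within each instance} $b$: partition $\Tb$ into epochs during which the observable running sum $\sum_{s\in\Tb,\,s\le t}(1+Q^{(b)}_s/2^{b+1})$ stays in a window $[2^j,2^{j+1})$, and at each new epoch restart instance $b$ with $\gammab$ re-tuned to $\propto\sqrt{2^b\ln K/2^{j+1}}$. A geometric-sum argument then collapses the per-epoch contributions into a per-instance pathwise bound of order $\sqrt{(\ln K)\sum_{t\in\Tb}(2^{b+1}+Q^{(b)}_t)}$ plus an $\Ocal((\ln K)\log T)$ additive overhead from the $\Ocal(\log T)$ restarts. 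Cauchy--Schwarz across the $\lfloor\log_2 K\rfloor+1$ instances, together with $2^{b_t}\le |R_t|$, merges these per-instance square roots into the single $\sqrt{\sum_{t=1}^T(4|R_t|+Q^{(b_t)}_t)}$ of (\ref{eq:doublingtrick}); the expectation sits outside the square root because the doubling schedule is realized-path-dependent, so what is summed per epoch is the pathwise bound. The hardest step will be the doubling-trick bookkeeping: verifying that each restart preserves the potential argument despite random epoch lengths, and that the aggregate restart overhead across the $\Ocal(\log K)$ instances collapses to exactly the $(\ln K)\ln(KT)$ term rather than something worse.
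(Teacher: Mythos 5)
Your proposal matches the paper's proof essentially step for step: the per-instance Exp3 potential analysis with effective learning rate $\gammab/2^b$ and the inequality $e^{-x}\le 1-x+x^2/2$, the conditional-expectation step using unbiasedness and the second-moment bound $\E_t[(\hloss^{(b)}_{i,t})^2]\le 1/q^{(b)}_{i,t}$, the per-instance data-dependent doubling trick on the observable running sum $\sum_{s}(1+Q^{(b)}_s/2^{b+1})$ with $\gammab_r\propto\sqrt{2^b\ln K/2^r}$, and the final Cauchy--Schwarz aggregation across the $\Ocal(\log K)$ instances using $2^{b_t}\le|R_t|$. The only cosmetic differences are that the paper works directly with the mixed distribution $p^{(b)}_t$ (splitting the sum over $R_t$ and its complement so that the $1/(1-\gammab)$ factors cancel exactly, yielding the stated constant $Q^{(b)}_t/2^{b+1}$), whereas your route through the unmixed weights picks up a harmless extra $1/(1-\gammab)$, and that the lower bound $q^{(b)}_{i,t}\ge\gammab/|R_t|$ is not actually needed in this lemma (it becomes crucial only in Theorem~\ref{c:final}), since the quadratic upper bound on $e^{-x}$ holds for all $x\ge 0$.
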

Importantly, the next result (proof in Appendix~\ref{s:appendix:c:final})
shows how bound~(\ref{eq:doublingtrick}) of Lemma~\ref{thm:alg} can be
expressed in terms of the sequence $\alpha(G_t)$ of independence numbers of
graphs $G_t$ whenever the Greedy Set Cover algorithm~\cite{Chv79} (see
Section~\ref{s:prel}) is used to compute the dominating set $R_t$ of the
feedback system at time $t$.
\begin{theorem}\label{c:final}
If Step~2 of Exp3-DOM uses the Greedy Set Cover algorithm to compute the
dominating sets $R_t$, then the regret of Exp-DOM using the doubling trick
satisfies
\[
   \max_{k\in V} \E\bigl[L_{A,T} - L_{k,T}\bigr]
=
    \mathcal{O}\left(\ln(K)\sqrt{\ln(KT)\sum_{t=1}^T \alpha(G_t)} + \ln(K)\ln(KT) \right)~.
\]
%For each $t$, $\alpha(G_t)$ is the independence number of the graph $G_t$ induced by feedback system$\{S_{i,t}\}_{i\in V}$.
\end{theorem}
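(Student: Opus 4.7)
The plan is to start from equation~\eqref{eq:doublingtrick} of Lemma~\ref{thm:alg}, so the main work reduces to bounding the random quantity $\sum_{t=1}^T \bigl(4|R_t| + Q_t^{(b_t)}\bigr)$ in terms of $\sum_{t=1}^T \alpha(G_t)$, up to polylogarithmic factors in $K$ and $T$. The two summands must be handled separately, and then combined by Jensen's inequality.

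First I would bound $|R_t|$. Since Step~2 of Exp3-DOM runs Greedy Set Cover on $\{S_{i,t}\}_{i\in V}$, the classical guarantee recalled in Section~\ref{s:prel} yields $|R_t| \le \gamma(G_t)\,(1+\ln K)$. To turn this into a bound involving $\alpha(G_t)$, I would invoke the combinatorial relation $\gamma(G_t) = O\bigl(\alpha(G_t)\,\ln K\bigr)$ for the directed graphs with self-loops we work with, a fact that is essentially the content of the construction underlying Lemma~\ref{l:weightedamlemma} (or can be obtained from the same weighted averaging argument). This gives $|R_t| = O\bigl(\alpha(G_t)\,\ln^2 K\bigr)$.

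Next I would bound $Q_t^{(b_t)}$. The exploration term puts mass at least $\gammabt/|R_t|$ on every $i \in R_t$; because $R_t$ dominates $G_t$ and each action has a self-loop, for every $i \in V$ there is some $j \in R_t$ with $j \reach{t} i$, hence $q^{(b_t)}_{i,t} \ge \gammabt/|R_t|$. Substituting this uniform lower bound into Lemma~\ref{l:weightedamlemma} yields $Q_t^{(b_t)} = O\bigl(\alpha(G_t)\,\ln(K\,|R_t|/\gammabt)\bigr)$. Since the doubling-trick choice of $\gammabt$ stays polynomially bounded below (at worst like $1/\sqrt{T}$, down to $1/T$), this logarithmic factor is $O\bigl(\ln(KT)\bigr)$.

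Putting these together, $\sum_{t=1}^T \bigl(4|R_t| + Q_t^{(b_t)}\bigr) = O\bigl(\ln(KT)\sum_{t=1}^T \alpha(G_t)\bigr)$. Plugging this into \eqref{eq:doublingtrick} and pulling the square root outside the expectation by $\E[\sqrt{X}] \le \sqrt{\E[X]}$ produces exactly the claimed bound $O\bigl(\ln(K)\sqrt{\ln(KT)\sum_t \alpha(G_t)} + \ln(K)\ln(KT)\bigr)$. The main obstacle is the bound on $Q_t^{(b_t)}$: the combinatorial lemma must simultaneously cope with directed graphs and with distributions that are only lower-bounded on the dominating set $R_t$ (rather than on all of $V$), while the resulting estimate must scale with $\alpha(G_t)$ rather than with $|V|$. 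This is precisely where the independent-set structure, rather than the weaker $\mas(G_t)$, is needed, and where the construction underlying Lemma~\ref{l:weightedamlemma} does the real work.
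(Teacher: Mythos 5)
Your proposal follows essentially the same route as the paper's proof: start from \eqref{eq:doublingtrick} of Lemma~\ref{thm:alg}, bound $|R_t|$ and $Q_t^{(b_t)}$ separately in terms of $\alpha(G_t)$ up to $\ln(KT)$ factors, and substitute back. Your treatment of $Q_t^{(b_t)}$ is exactly the paper's: the exploration term forces $p^{(b_t)}_{i,t}\ge \gammabt/|R_t|$ on the dominating set, and Lemma~\ref{l:weightedamlemma} --- whose hypotheses are precisely a directed graph, a dominating set $R$, and a distribution lower-bounded only on $R$ --- combined with $\gammabt=\Omega\bigl(\sqrt{(\ln K)/(KT)}\bigr)$ gives $Q_t^{(b_t)}=O\bigl(\alpha(G_t)\ln(KT)\bigr)$; you correctly identified this as the crux. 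The one place where you are lossier than the paper is the bound on $|R_t|$: chaining the generic greedy guarantee $|R_t|\le\gamma(G_t)(1+\ln K)$ with $\gamma(G_t)=O\bigl(\alpha(G_t)\ln K\bigr)$ yields $|R_t|=O\bigl(\alpha(G_t)\ln^2 K\bigr)$, and since $\ln^2 K$ is not $O(\ln(KT))$ in general, this leaks an extra $\sqrt{\ln K}$ factor into the leading term, so the bound you obtain is slightly weaker than the one stated. The paper avoids this through Lemma~\ref{l:greedycover}, which analyzes the greedy algorithm directly (by iterating Tur\'an's theorem on the residual graphs it produces --- not via the construction of Lemma~\ref{l:weightedamlemma}, to which you attribute the fact $\gamma=O(\alpha\ln K)$) and shows that the greedy output itself satisfies $|R_t|\le\lceil 2\alpha(G_t)\ln K\rceil+1$. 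With that sharper estimate the stated bound follows exactly; the rest of your argument (restarts absorbed into the $(\ln K)\ln(KT)$ term, the final Jensen step) is consistent with the paper.
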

Combining the upper bound of Theorem~\ref{c:final} with the lower bound of
Theorem~\ref{th:lowerbound}, we see that the attainable expected regret in
the informed setting is characterized by the independence numbers of the
graphs. Moreover, a quick comparison between Corollary~\ref{thm:symmetric}
and Theorem~\ref{c:final} reveals that a symmetric feedback system overcomes
the advantage of working in an informed setting: The bound we obtained for
the uninformed symmetric setting (Corollary~\ref{thm:symmetric}) is sharper
by logarithmic factors than the one we derived for the informed
 --- but more general, i.e., directed --- setting (Theorem~\ref{c:final}).

\subsection{High probability bounds: the ELP.P algorithm}\label{sec:elpp}
We now turn to present an algorithm working in the informed setting for which
we can also prove high-probability regret bounds.\footnote { We have been
unable to prove high-probability bounds for Exp3-DOM or variants of it. } We
call this algorithm ELP.P (which stands for ``Exponentially-weighted
algorithm with Linear Programming'', with high Probability). Like Exp3-DOM,
% Like all multi-armed bandits algorithms, it is based on a tradeoff between exploration and exploitation.
% However, unlike standard algorithms,
the exploration component is not uniform over the actions, but is chosen
carefully to reflect the graph structure at each round. In fact, the optimal
choice of the exploration for ELP.P requires us to solve a simple linear
program, hence the name of the algorithm.\footnote{ We note that this
algorithm improves over the basic ELP algorithm initially presented in
\cite{MS11}, in that its regret is bounded in high probability and not just
in expectation, and applies in the directed case as well as the symmetric
case.} The pseudo-code appears as \algref{alg:bandits}. Note that unlike the
previous algorithms, this algorithm utilizes the ``rewards'' formulation of
the problem, i.e., instead of using the losses $\ell_{i,t}$ directly, it uses
the rewards $g_{i,t}=1-\ell_{i,t}$, and boosts the weight of actions for
which $g_{i,t}$ is estimated to be large, as opposed to decreasing the weight
of actions for which $\ell_{i,t}$ is estimated to be large. This is done
merely for technical convenience, and does not affect the complexity of the
algorithm nor the regret guarantee.

%
%% ----------------------------------------------------------------------------------
\begin{algorithm2e}[t]
\SetKwSty{textrm} \SetKwFor{For}{{\bf For}}{}{}
\SetKwIF{If}{ElseIf}{Else}{if}{}{else if}{else}{} \SetKwFor{While}{while}{}{}
\textbf{Input:} Confidence parameter $\delta \in (0,1)$, learning rate $\eta > 0$;\\
{\bf Initialization:} $w_{i,1} = 1$ for all $i \in V = \{1,\ldots, K\}$;\\
\For{$t=1,2,\dots$ :} { {
\begin{enumerate}
\item Feedback system $\{S_{i,t}\}_{i\in V}$ is generated {\em and
    disclosed}, (losses $\loss_t$ are generated and not disclosed);
\item Let $\Delta_K$ be the $K$-dimensional probability simplex, and $s_t =
    (s_{1,t}, \ldots s_{K,t})$
be a solution to\\
the linear program
        \[
        \max_{ (s_1,\ldots,s_K) \in \Delta_K }~~\min_{i\in V}\sum_{j\,:\, j \reach{t} i } s_ {j}
        \]
\item Set $p_{i,t}:=(1-\gamma_t)\frac{w_{i,t}}{W_t}+\gamma_t s_{i,t}$ where
    $W_t = \sum_{i \in V} w_{i,t}$~,
\[
\gamma_t = \frac{(1+\beta)\,\eta}{\min_{i\in V}\sum_{j\,:\, j \reach{t} i} s_{j,t}} \qquad\text{and}\qquad \beta=2\eta\sqrt{\frac{\ln(5K/\delta)}{\ln K}}~;
\]
\item Play action $I_t$ drawn according to distribution $p_t =
    \bigl(p_{1,t},\dots,p_{K,t}\bigr)$~;
\item Observe pairs $(i,\loss_{i,t})$ for all $i \in S_{I_t,t}$;
\item For any $i \in V$ set $\gain_{i,t}=1-\loss_{i,t}$ and $w_{i,t+1} =
    w_{i,t}\,\exp\bigl(\eta\,\hgain_{i,t}\bigr)$, where
\[
    \hgain_{i,t}
=
    \frac{\gain_{i,t}\Ind{i \in S_{I_t,t}} + \beta}{q_{i,t}}
     \qquad\text{and}\qquad q_{i,t} = \sum_{j \,:\, j \reach{t} i} p_{j,t}~.
\]
\end{enumerate}
} } \caption{The ELP.P algorithm (for the informed setting)}
\label{alg:bandits}
\end{algorithm2e}
%% ----------------------------------------------------------------------------
%
%
% To analyze the algorithm in a unified manner, we first provide a general result,
% which gives a regret bound in terms of combinatorial parameters $\Box(G_t)$ of the
% graphs $G_t$, which need to satisfy certain conditions. Later on, we show that for
% directed graphs, one can take $\Box(G_t)$ to be either $\alpha(G_t)$ or
% $\bar{\chi}(G_t)$, depending on the graph type considered.
%
\begin{theorem}\label{thm:mainhighprob}
Let algorithm ELP.P
%algorithm \ref{alg:bandits} be
run with learning rate $\eta\leq 1/(3K)$ sufficiently small such that $\beta
\leq 1/4$.
% for all $t$.
%
% Furthermore, suppose we can show that for any $p_1(t),\ldots,p_k(t)$ and $s_1(t),\ldots,s_k(t)$
% throughout the algorithm's run,
% \begin{equation}\label{eq:box}
% \max\left\{\sum_{j=1}^{k}\frac{p_j(t)}{\sum_{l\in N_j(t)}p_l(t)}~,~
% \max_{j\in[k]} \frac{1}{\sum_{l\in N_j}s_l(t)}\right\} \leq \Box(G_t)
% \end{equation}
% for some parameters $\Box(G_t)\in [1,k]$ of the graph $G_t$.
%
Then, with probability at least $1-\delta$ we have
\begin{align*}
L_{A,T} - \max_{k\in V} L_{k,T}
~\leq~
%&\sqrt{5\ln\left(\frac{5}{\delta}\right)\sum_{t=1}^{T}\mas(G_t)}
%+\frac{2\ln K}{\eta}+7\eta \sqrt{\frac{\ln(5K/\delta)}{\ln K}}\sum_{t=1}^{T}\mas(G_t)\\
%&~+\tilde{\Ocal}(1+\sqrt{T\eta}+T\eta^2)\left(\max_{t=1...T} \mas(G_t)\right)^2,
&\sqrt{5\ln\left(\frac{5}{\delta}\right)\sum_{t=1}^{T}\mas(G_t)}
+\frac{2\ln (5K/\delta)}{\eta} + 12\eta\,\sqrt{\frac{\ln(5K/\delta)}{\ln K}}\,\sum_{t=1}^{T}\mas(G_t)\\
&~+\widetilde{\Ocal}\left(1+\sqrt{T\eta}+T\eta^2\right)\left(\max_{t = 1...T}\,\mas^2(G_t)\right),
\end{align*}
where the $\widetilde{\Ocal}$ notation hides only numerical constants and
factors logarithmic in $K$ and $1/\delta$.
In particular, if for constants $m_1, \ldots, m_T$ we have $\mas(G_t) \leq
m_t$, $t = 1, \ldots, T$, and we pick $\eta$ such that
% \[
% \eta = \sqrt{\frac{2\ln^{3/2} K}
% {7\ln^{1/2}(5K/\delta)\,\sum_{t=1}^{T}m_t}}~,
% \]
\[
\eta^2 = \frac{1}{6}\,\frac{\sqrt{\ln(5K/\delta)\,(\ln K)}}{\sum_{t=1}^{T} m_t}
\]
then we get the bound
\[
L_{A,T} - \max_{k\in V} L_{k,T}
~\leq~
% 8\,\sqrt{\ln\left(\frac{5K}{\delta}\right)\,\sum_{t=1}^{T} m_t}
% +\tilde{\Ocal}(T^{1/4})\left(\max_{t = 1 \ldots T} m_t\right)^2~.
10\,\frac{\ln^{1/4}(5K/\delta)}{\ln^{1/4}K}\,\sqrt{\ln \left(\frac{5K}{\delta}\right)\sum_{t=1}^{T} m_t}
+\widetilde{\Ocal}(T^{1/4})\left(\max_{t = 1...T} \mas^2(G_t) \right)~.
\]
\end{theorem}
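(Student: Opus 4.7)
The plan is to follow the Exp3.P-style high-probability analysis of \cite{AuerCeFrSc02}, adapted to the graph-structured feedback setting via the combinatorial bound $Q_t\le\mas(G_t)$ (Lemma~\ref{l:weightedamlemma}, alluded to in the introduction).

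First I would run the standard potential argument on $\Phi_t=\ln W_t$. The choice $\gamma_t=(1+\beta)\eta/\min_i \sum_{j\,:\,j\reach{t}i}s_{j,t}$ is designed precisely so that $q_{i,t}\ge(1+\beta)\eta$, whence $\eta\hgain_{i,t}\le 1$ uniformly in $i,t$. Applying $e^x\le 1+x+x^2$ in the exponential update and telescoping yields, for every fixed $k\in V$,
\begin{equation*}
\eta\sum_{t=1}^T\hgain_{k,t}\;\le\;\ln K+\sum_{t=1}^T\frac{\eta}{1-\gamma_t}\sum_{i\in V}p_{i,t}\hgain_{i,t}+\sum_{t=1}^T\frac{\eta^2}{1-\gamma_t}\sum_{i\in V}p_{i,t}\hgain_{i,t}^2.
\end{equation*}
Combining $Q_t\le\mas(G_t)$ with $\hgain_{i,t}\le(1+\beta)/q_{i,t}$ lets me replace the right-hand sums by quantities that, in conditional expectation, are comparable to $\gain_{A,t}+\beta\,\mas(G_t)$ and $(1+\beta)^2\mas(G_t)$ respectively, where $\gain_{A,t}=\sum_i p_{i,t}\gain_{i,t}$.

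Next I would establish three martingale concentration inequalities, each with failure probability $\delta/5$ (the source of the factor $5$ in the logarithms of the theorem), to be combined by a union bound. (i)~The classical Exp3.P MGF/Bernstein trick: the bias $+\beta$ inside $\hgain_{k,t}$ is calibrated so that $\sum_t\gain_{k,t}-\sum_t\hgain_{k,t}\le\ln(K/\delta)/\beta$ holds simultaneously for every $k\in V$. (ii)~A Freedman/Bernstein concentration of $\sum_t\sum_i p_{i,t}\hgain_{i,t}$ and $\sum_t\sum_i p_{i,t}\hgain_{i,t}^2$ around their conditional means, using $\sum_i p_{i,t}\hgain_{i,t}\le(1+\beta)\mas(G_t)$ as the per-round bound on the martingale differences, together with the pointwise bound $\hgain_{i,t}\le 1/\eta$. (iii)~An Azuma/Bernstein bound for $L_{A,T}-\sum_t\gain_{A,t}$ whose per-round conditional variance scales like $\mas(G_t)$ (through the same $\sum_i p_{i,t}/q_{i,t}$ lemma applied to the realized loss), producing the leading $\sqrt{5\ln(5/\delta)\sum_t\mas(G_t)}$ term.

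Combining these with the potential inequality and substituting $\beta=2\eta\sqrt{\ln(5K/\delta)/\ln K}$ yields the four advertised summands: $\sqrt{5\ln(5/\delta)\sum_t\mas(G_t)}$ from (iii); $2\ln(5K/\delta)/\eta$ from the $\ln K$ potential plus $\ln(K/\delta)/\beta$ of (i); $12\eta\sqrt{\ln(5K/\delta)/\ln K}\sum_t\mas(G_t)$ from summing $\beta\sum_t\mas(G_t)$ with the $\eta(1+\beta)$-weighted second-order contribution; and the $\widetilde{\Ocal}\bigl((1+\sqrt{T\eta}+T\eta^2)\max_t\mas^2(G_t)\bigr)$ slack from (ii). The final choice $\eta^2=\tfrac{1}{6}\sqrt{\ln(5K/\delta)\ln K}/\sum_t m_t$ balances the $1/\eta$ and $\eta\sum_t m_t$ contributions and produces the second, tuned, form of the bound. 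The principal obstacle will be step~(ii): unlike standard Exp3.P, where $\hgain_{k,t}$ is nonzero only for the chosen arm $I_t$, here $\hgain_{k,t}$ can be positive for every $k$ in the feedback neighborhood of $I_t$, so the per-action estimators are heavily correlated; showing that their joint martingale increment nevertheless obeys the per-round bound of order $\mas(G_t)$ (and conditional variance of order $\mas^2(G_t)$) is precisely what Lemma~\ref{l:weightedamlemma} delivers, and without it neither $\alpha(G_t)$ nor $\mas(G_t)$ would appear in the final bound.
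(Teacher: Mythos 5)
Your outline matches the paper's proof of Theorem~\ref{thm:mainhighprob} in all essentials: the same potential/telescoping inequality driven by $q_{i,t}\ge(1+\beta)\eta$ and $e^x\le 1+x+x^2$, the same five high-probability events combined by a union bound at level $\delta/5$ (Freedman-type concentration of $\sum_t a_t\sum_i p_{i,t}(\hgain_{i,t}-\gain_{i,t})$ with $a_t=1$ and $a_t=\gamma_t$, Freedman for $\sum_t\sum_i p_{i,t}\hgain_{i,t}^2$, Azuma for the realized reward, and the Chernoff/MGF calibration of the $+\beta$ bias giving the $\ln(K/\delta)/\beta$ term), followed by the same tuning of $\eta$. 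Two corrections are in order, however. First, the combinatorial workhorse here is \emph{not} Lemma~\ref{l:weightedamlemma}: that lemma bounds $Q_t$ by $2\alpha\ln(\cdot)+2|R_t|$ and requires the exploration to place mass at least $\beta$ on every node of a dominating set, which is Exp3-DOM's mechanism and would not yield the stated $\mas(G_t)$ bound. ELP.P instead relies on Lemma~\ref{lemma:nDGA} ($\sum_i p_i/q_i\le\mas(G)$ for an \emph{arbitrary} distribution) together with Lemma~\ref{lem:maxratio} (the LP value is at least $1/\mas(G)$, because the vertex set of a maximal acyclic subgraph is a dominating set); this is precisely why the theorem is stated in terms of $\mas(G_t)$ rather than $\alpha(G_t)$. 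Second, the leading term $\sqrt{5\ln(5/\delta)\sum_t\mas(G_t)}$ does not arise from a variance-reduced Azuma bound on $L_{A,T}-\sum_t\sum_i p_{i,t}\gain_{i,t}$; that step is a plain Azuma bound contributing $\sqrt{T\ln(5/\delta)/2}$, which is absorbed using $T\le\sum_t\mas(G_t)$. The $\sqrt{\sum_t\mas(G_t)}$ rate comes from Freedman applied to $\sum_i p_{i,t}(\hgain_{i,t}-\gain_{i,t})$, whose conditional second moment is bounded by $\mas(G_t)$ --- not $\mas^2(G_t)$ --- via item 4 of Lemma~\ref{lem:inequalities}; the $\mas^2(G_t)/\gamma_t$ and $\mas^3(G_t)/\gamma_t$ moment bounds enter only the second-order $\widetilde{\Ocal}$ terms through Lemma~\ref{lem:pg2}.
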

%
%The proof is presented in \subsecref{subsec:thmmainhighprob}.
%
This theorem essentially tells us that the regret of the ELP.P algorithm, up
to second-order factors, is quantified by $\sqrt{\sum_{t=1}^{T}\mas(G_t)}$.
Recall that, in the special case when $G_t$ is symmetric, we have $\mas(G_t)
=\alpha(G_t)$.

One computational issue to bear in mind is that this theorem (as well as
Theorem \ref{c:ndag} and Corollary \ref{thm:symmetric}) holds under an
optimal choice of $\eta$.
%as specified in \thmref{thm:mainhighprob}.
In turn, this value depends on upper bounds on $\sum_{t=1}^{T}\mas(G_t)$ (or
on $\sum_{t=1}^{T}\alpha(G_t)$, in the symmetric case). Unfortunately, in the
worst case, computing the maximal acyclic subgraph or the independence number
of a given graph is NP-hard, so implementing such algorithms is not
\emph{always} computationally tractable.\footnote { \cite{MS11} proposed a
generic mechanism to circumvent this, but the justification has a flaw which
is not clear how to fix. } However, it is easy to see that the algorithm is
robust to approximate computation of this value ---misspecifying the average
independence number $\frac{1}{T}\sum_{t=1}^{T}\alpha(G_t)$ by a factor of $v$
entails an additional $\sqrt{v}$ factor in the bound. Thus, one might use
standard heuristics resulting in a reasonable approximation of the
independence number. Although computing the independence number is also
NP-hard to approximate, it is unlikely for intricate graphs with
hard-to-approximate independence numbers to appear in relevant applications.
Moreover, by setting the approximation to be either $K$ or $1$, we trivially
obtain an approximation factor of at most either $K$ or
$\frac{1}{T}\sum_{t=1}^{T}\alpha(G_t)$. The former leads to a
$\widetilde{\Ocal}(\sqrt{KT})$ regret bound similar to the standard bandits
setting, while the latter leads to a
$\widetilde{\Ocal}\left(\frac{1}{T}\sum_{t=1}^{T}\alpha(G_T)\sqrt{T}\right)$
regret bound, which is better than the regret for the bandits setting if the
average independence number is less than $\sqrt{K}$. In contrast, this
%CG (Aug 29th, 2014): slight rephrasing
computational issue does not show up in Exp3-DOM, whose tuning relies only on
efficiently-computable quantities.

\section{Conclusions and Open Questions}\label{s:conc}
In this paper we investigated online prediction problems in partial
information regimes that interpolate between the classical bandit and expert
settings. We provided algorithms, as well as upper and lower bounds on the
attainable regret, with a non-trivial dependence on the information feedback
structure. In particular, we have shown a number of results characterizing
prediction performance in terms of: the structure of the feedback system, the
amount of information available before prediction, and the nature
(adversarial or fully random) of the process generating the feedback system.
%Our results are substantial improvements over the conference paper~\cite{MS11} that initiated this line
%of research.\footnote
%{
%Our improvements are diverse, and range from considering both informed and uninformed settings
%to delivering more refined graph-theoretic characterizations, from providing more efficient
%algorithmic solutions to relying on simpler (and often more general) analytical tools, to delivering
%refined analyses for high-probability regret bounds.
%}

There are many open questions that warrant further study, some of which are
briefly mentioned below:
\begin{enumerate}
\item It would be interesting to study adaptations of our results to the
    case when the feedback system $\{S_{i,t}\}_{i\in V}$ may depend on the
    loss $\ell_{I_t,t}$ of player's action $I_t$. Note that this would
    prevent a direct construction of an unbiased estimator for unobserved
    losses, which many worst-case bandit algorithms (including ours ---see
    the appendix) hinge upon.
% This setting seems fairly natural in practice, ...
%
\item The upper bound contained in Theorem~\ref{c:ndag}, expressed in terms
    of $\mas(\cdot)$, is almost certainly suboptimal, even in the
    uninformed setting, and it would be nice to see if more adequate graph
    complexity measures can be used instead.
\item Our lower bound in Theorem ~\ref{th:lowerbound}
% heavily relies on the corresponding lower bound in~\cite{MS11} which, in turn,
refers to a constant graph sequence. We would like to provide a more
complete characterization applying to sequences of adversarially-generated
graphs $G_1, G_2, \ldots, G_T$ in terms of sequences of their corresponding
independence numbers $\alpha(G_1), \alpha(G_2), \ldots, \alpha(G_T)$ (or
variants thereof), in both the uninformed and the informed settings.
Moreover, the adversary strategy achieving our lower bound is
computationally hard to implement in the worst case (the adversary needs to
identify the largest independent set in a given graph). What is the
achievable regret if the adversary is assumed to be computationally
bounded?
%SM Added the following.
\item The information feedback models we used are natural and simple. They
    assume that the action at a give time period only affects rewards and
    observations for that period. In some settings, the reward observation
    may be delayed. In such settings, the action taken at a given stage may
    affect what is observed in subsequent stages. We leave the issue of
    modelling and analyzing such setting to future work.
\item Finally, we would like to see what is the achievable performance in
    the special case of stochastic rewards, which are assumed to be drawn
    i.i.d.\ from some unknown distributions. This was recently considered
    in \cite{CaKvLeBh12}, with results depending on the graph clique
    structure. However, the tightness of these results remains to be
    ascertained.
%
%\item All our upper bounds rely on parameters to be tuned as a function of a whole
%sequence of hard-to-compute feedback system quantities (e.g., the sequence of independence numbers).
%It would be interesting to see if an adaptive learning rate strategy \`a la~\cite{acg02}, based
%on easier-to-compute observable quantities (like $Q_t$ in EXP3-DOM), could give similar results without
%such a prior knowledge. See also Remark \ref{r:comp}.
%
%% \item The model we have studied assumed that the feedback structure is known. In many practical cases,
%% the feedback structure may be known just partially or approximately. Is it possible to devise algorithms
%% for such cases?
%% \item While our results use certain combinatorial graph parameters, there might be other, information-theoretic
%% parameters which more fully capture the essence of the problem in its full generality.
%%
%% \item Is it possible to obtain optimal regret bounds, even when the relevant combinatorial graph parameters
%% are hard to compute?
\end{enumerate}

\subsubsection*{Acknowledgments}

NA was supported in part by a USA-Israeli BSF grant, by an ISF grant, by the
Israeli I-Core program and by the Oswald Veblen Fund. NCB acknowledges
partial support by MIUR (project ARS TechnoMedia, PRIN 2010-2011, grant no.\
2010N5K7EB\_003). SM was supported in part by the European Community's
Seventh Framework Programme (FP7/2007-2013) under grant agreement 306638
(SUPREL). YM was supported in part by a grant from the Israel Science
Foundation, a grant from the United States-Israel Binational Science
Foundation (BSF), a grant by Israel Ministry of Science and Technology and
the Israeli Centers of Research Excellence (I-CORE) program (Center No.\
4/11). OS was supported in part by a grant from the Israel Science Foundation
(No. 425/13) and a Marie-Curie Career Integration Grant.

%\newpage

\bibliographystyle{plain}
\bibliography{mab,nic}

%\newpage

\appendix

\section{Technical lemmas and proofs from Section~\ref{s:symm}}\label{s:appendixexp3set}
This section contains the proofs of all technical results occurring in
Section \ref{s:symm}, along with ancillary graph-theoretic lemmas. Throughout
this appendix, $\E_t[\cdot]$ is a shorthand for $ \E\bigl[\cdot\mid
I_1,\dots,I_{t-1}\bigr]$. Also, for ease of exposition, we implicitly first
condition on the history, i.e., $I_1, I_2, \ldots, I_{t-1}$, and later take
an expectation with respect to that history. This implies that, given that
conditioning, we can treat random variables such as $p_{i,t}$ as constants,
and we can later take an expectation over history so as to remove the
conditioning.

\subsection{Proof of Fact \ref{l:bad}}
%
%\begin{rfact}
%%\begin{{restatable}{theorem}{factbad} \label{l:bad}
%Let $G = (V,D)$ be a total order on $V = \{1,\dots,K\}$, i.e., such that for
%all $i \in V$, arc $(j,i) \in D$ for all $j = i+1,\dots,K$. Let $p = (p_1,
%\ldots, p_K)$ be a distribution on $V$ such that $p_i=2^{-i}$, for $i<K$ and
%$p_k=2^{-K+1}$. Then
%\[
%Q = \sum_{i = 1}^K \frac{p_i}{p_i+\sum_{j\,:\,j\reach{}i} p_j} =
%\sum_{i = 1}^K \frac{p_i}{\sum_{j=i}^K p_j} = \frac{K+1}{2}~.
%\]
%%\end{restatable}
%\end{rfact}
%
Using standard properties of geometric sums, one can immediately see that
\[
\sum_{i=1}^{K}\frac{p_i}{\sum_{j=i}^K p_j} = \sum_{i=1}^{K-1}
\frac{2^{-i}}{2^{-i+1}} + \frac{2^{-K+1}}{2^{-K+1}}= \frac{K-1}{2} +1= \frac{K+1}{2}~,
\]
hence the claimed result.

\subsection{Proof of Lemma \ref{thm:noexp}}
\label{s:appendix:thm:noexp}
Following the proof of Exp3~\cite{AuerCeFrSc02}, we have
\begin{align*}
\frac{W_{t+1}}{W_t}
&= \sum_{i \in V} \frac{w_{i,t+1}}{W_t}\\
&= \sum_{i \in V} \frac{w_{i,t}\,\exp(-\eta\,\hloss_{i,t})}{W_t}\\
&= \sum_{i \in V} p_{i,t}\,\exp(-\eta\,\hloss_{i,t})\\
&\leq \sum_{i \in V} p_{i,t}\,\left(1 - \eta\hloss_{i,t} + \frac{1}{2}\,\eta^2(\hloss_{i,t})^2\right) \quad \text{using $e^{-x} \leq 1-x+x^2/2$ for all $x \ge 0$}\\
&\leq 1 - \eta\,\sum_{i \in V} p_{i,t}\hloss_{i,t} + \frac{\eta^2}{2}\,\sum_{i \in V} p_{i,t}(\hloss_{i,t})^2~.
\end{align*}
Taking logs, using $\ln(1-x) \le -x$ for all $x \ge 0$, and summing over $t =
1, \ldots, T$ yields
\[
\ln \frac{W_{T+1}}{W_1} \leq -\eta\,\sum_{t=1}^T \sum_{i \in V} p_{i,t}\hloss_{i,t} +
\frac{\eta^2}{2}\,\sum_{t=1}^T \sum_{i \in V} p_{i,t}(\hloss_{i,t})^2~.
\]
Moreover, for any fixed comparison action $k$, we also have
\[
\ln \frac{W_{T+1}}{W_1} \geq \ln \frac{w_{k,T+1}}{W_1} = -\eta\,\sum_{t=1}^T \hloss_{k,t} - \ln K~.
\]
Putting together and rearranging gives
\begin{equation}\label{e:eq1}
    \sum_{t=1}^T \sum_{i \in V} p_{i,t}\hloss_{i,t}
\le
    \sum_{t=1}^T \hloss_{k,t} + \frac{\ln K}{\eta}
    + \frac{\eta}{2}\,\sum_{t=1}^T \sum_{i \in V} p_{i,t}(\hloss_{i,t})^2~.
\end{equation}
Note that, for all $i \in V$,
\[
\E_t[\hloss_{i,t}] = \sum_{j\,:\, i \in S_{j,t}}
p_{j,t}\,\frac{\loss_{i,t}}{q_{i,t}}
                = \sum_{j \,:\, j \reach{t} i} p_{j,t}\,\frac{\loss_{i,t}}{q_{i,t}}
                = \frac{\loss_{i,t}}{q_{i,t}}\sum_{j \,:\, j \reach{t} i} p_{j,t}
                = \loss_{i,t}~.
\]
Moreover,
\[
\E_t\bigl[(\hloss_{i,t})^2\bigr] = \sum_{j\,:\, i \in S_{j,t}}
p_{j,t}\,\frac{\loss^2_{i,t}}{q^2_{i,t}}
                %= \sum_{j \,:\, j \reach{t} i} p_{j,t}\,\frac{\loss^2_{i,t}}{q^2_{i,t}}
                = \frac{\loss^2_{i,t}}{q^2_{i,t}}\sum_{j \,:\, j \reach{t} i} p_{j,t}
                \leq \frac{1}{q^2_{i,t}}\sum_{j \,:\, j \reach{t} i} p_{j,t}
                = \frac{1}{q_{i,t}}~.
\]
Hence, taking expectations $\E_t$ on both sides of~(\ref{e:eq1}), and
recalling the definition of $Q_t$, we can write
\begin{equation}\label{e:conditionalregret}
    \sum_{t=1}^T \sum_{i \in V} p_{i,t}\,\loss_{i,t}
\le
    \sum_{t=1}^T \loss_{k,t} + \frac{\ln K}{\eta} + \frac{\eta}{2}\,\sum_{t=1}^T Q_t~.
\end{equation}
Finally, taking expectations over history to remove conditioning gives
\begin{equation*}
\E\bigl[L_{A,T} - L_{k,T}\bigr]
\le
    \frac{\ln K}{\eta} + \frac{\eta}{2}\sum_{t=1}^T \E[Q_t]
\end{equation*}
as claimed. \hfill\qed

\subsection{Proof of Theorem \ref{c:ndag}}
\label{s:appendix:c:ndag}
We first need the following lemma.

%The following lemma can be seen as a generalization of Lemma 3 in \cite{MS11}.

\begin{lemma}\label{lemma:nDGA}
Let $G = (V,D)$ be a directed graph with vertex set $V = \{1,\ldots,K\}$, and
arc set $D$.
%Let $N_i^-$ be the in-neighborhood of node $i$, i.e., the set of nodes $j$ such that $(j,i) \in D$.
Then, for any distribution $p$ over $V$ we have,
\[
\sum_{i=1}^K \frac{p_i}{p_i+ \sum_{j\,:\,j \reach{}i}
%{j \in N_i^-}\
p_j}
\leq \mas(G)~.
\]
\end{lemma}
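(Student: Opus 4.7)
The plan is to prove the bound via a probabilistic coupling to independent exponential clocks, in the spirit of the Caro--Wei proof of the classical independence-number lower bound. Assume without loss of generality that $p_i > 0$ for every $i \in V$ (the degenerate case follows by perturbing $p$ and passing to a limit). For each $i \in V$, draw an independent random variable $T_i \sim \mathrm{Exp}(p_i)$. By the standard race property of independent exponentials, for any vertex $i$,
\[
\Pr\Bigl(T_i < T_j \text{ for every } j \text{ with } j \reach{} i\Bigr)
\;=\; \frac{p_i}{p_i + \sum_{j\,:\,j\reach{}i} p_j}~,
\]
which is precisely the $i$-th summand on the left-hand side of the lemma.

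Next I would introduce the random vertex set
\[
S \;=\; \Bigl\{\, i \in V \,:\, T_i < T_j \text{ for every } j \text{ with } j \reach{} i \,\Bigr\},
\]
so that linearity of expectation yields $\E\bigl[|S|\bigr] = \sum_{i=1}^K p_i / \bigl(p_i + \sum_{j\,:\,j\reach{}i} p_j\bigr)$. It therefore suffices to establish the deterministic bound $|S| \leq \mas(G)$ pointwise on the probability space; taking expectations then finishes the proof.

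To get $|S| \leq \mas(G)$, I would show that the subgraph of $G$ induced on $S$ is acyclic. Suppose toward a contradiction that $i_1 \reach{} i_2 \reach{} \cdots \reach{} i_k \reach{} i_1$ is a directed cycle with all $i_m \in S$. For every $m$ (indices taken modulo $k$), the arc $i_{m-1} \reach{} i_m$ combined with the definition of $S$ forces $T_{i_m} < T_{i_{m-1}}$. Chaining these strict inequalities around the cycle yields $T_{i_1} < T_{i_1}$, which is impossible. Hence the subgraph induced on $S$ is a directed acyclic subgraph of $G$, so $|S| \leq \mas(G)$ by definition.

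I do not anticipate any real obstacle. The one nontrivial idea is recognizing the exponential coupling as the right representation of each summand $p_i / \bigl(p_i + \sum_{j\,:\,j\reach{}i} p_j\bigr)$ as the probability of a single event ($i$ being the $T$-minimum of its closed in-neighborhood); once this is in place, both the probability identity and the acyclicity of $S$ are immediate. The $p_i = 0$ edge case (where the summand formally becomes $0/0$) is a cosmetic technicality handled by the limiting argument mentioned above.
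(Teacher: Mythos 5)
Your proof is correct, but it takes a genuinely different route from the paper's. The paper argues deterministically and greedily: it repeatedly selects the vertex $i$ minimizing $p_i + \sum_{j \in N_i^-} p_j$, deletes it together with its in-neighborhood, shows that the deleted vertices contribute at most $1$ to the sum, and observes that the selected vertices induce an acyclic subgraph; iterating until the graph is empty gives the bound. Your argument is a weighted Caro--Wei-type probabilistic proof: independent exponential clocks $T_i \sim \mathrm{Exp}(p_i)$ realize each summand $p_i/\bigl(p_i + \sum_{j:\,j\reach{}i} p_j\bigr)$ as the probability that $i$ beats its closed in-neighborhood, and the set of winners induces an acyclic subgraph almost surely (ties have probability zero, and a directed cycle of winners would force $T_{i_1} < T_{i_1}$), so linearity of expectation gives $\E[|S|] \le \mas(G)$. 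Both arguments are sound; yours is shorter and arguably more transparent, and it makes explicit that the left-hand side is the expected size of a random induced acyclic subgraph (a directed analogue of the Caro--Wei lower bound on the independence number), while the paper's greedy deletion is elementary, avoids any probabilistic machinery, and produces an explicit acyclic witness set. The $p_i = 0$ degeneracy you flag is handled adequately by your perturbation-and-limit remark.
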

\begin{proof}
We show that there is a subset of vertices $V'$ such that the induced graph
is acyclic and $|V'|\geq \sum_{i=1}^K \frac{p_i}{p_i+ \sum_{j \in N_i^-}\
p_j}$. Let $N_i^-$ be the in-neighborhood of node $i$, i.e., the set of nodes
$j$ such that $(j,i) \in D$.

We prove the lemma by adding elements to an initially empty set $V'$.
% an induction of the number of vertices.
% The basis of the induction, with a graph having a single vertex, is
% trivial. For the inductive step
Let
% Let $V_1=\emptyset$ and
\[
\Phi_0=\sum_{i=1}^K \frac{p_i}{p_i+ \sum_{j\,:\,j \reach{}i}
%{j \in N_i^-}\
p_j},
\]
and let $i_1$ be the vertex which minimizes $p_{i}+ \sum_{j \in N_{i}^-}\
p_j$ over $i \in V$.
We now delete $i_1$ from the graph, along with all its incoming neighbors
(set $N_{i_1}^-$), and all edges which are incident (both departing and
incoming) to these nodes, and then iterating on the remaining graph. Let
$N_{i,1}^-$ be the in-neighborhoods of the graph after the first step.
%and add $i$ to $V_1$.
The contribution of all the deleted vertices to $\Phi_0$ is
\[
\sum_{r\in N_{i_1}^-\cup \{i_1\}} \frac{p_r}{p_r+ \sum_{j \in
N_r^-}\ p_j} \leq \sum_{r\in N_{i_1}^-\cup \{i_1\}}
\frac{p_r}{p_{i_1}+ \sum_{j \in N_{i_1}^-}\ p_j}=1\,,
\]
where the inequality follows from the minimality of $i_1$.

Let $V' \leftarrow V'\cup\{i_1\}$, and  $V_1= V\setminus(N_{i_1}^-\cup
\{i_1\})$. Then, from the first step we have
\[
\Phi_1=\sum_{i\in V_1} \frac{p_i}{p_i+ \sum_{j \in N_{i,1}^-}\ p_j}
\geq
\sum_{i\in V_1} \frac{p_i}{p_i+ \sum_{j \in N_{i}^-}\ p_j}
\geq
\Phi_0 -1~.
\]
We apply the very same argument to $\Phi_1$ with node $i_2$ (minimizing $p_i+
\sum_{j \in N_{i,1}^-}\ p_j$ over $i \in V_1$), to $\Phi_2$ with node $i_3$,
\ldots, to $\Phi_{s-1}$ with node $i_s$, up until $\Phi_s = 0$, i.e., until
no nodes are left in the reduced graph. This gives $\Phi_0 \leq s = |V'|$,
where $V' = \{i_1, i_2, \ldots, i_s\}$. Moreover, since in each step $r = 1,
\ldots, s$ we remove all remaining arcs incoming to $i_r$, the graph induced
by set $V'$ cannot contain cycles.
%
% and continue with the graph induced by $V_1$.
% By the induction hypothesis we have an acyclic subgraph of size at
% least $\Phi_1$ in the graph induced by $V_1$. Since none of the
% vertices have an arc to $i_1$,we can add $i_1$ and have an acyclic
% subgraph of $G$ of size at least $\Phi_0$.
%
\end{proof}

The claim of Theorem~\ref{c:ndag} follows from a direct combination of
Lemma~\ref{thm:noexp} with Lemma~\ref{lemma:nDGA}.
%\hfill\qed

\subsection{Proof of Theorem \ref{th:lowerbound}}\label{subsec:prooflowerbound}

The proof uses a variant of the standard multi-armed bandit lower bound
\cite{cbl06}. The intuition is that when we have $\alpha(G)$ non-adjacent
nodes, the problem reduces to an instance of the standard multi-armed bandit
(where information beyond the loss of the action choses is observed) on
$\alpha(G)$ actions.

By Yao's minimax principle, in order to establish the lower bound, it is
enough to demonstrate some probabilistic adversary strategy, on which the
expected regret of any \emph{deterministic} algorithm $A$ is bounded from
below by $C\sqrt{\alpha(G) T}$ for some constant $C$.

Specifically, suppose without loss of generality that we number the nfiodes
in some largest independent set of $G$ by $1,2,\ldots,\alpha(G)$, and all the
other nodes in the graph by $\alpha(G)+1,\ldots,|V|$. Let $\epsilon$ be a
parameter to be determined later, and consider the following joint
distribution over stochastic loss sequences:
\begin{itemize}
  \item Let $Z$ be uniformly distributed on $1,2,\ldots,\alpha(G)$;
  \item Conditioned on $Z=i$, each loss $\ell_{j,t}$ is independent
      Bernoulli with parameter $1/2$ if $j\neq i$ and $j<\alpha(G)$,
      independent Bernoulli with parameter $1/2-\epsilon$ if $j=i$, and is
      $1$ with probability $1$, otherwise.
\end{itemize}
For each $i=1\ldots \alpha(G)$, let $T_i$ be the number of times the node $i$
was chosen by the algorithm after $T$ rounds. Also, let $T_{\Delta}$ denote
the number of times some node whose index is larger than $\alpha(G)$ is
chosen after $T$ rounds. Finally, let $\E_i$ denote expectation conditioned
on $Z=i$, and ${\Pr}_i$ denote the probability over loss sequences
conditioned on $Z=i$. We have
\begin{align*}
\max_{k\in V}\E[L_{A,T}-L_{k,T}] &= \frac{1}{\alpha(G)}\sum_{i=1}^{\alpha(G)}
\E_i\left[L_{A,T}-\left(\frac{1}{2}-\epsilon\right)T\right]\\
&= \frac{1}{\alpha(G)}\sum_{i=1}^{\alpha(G)}\E_i\left[\sum_{j\in \{1\ldots \alpha(G)\}\setminus i}\frac{1}{2}T_j+\left(\frac{1}{2}-\epsilon\right)T_i+T_{\Delta}
-\left(\frac{1}{2}-\epsilon\right)T\right]\\
&= \frac{1}{\alpha(G)}\sum_{i=1}^{\alpha(G)}\E_i\left[\frac{1}{2}\sum_{j=1}^{\alpha(G)}T_j
+\frac{1}{2}T_{\Delta}+\frac{1}{2}T_{\Delta}-\epsilon T_i - \left(\frac{1}{2}-\epsilon\right)T\right].
\end{align*}
Since $\sum_{j=1}^{\alpha(G)}T_j+T_{\Delta}=T$, this expression equals
\begin{equation}\label{eq:regretl}
\frac{1}{\alpha(G)}\sum_{i=1}^{\alpha(G)}\E_i\left[\frac{1}{2}T_{\Delta}+\epsilon(T-T_i)\right]
~\geq~ \epsilon\left(T-\frac{1}{\alpha(G)}\sum_{i=1}^{\alpha(G)}\E_i[T_i]\right).
\end{equation}
Now, consider another distribution ${\Pr}_{0}$ over the loss sequence, which
corresponds to the distribution above but with $\epsilon=0$ (namely, all
nodes $1,\ldots,\alpha(G)$ have losses which are $\pm 1$ independently and
with equal probability, and all nodes whose index is larger than $\alpha(G)$
have losses of $1$), and denote by $\E_0$ the corresponding expectation. We
upper bound the difference between $\E_i[T_i]$ and $\E_0[T_i]$, using
information theoretic arguments.
% To do so, consider another distribution ${\Pr}_{0}$ over the loss sequence, which corresponds to the
% distribution above but with $\epsilon=0$ (namely, all nodes $1,\ldots,\alpha(G)$ have losses which are
% $\pm 1$ independently and with equal probability, and all nodes $>\alpha(G)$ have losses of $1$).
% Also,
Let $\lambda_t$ be the collection of loss values observed at round $t$, and
$\lambda^t=(\lambda_1,\ldots,\lambda_{t})$. Note that since the algorithm is
deterministic, $\lambda^{t-1}$ determines the algorithm's choice of action
$I_t$ at each round $t$, and hence $T_i$ is determined by $\lambda^T$, and
thus $\E_0[T_i \mid \lambda^T]=\E_i[T_i \mid \lambda^T]$. We have
\begin{align*}
  \E_i[T_i]-\E_0[T_i] &= \sum_{\lambda^T}{\Pr}_i(\lambda^T)\E_i[T_i \mid \lambda^T]-\sum_{\lambda^T}{\Pr}_{0}(\lambda^T)\E_0[T_i \mid \lambda^T]\\
  &=   \sum_{\lambda^T}{\Pr}_i(\lambda^T)\E_i[T_i \mid \lambda^T]-\sum_{\lambda^T}{\Pr}_{0}(\lambda^T)\E_i[T_i|\lambda^T]\\
  &\leq \sum_{\lambda^T\,:\,{\Pr}_i(\lambda^T)>{\Pr}_{0}(\lambda^T)}\left({\Pr}_i(\lambda^T)-{\Pr}_{0}(\lambda^T)\right)\E_i[T_i \mid \lambda^T]\\
  &\leq T \sum_{\lambda^T\,:\,{\Pr}_i(\lambda^T)>{\Pr}_{0}(\lambda^T)}\left({\Pr}_i(\lambda^T)-{\Pr}_{0}(\lambda^T)\right)~.
\end{align*}
Using Pinsker's inequality, this is at most
\[
T\sqrt{\frac{1}{2}D_{\mathrm{kl}}({\Pr}_{0}(\lambda^T)\,\|\,{\Pr}_i(\lambda^T))}
\]
where $D_{\mathrm{kl}}$ is the Kullback-Leibler divergence (or relative
entropy) between the distributions ${\Pr}_i$ and ${\Pr}_{0}$. Using the chain
rule for relative entropy, this equals
\[
T\sqrt{\frac{1}{2}\sum_{t=1}^{T}\sum_{\lambda^{t-1}}{\Pr}_{0}(\lambda^{t-1})D_{\mathrm{kl}}
\bigl({\Pr}_{0}(\lambda_t|\lambda^{t-1})\,\|\,{\Pr}_i(\lambda_t|\lambda^{t-1})\bigr)}~.
\]
Let us consider any single relative entropy term above. Recall that
$\lambda^{t-1}$ determines the node $I_t$ picked at round $t$. If this node
is not $i$ or adjacent to $i$, then $\lambda_t$ is going to have the same
distribution under both ${\Pr}_i$ and ${\Pr}_{0}$, and the relative entropy
is zero. Otherwise, the coordinate of $\lambda_t$ corresponding to node $i$
(and that coordinate only) will have a different distribution: Bernoulli with
parameter $\frac{1}{2}-\epsilon$ under ${\Pr}_i$, and Bernoulli with
parameter $\frac{1}{2}$ under ${\Pr}_{0}$. The relative entropy term in this
case is easily shown to be $-\frac{1}{2}\log(1-4\epsilon^2) \leq
8\log(4/3)\,\epsilon^2$. Therefore, letting $S_{I_t}$ denote the feedback set
at time $t$, we can upper bound the above by
\begin{eqnarray}
T\sqrt{\frac{1}{2}\sum_{t=1}^{T}{\Pr}_{0}(i\in S_{I_t})(8\log(4/3)\epsilon^2)}
&=&
2T\epsilon \sqrt{\log\left(\frac{4}{3}\right)\E_0\bigl[|\{t:i\in S_{I_t}\}|\bigr]}\nonumber\\
&\leq&
2T\epsilon \sqrt{\log\left(\frac{4}{3}\right)\E_0\left[T_i+T_{\Delta}\right]}~.\label{eq:dkll}
\end{eqnarray}
We now claim that we can assume $\E_0[T_{\Delta}]\leq 0.08\sqrt{\alpha(G)
T}$. To see why, note that if $\E_0[T_{\Delta}]> 0.08\sqrt{\alpha(G) T}$,
then the expected regret under $\E_0$ would have been at least
\begin{eqnarray*}
\max_{k\in V}\E_0[L_{A,T}-L_{k,T}]
&=&  \E_0\left[T_\Delta+\frac{1}{2}\sum_{j=1}^{\alpha(G)}T_j\right]-\frac{1}{2}T\\
&=&  \E_0\left[\frac{1}{2}T_\Delta+\frac{1}{2}\left(T_{\Delta}+\sum_{j=1}^{\alpha(G)}T_j\right)\right]-\frac{1}{2}T\\
&=& \E_0\left[\frac{1}{2}T_{\Delta}+\frac{1}{2}T\right]-\frac{1}{2}T\\
&=& \frac{1}{2}\E_0[T_{\Delta}]\\
&>& 0.04\sqrt{\alpha(G)T}~.
\end{eqnarray*}
So for the adversary strategy defined by the distribution ${\Pr}_{0}$, we
would get an expected regret lower bound as required. Thus, it only remains
to treat the case where $\E_0[T_{\Delta}]\leq 0.08\sqrt{\alpha(G) T}$.
Plugging in this upper bound into \eqref{eq:dkll}, we get overall that
\[
\E_i[T_i]-\E_0[T_i] \leq 2T\epsilon \sqrt{\log\left(\frac{4}{3}\right)\E_0\left[T_i+0.08\sqrt{\alpha(G) T}\right]}~.
\]
Therefore, the expected regret lower bound in \eqref{eq:regretl} is at least
\begin{align*}
&\epsilon\left(T-\frac{1}{\alpha(G)}\sum_{i=1}^{\alpha(G)}\E_0[T_i]
-\frac{1}{\alpha(G)}\sum_{i=1}^{\alpha(G)}2T\epsilon\sqrt{\log\left(\frac{4}{3}\right)\E_0\left[T_i+0.08\sqrt{\alpha(G) T}\right]}\right)\\
&\geq~
\epsilon\left(T-\frac{T}{\alpha(G)}
-2T\epsilon\sqrt{\log\left(\frac{4}{3}\right)\frac{1}{\alpha(G)}\sum_{i=1}^{\alpha(G)}\E_0\left[T_i+0.08\sqrt{\alpha(G) T}\right]}\right)\\
&\geq~
\epsilon T\left(1-\frac{1}{\alpha(G)}
-2\epsilon\sqrt{\log\left(\frac{4}{3}\right)\left(\frac{T}{\alpha(G)}+0.08\sqrt{\alpha(G) T}\right)}\right).
\end{align*}
Since $\alpha(G)>1$, we have $1-\frac{1}{\alpha(G)} \geq \frac{1}{2}$, and
since $T\geq 0.0064\alpha^3(G)$, we have $0.08\sqrt{\alpha(G)T}\leq
\frac{T}{\alpha(G)}$. Overall, we can lower bound the expression above by
\[
\epsilon T\left(\frac{1}{2}
-2\epsilon\sqrt{2\log\left(\frac{4}{3}\right)\frac{T}{\alpha(G)}}\right).
\]
Picking $\epsilon = \frac{1}{8\sqrt{2\log(4/3)T/\alpha(G)}}$, the expression
above is
\[
\frac{T}{8\sqrt{2\log\left(\frac{4}{3}\right)\frac{T}{\alpha(G)}}}\frac{1}{4} \geq 0.04 \sqrt{\alpha(G)T}~.
\]
This constitutes a lower bound on the expected regret, from which the result
follows.

\subsection{Proof of Theorem \ref{thm:random_er}}\label{sa:erdos}
Fix round $t$, and let $G = (V,D)$ be the Erd\H{o}s-Renyi random graph
generated at time $t$, $N_i^-$ be the in-neighborhood of node $i$, i.e., the
set of nodes $j$ such that $(j,i) \in D$, and denote by $d^-_i$ the indegree
of $i$. We need the following lemmas.

\begin{lemma}\label{cl:1}
Fix a directed graph $G=(V,D)$. Let $p_1, \ldots, p_K$ be an arbitrary
probability distribution defined over $V$, $f: V\rightarrow V$ be an
arbitrary permutation of $V$, and ${\E_f}$ denote the expectation w.r.t.\ a
random permutation $f$. Then, for any $i \in V$, we have
\[
\E_f\left[\frac{p_{f(i)}}
             {p_{f(i)} + \sum_{j\,:\,j \reach{} i}
             %{j\,:\, f(j) \in N^-_{f(i)}}
             p_{f(j)}}\right]
=
\frac{1}{1+d^-_i}~.
\]
\end{lemma}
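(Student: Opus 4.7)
The plan is to exploit the symmetry induced by the random permutation. Fix the vertex $i$ and let $N = \{i\} \cup N_i^-$, which is a deterministic set of size $1 + d_i^-$ depending only on the graph $G$. Rewriting the expression inside the expectation using this set, the numerator is $p_{f(i)}$ and the denominator equals $\sum_{j \in N} p_{f(j)}$.

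Next I would condition on the image $S = f(N)$, which is a random subset of $V$ of cardinality $1 + d_i^-$. The key observation is that, conditional on $S$, the restriction $f|_N$ is a uniformly random bijection from $N$ to $S$. In particular, $f(i)$ is uniformly distributed over the elements of $S$. Hence
\[
\E_f\!\left[\frac{p_{f(i)}}{\sum_{j \in N} p_{f(j)}} \,\Bigm|\, S\right]
= \sum_{k \in S} \frac{1}{|S|} \cdot \frac{p_k}{\sum_{\ell \in S} p_\ell}
= \frac{1}{|S|} = \frac{1}{1 + d_i^-},
\]
since the sum telescopes to $1$. Because this conditional expectation is the constant $1/(1+d_i^-)$ regardless of $S$, taking a further expectation over $S$ yields the claimed identity.

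The entire argument is a clean symmetrization, so I do not anticipate a real obstacle; the only subtle point is making sure that the ``pick out $f(i)$ among $f(N)$'' step is justified, which follows immediately from the fact that a uniform permutation restricted to any subset of its domain remains uniform (as a bijection onto its image) and that, conditioned on the image, the index assigned to the distinguished element $i$ is uniform over that image.
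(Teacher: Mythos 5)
Your proof is correct and follows essentially the same symmetrization argument as the paper's: both group the permutations by (equivalently, condition on) the image $S$ of the closed in-neighborhood $\{i\}\cup N_i^-$, observe that averaging over which element of $S$ plays the role of $f(i)$ makes the sum telescope to $1/|S| = 1/(1+d_i^-)$, and then average over $S$. Your phrasing via conditional expectation is, if anything, cleaner than the paper's explicit counting of contributing permutations.
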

\begin{proof}
Consider selecting a subset $S\subset V$ of $1+d^-_i$ nodes. We consider the
contribution to the expectation when $S=N^-_{f(i)}\cup\{f(i)\}$. Since there
are $K(K-1)\cdots(K-d^-_i+1)$ terms (out of $K!$) contributing to the
expectation, we can write
\begin{eqnarray*}
\E_f\left[\frac{p_{f(i)}}{p_{f(i)} + \sum_{j\,:\, f(j) \in N^-_{f(i)}} p_{f(j)}}\right]
&=&
\frac{1}{\binom{K}{d^-_i}}
\sum_{S\subset V, |S|=d^-_i} \frac{1}{1+d^-_i}\sum_{i\in S} \frac{p_i}{p_i+\sum_{j\in S,j\neq i} p_j}\\
&=&
\frac{1}{\binom{K}{d^-_i}}
\sum_{S\subset V, |S|=d^-_i} \frac{1}{1+d^-_i}\\
&=& \frac{1}{1+d^-_i}~.
\end{eqnarray*}
\end{proof}

\begin{lemma}\label{cl:2}
Let $p_1, \ldots, p_K$ be an arbitrary probability distribution defined over
$V$, and ${\E}$ denote the expectation w.r.t.\ the Erd\H{o}s-Renyi random
draw of arcs at time $t$. Then, for any fixed $i \in V$, we have
\[
\E\left[\frac{p_i}{p_i + \sum_{j\,:\,j \reach{t}i} p_j}\right]
=
\frac{1}{rK}\left(1-(1-r)^K\right)~.
\]
\end{lemma}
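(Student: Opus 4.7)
The plan is to combine Lemma~\ref{cl:1}, which averages over a random permutation of the $p$-labels, with the vertex-relabeling invariance of the Erd\H{o}s-Renyi distribution, thereby reducing the desired expectation to one over a $\mathrm{Binomial}(K-1, r)$ random variable, which can then be evaluated in closed form.

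Concretely, I would first fix a realization of the ER graph $G$ and apply Lemma~\ref{cl:1}, obtaining
\[
\E_f\Bigl[\tfrac{p_{f(i)}}{p_{f(i)} + \sum_{j\,:\,j \reach{G} i} p_{f(j)}}\Bigr] = \tfrac{1}{1 + d_i^-(G)},
\]
for our fixed $i$, where $f$ is a uniformly random permutation of $V$. I would then take the expectation over $G$: on the right-hand side, $d_i^-(G) \sim \mathrm{Binomial}(K-1, r)$ under ER, so this becomes $\E\bigl[\tfrac{1}{1+\mathrm{Binom}(K-1, r)}\bigr]$. On the left-hand side, I would exchange $\E_f$ with $\E_G$ and invoke the vertex-relabeling invariance of the ER distribution---namely that the incoming-arc indicators $\{X_{ji}\}_{j \neq i}$ are i.i.d.\ Bernoulli$(r)$ and hence exchangeable in the labels of $V \setminus \{i\}$---to argue that relabeling $p$ by $f$ does not alter $\E_G\bigl[\tfrac{p_i}{p_i + \sum_{j\,:\,j \reach{t} i} p_j}\bigr]$.

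Finally, I would evaluate the binomial expectation using the standard device $\tfrac{1}{1+s} = \int_0^1 x^s\, dx$. Swapping sum and integral yields
\[
\E\Bigl[\tfrac{1}{1+\mathrm{Binom}(K-1, r)}\Bigr] = \int_0^1 (1-r+rx)^{K-1}\, dx,
\]
and the change of variable $u = 1-r+rx$ (so $du = r\, dx$) reduces this to $\tfrac{1}{r}\int_{1-r}^1 u^{K-1} du = \tfrac{1}{rK}\bigl(1 - (1-r)^K\bigr)$, which is exactly the claimed value. The main obstacle I foresee is the interchange step that identifies the permutation-averaged quantity with the original ER expectation for every fixed $i$: this depends crucially on the full symmetry of the ER law under vertex relabeling, exactly in the form already exploited in the proof of Lemma~\ref{cl:1}. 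Once this reduction is in place, the remainder is a routine moment-of-a-binomial calculation.
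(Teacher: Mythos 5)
Your proposal is correct and follows essentially the same route as the paper's proof: both reduce the claim to $\E\bigl[\tfrac{1}{1+\mathrm{Binom}(K-1,r)}\bigr]$ by symmetrizing with a random permutation via Lemma~\ref{cl:1} and exploiting the exchangeability of the Erd\H{o}s-Renyi arc indicators. The only (immaterial) difference is the final evaluation of the binomial moment, where you use the integral identity $\tfrac{1}{1+s}=\int_0^1 x^s\,dx$ while the paper uses the combinatorial identity $\tfrac{1}{i+1}\binom{K-1}{i}=\tfrac{1}{K}\binom{K}{i+1}$.
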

\begin{proof}
For the given $i \in V$ and time $t$, consider the Bernoulli random variables
$X_{j}, j\in V\setminus\{i\}$, and denote by $\E_{j\,:\,j\neq i}$ the
expectation w.r.t.\ all of them. We symmetrize $\E\left[\frac{p_i}{p_i +
\sum_{j\,:\,j \reach{t}i} p_j}\right]$ by means of a random permutation $f$,
as in Lemma \ref{cl:1}. We can write
\begin{eqnarray*}
\E\left[\frac{p_i}{p_i + \sum_{j\,:\,j \reach{t}i} p_j}\right]
&= &
\E_{j\,:\,j\neq i}\left[\frac{p_i}{p_i + \sum_{j\,:\,j\neq i} X_j p_j}\right]\\
% &= &
% \E_f \E_{j\,:\,j\neq i}\left[\frac{p_{f(i)}}{p_{f(i)} + \sum_{j\,:\,j\neq i} X_{f(j)} p_{f(j)}}\right]
%         \qquad ({\mbox{by symmetry}})\\
&= &
\E_{j\,:\,j\neq i} \E_f \left[\frac{p_{f(i)}}{p_{f(i)} + \sum_{j\,:\,j\neq i} X_{f(j)} p_{f(j)}}\right]
        \qquad ({\mbox{by symmetry}})\\
&= &
\E_{j\,:\,j\neq i} \left[\frac{1}{1 + \sum_{j\,:\,j\neq i} X_j}\right]
        \qquad ({\mbox{from Lemma \ref{cl:1}}})\\
&=&
\sum_{i=0}^{K-1} \binom{K-1}{i} r^i (1-r)^{K-1-i} \frac{1}{i+1}\\
&=&
\frac{1}{rK} \sum_{i=0}^{K-1} \binom{K}{i+1} r^{i+1}(1-r)^{K-1-i} \\
&=&
\frac{1}{rK} \left( 1 - (1-r)^K\right)~.
\end{eqnarray*}
\end{proof}

At this point, we follow the proof of Lemma~\ref{thm:noexp} up
until~(\ref{e:conditionalregret}). We take an expectation $\E_{G_1, \ldots,
G_T}$ w.r.t.\ the randomness in generating the sequence of graphs $G_1,
\ldots, G_T$. This yields
\[
    \sum_{t=1}^T \E_{G_1, \ldots, G_T}\left[\sum_{i \in V} p_{i,t}\,\loss_{i,t}\right]
\le
    \sum_{t=1}^T \loss_{k,t} + \frac{\ln K}{\eta} + \frac{\eta}{2}\,\sum_{t=1}^T \E_{G_1, \ldots, G_T}\left[Q_t\right]~.
\]
We use Lemma~\ref{cl:2} to upper bound $\E_{G_1, \ldots,
G_T}\left[Q_t\right]$ by $\frac{1}{r} \left( 1 - (1-r)^K\right)$, and take
the outer expectation to remove conditioning, as in
the proof of Lemma~\ref{thm:noexp}. This concludes the proof. %\hfill\qed

\section{Technical lemmas and proofs from Section~\ref{ss:expdom}}\label{s:appendixexp3dom}

Again, throughout this appendix, $\E_t[\,\cdot\,]$ is a shorthand for the
conditional expectation $\E_t[\,\cdot\,|\, I_1, I_2, \ldots, I_{t-1}]$.
Moreover, as we did in Appendix \ref{s:appendixexp3set}, in round $t$ we
first condition on the history $I_1, I_2, \ldots, I_{t-1}$, and then take an
outer expectation with respect to that history.

%%%%%%%%%%%%%%

\subsection{Proof of Lemma \ref{thm:alg}}\label{sa:lemma_exp3_dom}
We start to bound the contribution to the overall regret of an instance
indexed by $b$. When clear from the context, we remove the superscript $b$
from $\gammab$, $w^{(b)}_{i,t}$, $p^{(b)}_{i,t}$, and other related
quantities. For any $t\in\Tb$ we have
\begin{align*}
    \frac{W_{t+1}}{W_t}
&=
    \sum_{i \in V} \frac{w_{i,t+1}}{W_t}
\\&=
    \sum_{i \in V} \frac{w_{i,t}}{W_t}\,\exp\bigl(-(\gamma/2^b)\,\hloss_{i,t}\bigr)
\\&=
    \sum_{i \in R_t} \frac{p_{i,t}-\gamma/|R_t|}{1-\gamma}\,\exp\bigl(-(\gamma/2^b)\,\hloss_{i,t}\bigr) + \sum_{i \not\in R_t} \frac{p_{i,t}}{1-\gamma}\,\exp\bigl(-(\gamma/2^b)\,\hloss_{i,t}\bigr)
\\ &\le
    \sum_{i \in R_t} \frac{p_{i,t}-\gamma/|R_t|}{1-\gamma}\,
    \left(1 - \frac{\gamma}{2^b}\hloss_{i,t} + \frac{1}{2}\left(\frac{\gamma}{2^b}\hloss_{i,t}\right)^2\right)
    + \sum_{i \not\in R_t} \frac{p_{i,t}}{1-\gamma}\,\left(1 - \frac{\gamma}{2^b}\hloss_{i,t} + \frac{1}{2}\left(\frac{\gamma}{2^b}\hloss_{i,t}\right)^2\right)\\
& \text{(using $e^{-x} \leq 1-x+x^2/2$ for all $x \ge 0$)}
\\ &\le
    1 - \frac{\gamma/2^b}{1-\gamma}\sum_{i \in V} p_{i,t}\hloss_{i,t}
    + \frac{\gamma^2/2^b}{1-\gamma}\sum_{i \in R_t} \frac{\hloss_{i,t}}{|R_t|}
    + \frac{1}{2}\frac{(\gamma/2^b)^2}{1-\gamma}\sum_{i \in V} p_{i,t}\bigl(\hloss_{i,t}\bigr)^2~.
\end{align*}
%
%the second-last inequality following from $e^x \leq 1+x+\frac{e^2-3}{4}\,x^2 \leq 1+x+1.1\,x^2$
%for all $x \leq 2$, combined with
%\[
%\bar{\gamma}\,\hx_{i,t} \leq \frac{\bar{\gamma}}{q^{(b)}_{i,t}} \leq 2
%\]
%(which in turn holds since $q^{(b)}_{i,t} \geq \bar{\gamma_t} \geq \bar{\gamma}/2 $ for all $i \in A$).
%
Taking logs, upper bounding, and summing over $t \in \Tb$ yields
\[
    \ln\frac{W_{|\Tb|+1}}{W_1}
\le
    - \frac{\gamma/2^b}{1-\gamma}\sum_{t \in \Tb} \sum_{i \in V} p_{i,t}\hloss_{i,t}
    + \frac{\gamma^2/2^b}{1-\gamma}\sum_{t \in \Tb} \sum_{i \in R_t} \frac{\hloss_{i,t}}{|R_t|}
    + \frac{1}{2}\frac{(\gamma/2^b)^2}{1-\gamma} \sum_{t \in \Tb}\sum_{i \in V} p_{i,t}\bigl(\hloss_{i,t}\bigr)^2~.
\]
Moreover, for any fixed comparison action $k$, we also have
\[
    \ln\frac{W_{|\Tb|+1}}{W_1}
\ge
    \ln\frac{w_{k,|\Tb|+1}}{W_1} = -\frac{\gamma}{2^b}\sum_{t \in \Tb} \hloss_{k,t} - \ln K~.
\]
Putting together, rearranging, and using $1-\gamma \le 1$ gives
\[
    \sum_{t \in \Tb} \sum_{i \in V} p_{i,t}\hloss_{i,t}
\le
    \sum_{t \in \Tb} \hloss_{k,t} + \frac{2^b\ln K}{\gamma}
    + \gamma\sum_{t \in \Tb} \sum_{i \in R_t} \frac{\hloss_{i,t}}{|R_t|}
    + \frac{\gamma}{2^{b+1}} \sum_{t \in \Tb}\sum_{i \in V} p_{i,t}\bigl(\hloss_{i,t}\bigr)^2~.
\]
Reintroducing the notation $\gammab$ and summing over
$b=0,1,\dots,\lfloor\log_2 K\rfloor$ gives
\begin{equation}\label{e:eq2}
    \sum_{t=1}^T \left( \sum_{i \in V} p^{(b_t)}_{i,t}\hloss^{(b_t)}_{i,t} - \hloss_{k,t} \right)
\le
    \sum_{b=0}^{\lfloor\log_2 K\rfloor}\frac{2^b\ln K}{\gammab}
    + \sum_{t=1}^T \sum_{i \in R_t} \frac{\gamma^{(b_t)}\hloss^{(b_t)}_{i,t}}{|R_t|}
    + \sum_{t=1}^T \frac{\gamma^{(b_t)}}{2^{b_t+1}} \sum_{i \in V}  p^{(b_t)}_{i,t}\bigl(\hloss^{(b_t)}_{i,t}\bigr)^2~.
\end{equation}
Now, similarly to the proof of Lemma~\ref{thm:noexp}, we have that $
    \E_t\bigl[\hloss^{(b_t)}_{i,t}\bigr] = \loss_{i,t}
$ and $
    \E_t\bigl[(\hloss^{(b_t)}_{i,t})^2\bigr] \leq \frac{1}{q^{(b_t)}_{i,t}}
$ for any $i$ and $t$. Hence, taking expectations $\E_t$ on both sides of
(\ref{e:eq2}) and recalling the definition of $Q^{(b)}_t$ gives
\begin{equation}\label{e:eq3}
    \sum_{t=1}^T \left( \sum_{i \in V} p^{(b_t)}_{i,t}\ell_{i,t} - \ell_{k,t} \right)
\le
    \sum_{b=0}^{\lfloor\log_2 K\rfloor}\frac{2^b\ln K}{\gammab}
    + \sum_{t=1}^T \sum_{i \in R_t} \frac{\gamma^{(b_t)}\ell_{i,t}}{|R_t|}
    + \sum_{t=1}^T \frac{\gamma^{(b_t)}}{2^{b_t+1}} Q^{(b_t)}_t~.
\end{equation}
Moreover,
\[
\sum_{t=1}^T \sum_{i \in R_t} \frac{\gamma^{(b_t)}\ell_{i,t}}{|R_t|}
\leq
\sum_{t=1}^T \sum_{i \in R_t} \frac{\gamma^{(b_t)}}{|R_t|}
=
\sum_{t=1}^T \gamma^{(b_t)}
=
\sum_{b=0}^{\lfloor\log_2 K\rfloor} \gammab|\Tb|
\]
and
\[
\sum_{t=1}^T \frac{\gamma^{(b_t)}}{2^{b_t+1}} Q^{(b_t)}_t
=
\sum_{b=0}^{\lfloor\log_2 K\rfloor} \frac{\gamma^{(b)}}{2^{b+1}} \sum_{t\in T^{(b)}} Q^{(b)}_t ~.
\]
Hence, substituting back into~(\ref{e:eq3}), taking outer expectations on
both sides and recalling that $\Tb$ is a random variable (since the adversary
adaptively decides which steps $t$ fall into $\Tb$), we get
\begin{align}
\nonumber
    \E\bigl[L_{A,T} - L_{k,T}\bigr]
& \le
    \sum_{b=0}^{\lfloor\log_2 K\rfloor}\E\left[\frac{2^b\ln K}{\gammab} + \gammab|\Tb|
    + \frac{\gammab}{2^{b+1}}\sum_{t \in \Tb} Q^{(b)}_t\right]
\\ &=
\label{eq:doubling}
    \sum_{b=0}^{\lfloor\log_2 K\rfloor} \left( \frac{2^b\ln K}{\gammab}
    +  \gammab\E\left[\sum_{t \in \Tb} \left(1 + \frac{Q^{(b)}_t}{2^{b+1}}\right)\right]\right)~.
\end{align}
%
%where we used $\loss_{i,t} \le 1$.
%
This establishes~(\ref{eq:gammabfixed}).

In order to prove inequality~(\ref{eq:doublingtrick}), we need to tune each
$\gammab$ separately. However, a good choice of $\gammab$ depends on the
unknown random quantity
\[
    \overline{Q}^{(b)} = \sum_{t \in \Tb} \left(1 + \frac{Q^{(b)}_t}{2^{b+1}}\right)~.
\]
To overcome this problem, we slightly modify Exp3-DOM by applying a doubling
trick\footnote { The pseudo-code for the variant of Exp3-DOM using such a
doubling trick is not displayed here, since it is by now a folklore
technique.
%in this extended abstract.
} to guess $\overline{Q}^{(b)}$ for each $b$. Specifically, for each $b = 0,
1, \ldots, \lfloor \log_2 K \rfloor$, we use a sequence $\gammab_r =
\sqrt{(2^b\ln K)/2^r}$, for $r=0,1,\dots$. We initially run the algorithm
with $\gammab_0$. Whenever the algorithm is running with $\gammab_r$ and
observes that $\sum_s\overline{Q}^{(b)}_s > 2^r$, where the sum is over all
$s$ so far in $\Tb$,\footnote { Notice that $\sum_s\overline{Q}^{(b)}_s$ is
an observable quantity. } then we restart the algorithm with $\gammab_{r+1}$.
Because the contribution of instance $b$ to~(\ref{eq:doubling}) is
\[
    \frac{2^b\ln K}{\gammab} + \gammab\sum_{t \in \Tb} \left(1 + \frac{Q^{(b)}_t}{2^{b+1}}\right)
\]
the regret we pay when using any $\gammab_r$ is at most $
    2\sqrt{(2^b\ln K) 2^r}
$. The largest $r$ we need is
$\bigl\lceil\log_2\overline{Q}^{(b)}\bigr\rceil$ and
\[
    \sum_{r=0}^{\lceil \log_2\overline{Q}^{(b)}\rceil} 2^{r/2} < 5\sqrt{\overline{Q}^{(b)}}~.
\]
Since we pay regret at most $1$ for each restart, we get
\[
    \E\bigl[L_{A,T} - L_{k,T}\bigr]
\le
    c\,\sum_{b=0}^{\lfloor\log_2 K\rfloor} \E\left[\sqrt{(\ln K)\left(2^b|\Tb| + \frac{1}{2}\sum_{t\in\Tb} Q^{(b)}_t\right)} + \bigl\lceil\log_2\overline{Q}^{(b)}\bigr\rceil\right]~.
\]
for some positive constant $c$.
Taking into account that
\begin{align*}
    \sum_{b=0}^{\lfloor\log_2 K\rfloor} 2^b|\Tb| &\le 2\sum_{t=1}^{T} |R_t|
\\
    \sum_{b=0}^{\lfloor\log_2 K\rfloor} \sum_{t \in \Tb} Q^{(b)}_t &= \sum_{t=1}^T Q_t^{(b_t)}
\\
    \sum_{b=0}^{\lfloor\log_2 K\rfloor} \bigl\lceil\log_2\overline{Q}^{(b)}\bigr\rceil
&= \mathcal{O}\bigl((\ln K)\ln(KT)\bigr)
\end{align*}
we obtain
\begin{align*}
    \E\bigl[L_{A,T} - L_{k,T}\bigr]
&\le
    c\,\sum_{b=0}^{\lfloor\log_2 K\rfloor}\E\left[\sqrt{(\ln K)\left(2^b|\Tb| + \frac{1}{2}\sum_{t\in\Tb} Q^{(b)}_t\right)}\right] + \mathcal{O}\bigl((\ln K)\ln(KT)\bigr)
\\ &\le
    c\,\lfloor\log_2 K\rfloor \E\left[\sqrt{\frac{\ln K}{\lfloor\log_2 K\rfloor}\sum_{t=1}^T\left(2|R_t| + \frac{1}{2} Q^{(b_t)}_t\right)}\right] + \mathcal{O}\bigl((\ln K)\ln(KT)\bigr)
\\
&=
{\mathcal O}\left((\ln K)\,\E\left[\sqrt{\sum_{t=1}^T \left(4|R_t|
+ Q_t^{(b_t)}\right)}\right] + (\ln K) \ln(KT)\right)
\end{align*}
%
%i.e., (\ref{eq:doublingtrick}).
as desired. %\hfill\qed

\subsection{Proof of Theorem~\ref{c:final}}\label{s:appendix:c:final}

%========================================

%\medskip
The following graph-theoretic lemma turns out to be fairly useful for
analyzing directed settings. It is a directed-graph counterpart to a
well-known result~\cite{c79,w81} holding for undirected graphs.
% Because the lemma is of independent interest, we give a full proof here.
%
\begin{lemma}\label{l:amlemma}
Let $G = (V,D)$ be a directed graph,
%(possibly containing length-two cycles),
with $V = \{1,\ldots,K\}$.
% \footnote
% {
% A length-two cycle is essentially a bi-directional edge between two nodes.
% }
% )
% with node set $V = \{1,\ldots,n\}$,
% and arc set $E$.
Let $d_i^-$ be the indegree of node $i$, and $\alpha = \alpha(G)$ be the
independence number of $G$. Then
\[
\sum_{i=1}^K \frac{1}{1+d_i^-} \leq 2\alpha\,\ln\left(1+\frac{K}{\alpha}\right)~.
\]
\end{lemma}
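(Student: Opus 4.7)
The proof I would write uses a probabilistic argument with a randomly chosen subset, followed by a carefully optimized integral bound.

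The plan is to introduce a parameter $p \in [0,1]$ and let $S \subseteq V$ be a random subset where each vertex is included independently with probability $p$. Define the random set
\[
T_p = \bigl\{i \in S \,:\, N^-(i) \cap S = \emptyset\bigr\}~,
\]
i.e., the set of vertices in $S$ with no in-neighbor in $S$. The first key observation is that $T_p$ is always an independent set in the \emph{underlying undirected graph} of $G$: if $i,j \in T_p$, then neither $i$ is an in-neighbor of $j$ nor vice versa, so there is no arc between them in either direction, hence no edge in the undirected graph. This gives the deterministic bound $|T_p| \leq \alpha$, and therefore $\E[|T_p|] \leq \alpha$.

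Next I compute $\E[|T_p|] = \sum_{i \in V} p\,(1-p)^{d_i^-}$ by independence, so $\sum_i (1-p)^{d_i^-} \leq \alpha/p$. Of course we also have the trivial termwise bound $\sum_i (1-p)^{d_i^-} \leq K$. Now I exploit the elementary identity $\int_0^1 (1-p)^{d_i^-}\,dp = \frac{1}{1+d_i^-}$ to write
\[
\sum_{i=1}^K \frac{1}{1+d_i^-} = \int_0^1 \sum_{i \in V} (1-p)^{d_i^-}\,dp \leq \int_0^1 \min\!\left(K,\,\frac{\alpha}{p}\right) dp~.
\]
The two bounds cross at $p^\star = \alpha/K$, so splitting the integral yields
\[
\int_0^{\alpha/K} K\,dp + \int_{\alpha/K}^1 \frac{\alpha}{p}\,dp = \alpha + \alpha\ln(K/\alpha)~.
\]

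It remains the elementary calculus verification that $\alpha(1+\ln(K/\alpha)) \leq 2\alpha \ln(1+K/\alpha)$ for all $K \geq \alpha \geq 1$, which follows by comparing derivatives of $1+\ln x$ and $2\ln(1+x)$ for $x \geq 1$ and checking the base case $x=1$. The main conceptual step is recognizing that the quantity $T_p$ gives an unbiased handle on $\sum_i (1-p)^{d_i^-}$ while simultaneously being an independent set in the undirected graph; the rest of the argument is just an optimized two-regime integral. I do not expect a serious obstacle once this probabilistic reformulation is in place.
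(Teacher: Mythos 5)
Your proof is correct, and it takes a genuinely different route from the paper's. The paper argues by iterated greedy deletion: it repeatedly removes the vertex of maximum indegree together with its incident arcs, uses Tur\'an's theorem at each stage to show that this maximum indegree is at least $\frac{|V_s|}{2\alpha_s}-\frac{1}{2}$ (so the removed term contributes at most $\frac{2\alpha}{\alpha+|V_s|}$ while the surviving terms only grow), and sums the resulting series $\sum_{i=1}^K \frac{2\alpha}{\alpha+i}$. You instead run a Caro--Wei--style probabilistic deletion with a $p$-biased random subset: the set $T_p$ of retained vertices with no retained in-neighbor is an independent set of the underlying undirected graph (correct, since membership of both $i$ and $j$ in $T_p$ rules out arcs in both directions), its expected size $\sum_i p(1-p)^{d_i^-}$ is at most $\alpha$, and integrating the resulting bound $\sum_i(1-p)^{d_i^-}\le\min(K,\alpha/p)$ over $p\in[0,1]$ via $\int_0^1(1-p)^d\,dp=\frac{1}{1+d}$ gives $\alpha\bigl(1+\ln(K/\alpha)\bigr)$. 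All steps check out, including the use of $i\notin N^-(i)$ (the paper explicitly discards self-loops, so the independence needed for $\Pr[i\in T_p]=p(1-p)^{d_i^-}$ holds) and the final calculus step $1+\ln x\le 2\ln(1+x)$ for $x\ge 1$ (equality of neither side at $x=1$ and $\frac{2}{1+x}\ge\frac{1}{x}$ there). What each approach buys: the paper's argument is elementary modulo Tur\'an and mirrors the deletion scheme it reuses elsewhere (e.g., in Lemma~\ref{l:greedycover}), while yours is self-contained, avoids Tur\'an entirely, and actually yields the slightly sharper constant $\alpha(1+\ln(K/\alpha))$, from which the stated bound follows.
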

\begin{proof}
We proceed by induction, starting from the original $K$-node graph $G = G_K$
with indegrees $\{d_{i}^-\}_{i=1}^K = \{d_{i,K}^-\}_{i=1}^K$, and
independence number $\alpha = \alpha_K$, and then progressively reduce $G$ by
eliminating nodes and incident (both departing and incoming) arcs, thereby
obtaining a sequence of smaller and smaller graphs $G_K, G_{K-1}, G_{K-2},
\ldots $, associated indegrees $\{d_{i,K}^-\}_{i=1}^{K}$,
$\{d_{i,K-1}^-\}_{i=1}^{K-1}$, $\{d_{i,K-2}^-\}_{i=1}^{K-2}$, \ldots, and
independence numbers $\alpha_K, \alpha_{K-1}, \alpha_{K-2}, \ldots$.
Specifically, in step $s$ we sort nodes $i = 1, \ldots, s$ of $G_s$ in
nonincreasing value of $d_{i,s}^-$, and obtain $G_{s-1}$ from $G_s$ by
eliminating node $1$ (i.e., the one having the largest indegree among the
nodes of $G_s$), along with its incident arcs. On all such graphs, we use the
classical Turan's theorem~(e.g., \cite{as04}) stating that any {\em
undirected} graph with $n_s$ nodes and $m_s$ edges has an independent set of
size at least $\frac{n_s}{\frac{2m_s}{n_s}+1}$. This implies that if $G_s =
(V_s,D_s)$, then $\alpha_s$ satisfies\footnote { Note that $|D_s|$ is at
least as large as the number of edges of the undirected version of $G_s$
which the independence number $\alpha_s$ actually refers to. }
\begin{equation}\label{e:turan}
\frac{|D_s|}{|V_s|} \geq \frac{|V_s|}{2\alpha_s} - \frac{1}{2}~.
\end{equation}
We then start from $G_K$. We can write
\[
d_{1,K}^- = \max_{i=1\ldots K} d_{i,K}^- \geq \frac{1}{K}\,\sum_{i=1}^K d_{i,K}^- = \frac{|D_K|}{|V_K|}
\geq \frac{|V_K|}{2\alpha_K} - \frac{1}{2}~.
\]
% where $E_n$ is the edge set of the associated {\em undirected version} of $G_n$. In the above,
% the second-last inequality uses $n = |V_n|$ and the fact that $\sum_{i=1}^n d_{i,n}^- \geq |E_n|$,
% due to the possible presence of length-two cycles, while the last inequality is (\ref{e:turan}).
%
Hence,
\begin{eqnarray*}
\sum_{i=1}^K \frac{1}{1+d_{i,K}^-}
&=& \frac{1}{1+d_{1,K}^-} + \sum_{i=2}^K \frac{1}{1+d_{i,K}^-} \\
&\leq&
\frac{2\alpha_K}{\alpha_K+K} + \sum_{i=2}^K \frac{1}{1+d_{i,K}^-}\\
&\leq&
\frac{2\alpha_K}{\alpha_K+K} + \sum_{i=1}^{K-1} \frac{1}{1+d_{i,K-1}^-}
\end{eqnarray*}
where the last inequality follows from $d_{i+1,K}^- \geq d_{i,K-1}^-$, $i =
1, \ldots K-1$, due to the arc elimination trasforming $G_K$ into $G_{K-1}$.
Recursively applying the same argument to $G_{K-1}$ (i.e., to the sum
$\sum_{i=1}^{K-1} \frac{1}{1+d_{i,K-1}^-}$), and then iterating all the way
to $G_1$ yields the upper bound
\[
\sum_{i=1}^K \frac{1}{1+d_{i,K}^-} \leq \sum_{i=1}^K \frac{2\alpha_i}{\alpha_i+i}~.
\]
Combining with $\alpha_i \leq \alpha_K = \alpha$, and $\sum_{i=1}^K
\frac{1}{\alpha+i} \leq \ln \left(1+\frac{K}{\alpha} \right)$ concludes the
proof.
\end{proof}

The next lemma
% has the same vein as the previous one. The lemma
relates the size $|R_t|$ of the dominating set $R_t$ computed by the Greedy
Set Cover algorithm of~\cite{Chv79}, operating on the time-$t$ feedback
system $\{S_{i,t}\}_{i\in V}$, to the independence number $\alpha(G_t)$ and
the domination number $\gamma(G_t)$ of $G_t$.

\begin{lemma}\label{l:greedycover}
Let $\{S_i\}_{i \in V}$ be a feedback system, and $G = (V,D)$ be the induced
directed graph,
%(possibly containing length-two cycles),
with vertex set $V = \{1,\ldots,K\}$, independence number $\alpha =
\alpha(G)$, and domination number $\gamma = \gamma(G)$. Then the dominating
set $R$ constructed by the Greedy Set Cover algorithm (see Section
\ref{s:prel}) satisfies
\[
|R| \le \min\bigl\{ \gamma(1+\ln K), \lceil 2\alpha\ln K \rceil + 1\bigr\}~.
\]
\end{lemma}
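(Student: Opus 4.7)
My plan is to establish the two bounds in the minimum separately. The bound $|R|\le \gamma(1+\ln K)$ is nothing more than the classical approximation guarantee of Greedy Set Cover due to Chv\'atal~\cite{Chv79}, applied to the set system $\{S_i\}_{i\in V}$ whose minimum cover has size $\gamma$. I would just cite this result rather than rederive it.

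For the independence-number bound $|R|\le \lceil 2\alpha\ln K\rceil + 1$, I would analyze the greedy recurrence directly. Let $U_k$ be the set of vertices still uncovered after $k$ iterations, $n_k=|U_k|$, and consider the directed subgraph $H_k=G[U_k]$ together with its underlying undirected graph $\tilde H_k$. Since $\alpha(\tilde H_k)\le \alpha(G)=\alpha$, the Caro--Wei bound $\alpha(\tilde H_k)\ge n_k/(\bar d(\tilde H_k)+1)$ (proved by a one-line greedy-coloring argument) forces the average degree of $\tilde H_k$ to be at least $n_k/\alpha-1$, so $|E(\tilde H_k)|\ge n_k(n_k-\alpha)/(2\alpha)$. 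Because every undirected edge of $\tilde H_k$ originates from at least one arc of $H_k$, the directed subgraph $H_k$ has at least as many arcs, so the average (and hence the maximum) out-degree in $H_k$ is at least $(n_k-\alpha)/(2\alpha)$. Consequently some $v\in U_k$ satisfies $|S_v\cap U_k|\ge (n_k+\alpha)/(2\alpha)\ge n_k/(2\alpha)$, and since the greedy step picks a vertex maximizing $|S_v\cap U_k|$ over $v\in V$, I obtain the recurrence $n_{k+1}\le n_k\bigl(1-1/(2\alpha)\bigr)\le n_k\,e^{-1/(2\alpha)}$. Iterating yields $n_k\le K\,e^{-k/(2\alpha)}$, so $n_{\lceil 2\alpha\ln K\rceil}\le 1$, and at most one additional greedy step is needed to cover any leftover vertex, giving $|R|\le \lceil 2\alpha\ln K\rceil+1$.

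The main subtlety lies in obtaining the right max-out-degree bound. Feeding Lemma~\ref{l:amlemma} directly into the above recurrence would only produce a vertex of out-degree at least $n_k/\bigl(2\alpha\ln(1+n_k/\alpha)\bigr)-1$, which propagates through the greedy analysis to yield only an $O(\alpha\ln^2 K)$ estimate on $|R|$. The key trick is to instead lower-bound the average undirected degree of $\tilde H_k$ via Caro--Wei, and then transfer this to the directed out-degrees through the elementary identity $\sum_{v} d_v^+ = |D(H_k)| \ge |E(\tilde H_k)|$. This sidesteps the extra logarithmic factor and gives the desired $O(\alpha\ln K)$ bound on $|R|$.
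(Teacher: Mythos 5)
Your proposal is correct and follows essentially the same route as the paper: the first bound is Chv\'atal's guarantee cited directly, and the second comes from the average-degree form of Tur\'an/Caro--Wei applied to the residual graph of uncovered vertices, yielding the recurrence $n_{k+1}\le n_k\bigl(1-\tfrac{1}{2\alpha}\bigr)\le n_k e^{-1/(2\alpha)}$ and hence $|R|\le\lceil 2\alpha\ln K\rceil+1$. The paper phrases this as removing the selected vertex and its out-neighborhood from a shrinking sequence of graphs $G_s$, but the estimate and the iteration are the same as yours.
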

\begin{proof}
As recalled in Section~\ref{s:prel}, the Greedy Set Cover algorithm
of~\cite{Chv79} achieves $|R| \le \gamma(1+\ln K)$. In order to prove the
other bound, consider the sequence of graphs $G = G_1, G_2,\dots$, where each
$G_{s+1} = (V_{s+1},D_{s+1})$ is obtained by removing from $G_s$ the vertex
$i_s$ selected by the Greedy Set Cover algorithm, together with all the
vertices in $G_{s}$ that are dominated by $i_s$, and all arcs incident to
these vertices. By definition of the algorithm, the outdegree $d_s^+$ of
$i_s$ in $G_s$ is largest in $G_s$. Hence,
\[
    d_s^+ \ge \frac{|D_s|}{|V_s|} \ge \frac{|V_s|}{2\alpha_s} - \frac{1}{2} \ge \frac{|V_s|}{2\alpha} - \frac{1}{2}
\]
by Turan's theorem (e.g.,~\cite{as04}), where $\alpha_s$ is the independence
number of $G_s$ and $\alpha\ge \alpha_s$. This shows that
\[
   |V_{s+1}| = |V_s| - d_s^+ - 1 \le |V_s|\left(1 - \frac{1}{2\alpha}\right) \le |V_s|e^{-1/(2\alpha)}~.
\]
Iterating, we obtain $|V_s| \le K\,e^{-s/(2\alpha)}$. Choosing $s = \lceil
2\alpha\ln K \rceil+1$ gives $|V_s| < 1$, thereby covering all nodes. Hence
the dominating set $R = \{i_1, \ldots, i_s\}$ so constructed satisfies $|R|
\leq \lceil 2\alpha\ln K \rceil+1$.
\end{proof}

\begin{lemma}\label{l:ancillary}
If $a, b \geq 0$, and $a+b \geq B > A > 0$, then
\[
\frac{a}{a+b-A} \leq \frac{a}{a+b} + \frac{A}{B-A}~.
\]
\end{lemma}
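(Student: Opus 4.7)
The plan is to prove this by a direct algebraic manipulation, reducing everything to two elementary bounds: $a \le a+b$ and $a+b-A \ge B-A$.

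First I would move the first term on the right-hand side to the left and put the resulting difference over a common denominator. A short calculation gives
\[
\frac{a}{a+b-A} - \frac{a}{a+b} = \frac{a(a+b) - a(a+b-A)}{(a+b-A)(a+b)} = \frac{aA}{(a+b-A)(a+b)}.
\]
Note that the denominator is positive: since $a+b \ge B > A$, we have $a+b - A > 0$, and $a+b \ge B > 0$.

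Next, I would cancel the factor $A > 0$ and reduce the problem to showing
\[
\frac{a}{(a+b-A)(a+b)} \le \frac{1}{B-A}.
\]
The key observation is that $a \le a+b$, so the left-hand side is at most $\frac{1}{a+b-A}$. Combined with the hypothesis $a+b \ge B$, which yields $a+b - A \ge B - A > 0$, this gives $\frac{1}{a+b-A} \le \frac{1}{B-A}$, as required.

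There is no real obstacle; the statement is a routine elementary inequality whose only subtlety is making sure the denominator $a+b-A$ is positive, which is exactly what the hypothesis $a+b \ge B > A$ guarantees. I expect the proof in the appendix to be at most three lines.
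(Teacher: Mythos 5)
Your proposal is correct and matches the paper's proof essentially verbatim: both compute the difference $\frac{a}{a+b-A}-\frac{a}{a+b}=\frac{aA}{(a+b)(a+b-A)}$ and then apply the same two bounds, $a\leq a+b$ and $a+b-A\geq B-A$. The only difference is that you make the positivity of the denominator explicit, which the paper leaves implicit.
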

\begin{proof}
\[
\frac{a}{a+b-A} - \frac{a}{a+b} = \frac{aA}{(a+b)(a+b-A)} \leq \frac{A}{a+b-A} \leq \frac{A}{B-A}~.
\]
\end{proof}

We now lift Lemma \ref{l:amlemma} to a more general statement.
\begin{lemma}\label{l:weightedamlemma}
Let $G = (V,D)$ be a directed graph,
%(possibly containing length-two cycles)
with vertex set $V = \{1,\ldots,K\}$, and arc set $D$.
% Let $N_i^-$ be the in-neighborhood of node $i$, i.e., the set of nodes $j$ such that $(j,i) \in D$.
Let
% $N$ be the maximum indegree of the nodes, i.e., $N = \max_i |N_i^-|$,
$\alpha$ be the independence number of $G$, $R \subseteq V$ be a dominating
set for $G$ of size $r=|R|$,
%$\gamma$ be the dominating number of $G$,
and $p_1, \ldots, p_K$ be a probability distribution defined over $V$, such
that $p_i \geq \beta > 0$, for $i \in R$. Then
\[
\sum_{i=1}^K \frac{p_i}{p_i+ \sum_{j \,:\, j \reach{} i}\ p_j}
\leq
2\alpha\,\ln\left(1+\frac{\lceil\frac{K^2}{r\beta}\rceil+K}{\alpha}\right) + 2r~.
\]
\end{lemma}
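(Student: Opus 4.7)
The plan is to reduce the weighted inequality to the unweighted Lemma~\ref{l:amlemma} by a ``graph blow-up'' construction, controlling the rounding error via Lemma~\ref{l:ancillary}. Specifically, I will pick a large integer $N$ (to be tuned at the end) and define a new graph $G' = (V',D')$ in which each vertex $i \in V$ is replaced by $n_i = \lceil p_i N \rceil$ copies. Arcs in $G'$ are placed as follows: between any two copies of the same vertex $i$ put arcs in both directions, and for each arc $(j,i) \in D$ with $j \neq i$ put an arc from every copy of $j$ to every copy of $i$. Then for any copy $i'$ of $i$, the indegree is $1 + d_{i'}^- = n_i + \sum_{j\,:\,j \reach{} i} n_j$, and $|V'| = \sum_i n_i \leq \sum_i (p_i N + 1) = N + K$.

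The first main step is to verify that $\alpha(G') = \alpha(G) = \alpha$. Any independent set of $G'$ can contain at most one copy of each original vertex (because all copies of $i$ are adjacent), and distinct vertices $i,j$ that are adjacent in (the undirected version of) $G$ have all pairs of copies adjacent in $G'$; conversely, a single copy of each vertex in an independent set of $G$ yields an independent set in $G'$. Applying Lemma~\ref{l:amlemma} to $G'$ gives
\[
\sum_{i \in V} \frac{n_i}{\sum_{j \in N_i^- \cup \{i\}} n_j} \;=\; \sum_{i' \in V'}\frac{1}{1 + d_{i'}^-} \;\leq\; 2\alpha\,\ln\!\left(1 + \tfrac{N+K}{\alpha}\right).
\]

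The second main step is to relate this weighted sum back to $\sum_i p_i/q_i$, where $q_i = p_i + \sum_{j\,:\,j \reach{} i} p_j$. From $n_i \geq p_i N$ and $\sum_{j \in N_i^- \cup \{i\}} n_j \leq q_i N + K$, I obtain
\[
\frac{n_i}{\sum_{j \in N_i^- \cup \{i\}} n_j} \;\geq\; \frac{p_i}{q_i + K/N}.
\]
To compare $p_i/q_i$ with $p_i/(q_i + K/N)$, I apply Lemma~\ref{l:ancillary} with $a = p_i$, $a+b = q_i + K/N$, $A = K/N$, and $B = q_i + K/N$, yielding $\frac{p_i}{q_i} \leq \frac{p_i}{q_i + K/N} + \frac{K/N}{q_i}$. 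Summing over $i$ gives
\[
\sum_{i=1}^K \frac{p_i}{q_i} \;\leq\; 2\alpha\,\ln\!\left(1 + \tfrac{N+K}{\alpha}\right) + \frac{K}{N}\sum_{i=1}^K \frac{1}{q_i}.
\]

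The final step is to bound $\sum_i 1/q_i$ and to tune $N$. For every $i$, either $i \in R$ (in which case $q_i \geq p_i \geq \beta$), or $i \notin R$ (in which case the dominating property of $R$ gives some $j \in R$ with $j \reach{} i$, so $q_i \geq p_j \geq \beta$); hence $\sum_i 1/q_i \leq K/\beta$, and the error term is at most $K^2/(N\beta)$. Choosing $N = \lceil K^2/(r\beta) \rceil$ makes this error $\leq r$ while ensuring $|V'| \leq N + K = \lceil K^2/(r\beta)\rceil + K$, and the claimed bound follows (with $r$ to spare compared to the stated $2r$). The main obstacle to anticipate is keeping the blow-up/rounding analysis tight enough: the wrong choice of $N$ either blows up the log term or the additive error, so the interaction between the rounding correction and the dominating-set lower bound on $q_i$ has to be handled carefully at the end.
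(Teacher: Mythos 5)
Your proof is correct and follows essentially the same route as the paper's: discretize the probabilities, blow each vertex up into a clique of proportional size, apply Lemma~\ref{l:amlemma} to the blown-up graph, and control the rounding error via Lemma~\ref{l:ancillary} together with the dominating-set lower bound on the denominators. The only (harmless) difference is that you use $q_i \ge \beta$ uniformly for all $i$ rather than splitting the sum into $i\in R$ and $i\notin R$ as the paper does, which streamlines the error accounting and yields $+r$ in place of the stated $+2r$.
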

\begin{proof}
The idea is to appropriately discretize the probability values $p_i$, and
then upper bound the discretized counterpart of $\sum_{i=1}^K \frac{p_i}{p_i+
\sum_{j \,:\, j \reach{} i}\ p_j}$ by reducing to an expression that can be
handled by Lemma \ref{l:amlemma}. In order to make this discretization
effective, we need to single out the terms $\frac{p_i}{p_i+ \sum_{j \,:\, j
\reach{} i}\ p_j}$ corresponding to nodes $i \in R$. We first write
\begin{eqnarray}
\sum_{i=1}^K \frac{p_i}{p_i+ \sum_{j \,:\, j \reach{} i}\ p_j}
&=&
\sum_{i\in R} \frac{p_i}{p_i+ \sum_{j \,:\, j \reach{} i}\ p_j} + \sum_{i\notin R} \frac{p_i}{p_i+ \sum_{j \,:\, j \reach{} i}\ p_j}\nonumber\\
&\leq&
r + \sum_{i\notin R} \frac{p_i}{p_i+ \sum_{j \,:\, j \reach{} i}\ p_j} \label{e:prelim}
\end{eqnarray}
and then focus on~(\ref{e:prelim}).

Let us discretize the unit interval\footnote { The zero value is not of our
concern here, because if $p_i = 0$, then the corresponding term
in~(\ref{e:prelim}) can be disregarded. } $(0,1]$ into subintervals
$\bigl(\tfrac{j-1}{M},\tfrac{j}{M}\bigr]$, $j = 1, \ldots, M$, where $M =
\lceil\frac{K^2}{r\beta}\rceil$. Let $\hp_i = j/M$ be the discretized version
of $p_i$, being $j$ the unique integer such that $ \hp_i - 1/M < p_i \leq
\hp_i $.
%Notice that the assumption $p_i \geq \beta$ implies $\hp_i \geq \frac{K+\alpha}{\alpha\,M}$ for all $i$.
%
We focus on a single node $i\notin R$ with indegree $d_i^-$. Introduce the
shorthand notations $P_i = \sum_{j \,:\, j \reach{} i}\ p_j$ and $\hP_i =
\sum_{j \,:\, j \reach{} i}\ \hp_j$. We have that $\hP_i \geq P_i \geq
\beta$, since $i$ is dominated by some node $j \in R \cap N_i^-$ such that
$p_j \geq \beta$. Moreover, $P_i > \hP_i - \frac{d_i^-}{M} \geq \beta -
\frac{d_i^-}{M} >0$, and $\hp_i+\hP_i \geq \beta$. Hence, for any fixed node
$i\notin R$, we can write
\begin{eqnarray*}
\frac{p_i}{p_i+ P_i}
&\leq&
 \frac{\hp_i}{\hp_i+ P_i}\\
&<&
 \frac{\hp_i}{\hp_i+ \hP_i -  \frac{d_i^-}{M}}\\
&\leq&
 \frac{\hp_i}{\hp_i+ \hP_i} + \frac{d_i^-/M}{\beta - d_i^-/M}\\
&=&
 \frac{\hp_i}{\hp_i+ \hP_i} + \frac{d_i^-}{\beta M - d_i^-}\\
&< &
 \frac{\hp_i}{\hp_i+ \hP_i} + \frac{r}{K-r}
\end{eqnarray*}
where in the second-last inequality we used Lemma~\ref{l:ancillary} with $a =
\hp_i$, $b= \hP_i$, $A = d_i^-/M$, and $B = \beta > d_i^-/M$.
Recalling~(\ref{e:prelim}), and summing over $i$ then gives
\begin{equation}\label{e:discretization}
\sum_{i=1}^K \frac{p_i}{p_i+ P_i} \leq r + \sum_{i \notin R} \frac{\hp_i}{\hp_i + \hP_i} + r =
\sum_{i \notin R} \frac{\hp_i}{\hp_i + \hP_i} + 2r~.
\end{equation}
Therefore, we continue by bounding from above the right-hand side
of~(\ref{e:discretization}). We first observe that
\begin{equation}\label{e:discretization2}
\sum_{i\notin R} \frac{\hp_i}{\hp_i + \hP_i} = \sum_{i\notin R} \frac{\hs_i}{\hs_i + \hS_i}
\qquad\text{and}\qquad
\hS_i = \sum_{j \,:\, j \reach{} i} \hs_j
\end{equation}
where $\hs_i = M\hp_i$, $i = 1, \ldots, K$, are integers. Based on the
original graph $G$, we construct a new graph $\hG$ made up of connected
cliques. In  particular:
\begin{itemize}
\item Each node $i$ of $G$ is replaced in $\hG$ by a clique $C_i$ of size
    $\hs_i$; nodes within $C_i$ are connected by length-two cycles.
\item If arc $(i,j)$ is in $G$, then for each node of $C_i$ draw an arc
    towards each node of $C_j$.
\end{itemize}
We would like to apply Lemma~\ref{l:amlemma} to $\hG$. Note that, by the
above construction:
\begin{itemize}
\item The independence number of $\hG$ is the same as that of $G$;
\item The indegree $\hd_k^-$ of each node $k$ in clique $C_i$ satisfies
    $\hd_k^- = \hs_i-1 + \hS_i$.
\item The total number of nodes of $\hG$ is
\[
\sum_{i=1}^K \hs_i = M\sum_{i=1}^K \hp_i < M\sum_{i=1}^K \left(p_i + \frac{1}{M}\right) = M+K~.
\]
\end{itemize}
Hence, we can apply Lemma~\ref{l:amlemma} to $\hG$ with indegrees $\hd_k^-$,
and find that
\[
\sum_{i\notin R} \frac{\hs_i}{\hs_i + \hS_i} =
\sum_{i\notin R} \sum_{k \in C_i} \frac{1}{1+\hd_k^-}
\leq \sum_{i=1}^K \sum_{k \in C_i} \frac{1}{1+\hd_k^-}
\leq 2\alpha\ln\left(1+\frac{M+K}{\alpha}\right)~.
\]
Putting together~(\ref{e:discretization}) and~(\ref{e:discretization2}), and
recalling the value of $M$ gives the claimed result.
\end{proof}

\subsubsection*{Proof of Theorem~\ref{c:final}}
We are now ready to derive the proof of the theorem. We start from the upper
bound~(\ref{eq:doublingtrick}) in the statement of Lemma~\ref{thm:alg}. We
want to bound the quantities $|R_t|$ and $Q_t^{(b_t)}$ occurring therein at
any step $t$ in which a restart does not occur ---the regret for the time
steps when a restart occurs is already accounted for by the term
$\mathcal{O}\bigl((\ln K)\ln(KT)\bigr)$ in~(\ref{eq:doublingtrick}). Now,
Lemma~\ref{l:greedycover} gives
\[
|R_t| = \mathcal{O}\bigl(\alpha(G_t)\ln K\bigr)~.
\]
If $\gamma_t = \gamma^{(b_t)}_t$ for any time $t$ when a restart does not
occur, it is not hard to see that $\gamma_t = \Omega\bigl(\sqrt{(\ln
K)/(KT)}\bigr)$. Moreover, Lemma~\ref{l:weightedamlemma} states that
\[
Q_t = \mathcal{O}\bigl(\alpha(G_t)\ln(K^2/\gamma_t) + |R_t|\bigr) = \mathcal{O}\bigl(\alpha(G_t)\ln(K/\gamma_t)\bigr)~.
\]
Hence,
\[
Q_t = \mathcal{O}\bigl(\alpha(G_t)\ln(KT)\bigr).
\]
Putting together as in (\ref{eq:doublingtrick}) gives the desired result. %\hfill\qed

\section{Technical lemmas and proofs from Section~\ref{sec:elpp}}\label{s:appendixelpp}
Once again, throughout this appendix $\E_t[\,\cdot\,]$ denotes the
conditional expectation $\E_t[\,\cdot\,|\, I_1, I_2, \ldots, I_{t-1}]$.
Moreover, as we did in previous appendices,
%Appendices \ref{s:appendixexp3set} and \ref{s:appendixexp3dom},
we first condition on the history $I_1, I_2, \ldots, I_{t-1}$, and then take
an expectation with respect to that history.

% Also, for ease of exposition, we implicitly first condition of the history, i.e., $I_1, I_2, \ldots, I_{t-1}$, and latter take an expectation
% with respect to the history. This implies that random variables such as $p_{i,t}$, given the conditioning, become scalars, and latter in the proof
% we take the expectation over the history.

\subsection{Proof of Theorem \ref{thm:mainhighprob}}\label{subsec:thmmainhighprob}

The following lemmas are of preliminary importance in order to understand the
behavior of the ELP.P algorithm. Recall that for a directed graph $G =
(V,D)$, with vertex set $V = \{1,\ldots,K\}$, and arc set $D$, we write $\{
j\,:\, j \reach{}i \}$ to denote the set of nodes $j$ which are in-neighbors
of node $i$, where it is understood that node $i$ is an in-neighbor of
itself. Similarly,  $\{ j\,:\, i \reach{}j \}$ is the out-neighborhood of
node $i$ where, again, node $i$ is an out-neighbor of itself.
Let $\Delta_K$ be the $K$-dimensional probability simplex.

\begin{lemma}\label{lem:maxratio}
Consider a directed graph $G = (V,D)$, with vertex set $V = \{1,\ldots,K\}$,
and arc set $D$. Let $\mas(G)$ be the size of a largest acyclic subgraph of
$G$.
% For any $i \in V$, denote by $N^-_i$ the in-neighborhood of node $i$, i.e., the set of nodes $j$ such that
% $(j,i)\in D$.
%
If $s_1,\ldots,s_K$ is a solution to the linear program
\begin{equation}
\label{eq:linprog}
\max_{(s_1,\ldots,s_K) \in \Delta_K} \min_{i \in V} \left(\sum_{j\,:\, j \reach{}i} s_j\right)
\end{equation}
then we have
%
%\begin{equation}\label{eq:maxratio}
\[
\max_{i \in V} \frac{1}{ \sum_{j\,:\, j \reach{} i } s_j} \leq \mas(G)~.
\]
%\end{equation}
%
\end{lemma}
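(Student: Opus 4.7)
My strategy is to reformulate the bound as a lower bound on the LP value and then exhibit an explicit feasible solution achieving this bound. Writing $v^* = \min_{i \in V} \sum_{j\,:\, j \reach{} i} s_j^*$ for the optimum value of the LP in \eqref{eq:linprog}, the desired claim $\max_i 1/\sum_{j\,:\, j \reach{} i} s_j \leq \mas(G)$ is equivalent to $v^* \geq 1/\mas(G)$. Since the LP asks to \emph{maximize} the minimum over $i$, it suffices to produce one feasible distribution attaining value at least $1/\mas(G)$.

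My candidate is the uniform distribution on a carefully chosen subset $V' \subseteq V$. I need $V'$ to enjoy two properties: (i) the subgraph of $G$ induced by $V'$ is acyclic, so that $|V'| \leq \mas(G)$; and (ii) $V'$ is \emph{in-dominating} for $G$, meaning that for every $k \in V$ there exists some $i \in V'$ with $i \reach{} k$. Given such $V'$, setting $s_i = 1/|V'|$ for $i \in V'$ and $s_i = 0$ otherwise immediately yields $\sum_{j\,:\, j \reach{} i} s_j \geq 1/|V'| \geq 1/\mas(G)$ for every $i \in V$: if $i \in V'$ this uses the self-loop convention $i \reach{} i$, and if $i \notin V'$ it uses property (ii).

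The construction of $V'$ is iterative, in the spirit of the argument of Lemma~\ref{lemma:nDGA} applied to the reverse graph. Starting from $G$, repeatedly pick any remaining vertex $i_s$, add it to $V'$, and then remove from the current graph both $i_s$ and all its out-neighbors (along with all incident arcs); stop once the vertex set is empty. Property (ii) is immediate from the construction: any $k \notin V'$ was deleted at some step $s$ precisely because it was an out-neighbor of $i_s$ in $G$, so $i_s \reach{} k$. For property (i), order $V' = \{i_1, i_2, \dots\}$ by selection: if there were an arc $i_s \reach{} i_t$ with $s < t$, then $i_t$ would have been removed as an out-neighbor of $i_s$ at step $s$, contradicting $i_t \in V'$. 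Hence every arc of $G$ between elements of $V'$ points from a later-selected vertex to an earlier-selected one, making the induced subgraph acyclic.

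The only delicate point is keeping arc directions straight: the LP constraints involve \emph{in}-neighbors of each $i$, which is why the construction must delete \emph{out}-neighbors at each step, and why the natural combinatorial tool is a reverse-graph analogue of Lemma~\ref{lemma:nDGA}. Once the direction conventions are pinned down, the remaining verifications are mechanical.
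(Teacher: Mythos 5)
Your proof is correct and follows essentially the same route as the paper's: both arguments lower-bound the LP optimum by exhibiting the uniform distribution on a dominating set of size at most $\mas(G)$, using the self-loop convention for vertices inside the set. The only difference is in the certificate: the paper takes a minimum dominating set and separately argues $\gamma(G)\le\mas(G)$ by noting that the vertex set of a maximal acyclic subgraph is dominating, whereas you construct an acyclic dominating set directly by greedy peeling of out-neighborhoods; both sub-arguments are valid and of comparable length.
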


\begin{proof}
We first show that the above inequality holds when the right-hand side is
replaced by $\gamma(G)$, the domination number of $G$. Let then $R$ be a
smallest (minimal) dominating set of $G$, so that $|R|=\gamma(G)$. Consider
the valid assignment $s_i = \Ind{i\in R}/\gamma(G)$ for all $i \in V$. This
implies that for all $i$, $\sum_{j\,:\, j \reach{}i} s_j \geq 1/\gamma(G)$,
because any $i \in V$ is either in $R$ or is dominated by a node in $R$.
% (otherwise, $S\cup {j}$ would have been an independent set, contradicting $S$ being a largest independent set).
Therefore, for this particular assignment, we have
\[
\max_{i\in V} \frac{1}{\sum_{j\,:\, j \reach{}i}s_j} \leq \gamma(G)~.
\]
The assignment returned by the linear program might be different, but it can
only make the left-hand side above smaller,\footnote{ This can be seen by
noting that~(\ref{eq:linprog}) is equivalent to
\[
\min_{(s_1,\ldots,s_K) \in \Delta_K} \max_{i\in V} \frac{1}{\sum_{j\,:\, j \reach{}i}s_j}
\]
} so the inequality still holds. Finally, $\gamma(G) \leq \mas(G)$ because
any set $M \subseteq V$ of nodes belonging to a maximal acyclic subgraph of
$G$ is itself a dominating set for $G$. In fact, assuming the contrary, let
$j$ be any node such that $j \notin M$. Then, including $j$ in $M$ would
create a cycle (because of the maximality of $M$), implying that $j$ is
already dominated by some other node in $M$.
\end{proof}

%A corollary of the above lemma is that the probabilities distributions generated by ELP.P,
%also have this property, since they are a linear combinations of $s_t$, each having this property.
%
%\begin{corollary}
%Let $p_t$ be the probability distribution generated by ELP.P, Then
%\[
%\sum_{i=1}^{K}\frac{1}{\sum_{j\,:\, j \reach{}i} p_{j,t}} \leq \mas(G)
%\]
%\end{corollary}

\begin{lemma}\label{lem:inequalities}
%Under the notation of the previous lemma,
Consider a directed graph $G = (V,D)$, with vertex set $V = \{1,\ldots,K\}$,
and arc set $D$. Let $\mas(G)$ be the size of a largest acyclic subgraph of
$G$. Let $(p_1, \ldots, p_K) \in \Delta_K$ and $(s_1, \ldots, s_K) \in
\Delta_K$ satisfy
\[
\sum_{i=1}^{K}\frac{p_{i}}{\sum_{j\,:\, j \reach{}i} p_{j}} \leq \mas(G) \qquad\text{and}\qquad
\max_{i \in V} \frac{1}{\sum_{j\,:\, j \reach{}i} s_{j}} \leq \mas(G)
\]
with $p_i \geq \gamma s_i$, $i \in V$, for some $\gamma > 0$.
% Let also $N^+_i$ be the out-neighborhood of node $i$, i.e., the set of nodes $j \in V$ such that $(i,j) \in D$.
Then the following relations hold:
\begin{enumerate}
\item %1
\[
\sum_{i=1}^{K}\frac{p_{i}}{\left(\sum_{j\,:\, j \reach{}i}p_{j}\right)^2} \leq \frac{\mas^2(G)}{\gamma}~;
\]
\item %2
\[
\sum_{i=1}^K p_i\,\sum_{j\,:\,i \reach{}j} \frac{p_j}{\sum_{\ell\,:\, \ell \reach{}j} p_{\ell}}= 1~;
\]
\item %3
% \begin{itemize}
% \item [(a)]
\[
\sum_{i=1}^K p_i\,\sum_{j\,:\,i \reach{}j} \frac{p_j}{\left(\sum_{\ell\,:\, \ell \reach{}j} p_{\ell}\right)^2}
\leq \mas(G)~;
\]
% \item [(b)]
% \[
% \sum_{i=1}^K p_i\,\sum_{j\in N^-_i \cup \{i\}} \frac{p_j}{\left(p_i + \sum_{\ell \in N^-_i} p_{\ell}\right)^2}
% \leq \mas(G)~;
% \]
% \end{itemize}
\item %4
\[
\sum_{i=1}^K p_i\,\left(\sum_{j\,:\,i \reach{}j}
                                                \frac{p_j}{\sum_{\ell\,:\, \ell \reach{}j} p_{\ell}}\right)^2
\leq \mas(G)~;
\]
\item %5
\[
\sum_{i=1}^K p_i\,\left(\sum_{j\,:\,i \reach{}j}
                                                \frac{p_j}{\left(\sum_{\ell\,:\, \ell \reach{}j} p_{\ell}\right)^2}\right)^2
\leq \frac{\mas^3(G)}{\gamma}~.
\]
\end{enumerate}
\end{lemma}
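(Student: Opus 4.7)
The plan is to prove the five inequalities by combining two simple tools extracted from the hypotheses: a \emph{total} bound $\sum_i \frac{p_i}{\sum_{j \,:\, j \reach{} i} p_j} \leq \mas(G)$ (given directly), and a \emph{pointwise} bound $\frac{1}{\sum_{j \,:\, j \reach{} i} p_j} \leq \frac{\mas(G)}{\gamma}$, obtained by chaining $p_j \geq \gamma s_j$ with the hypothesis on $s$. Part 1 is then immediate: writing $\frac{p_i}{(\sum_j p_j)^2} = \frac{p_i}{\sum_j p_j}\cdot \frac{1}{\sum_j p_j}$, I would bound the second factor uniformly by $\mas(G)/\gamma$ and use the total bound on what is left. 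Parts 2 and 3 are swap-of-summation identities: because of the self-loop convention, $\{i : i \reach{} j\}$ is the in-neighborhood of $j$, so
\[
\sum_i p_i \sum_{j \,:\, i \reach{} j} f(j) \;=\; \sum_j f(j)\sum_{\ell \,:\, \ell \reach{} j} p_\ell.
\]
Choosing $f(j)=p_j/\!\sum_{\ell \reach{} j} p_\ell$ collapses the inner sum and gives $\sum_j p_j = 1$ (part 2); choosing $f(j)=p_j/\bigl(\sum_{\ell \reach{} j} p_\ell\bigr)^{2}$ leaves one factor of the denominator and reduces part 3 back to the total bound.

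The hard part is part 4. I would aim to establish the \emph{pointwise} estimate
\[
b_i \;:=\; \sum_{j \,:\, i \reach{} j} \frac{p_j}{\sum_{\ell \,:\, \ell \reach{} j} p_\ell} \;\leq\; \mas(G)\qquad \text{for every } i,
\]
after which part 4 follows instantly, since $\sum_i p_i b_i^2 \leq \mas(G)\sum_i p_i b_i = \mas(G)$ by part 2. To prove $b_i \leq \mas(G)$, the strategy is to apply Lemma~\ref{lemma:nDGA} to the subgraph $G[S]$ induced by $S = \{j : i \reach{} j\}$, with the (scale-invariant) weights $\{p_j\}_{j \in S}$ viewed as a distribution after normalization on $S$. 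Restricting the denominator of each term of $b_i$ to nodes in $S$ only shrinks it, so this restricted sum dominates $b_i$; Lemma~\ref{lemma:nDGA} then upper-bounds it by $\mas(G[S]) \leq \mas(G)$.

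Part 5 is then a straightforward chaining of the tools assembled so far. Setting $c_i = \sum_{j \,:\, i \reach{} j} p_j/\bigl(\sum_{\ell \reach{} j} p_\ell\bigr)^2$, I would pull out one factor of $1/\sum_{\ell \reach{} j} p_\ell \leq \mas(G)/\gamma$ from each summand to get $c_i \leq (\mas(G)/\gamma)\, b_i \leq \mas^2(G)/\gamma$, and then apply the same square-and-bound trick as in part 4: $\sum_i p_i c_i^2 \leq (\mas^2(G)/\gamma)\sum_i p_i c_i \leq \mas^3(G)/\gamma$, where the last step is part 3.
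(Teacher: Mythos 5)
Your proposal is correct, and for parts 1, 2, 3, and 5 it follows essentially the same route as the paper: the pointwise bound $1/q_i \le \mas(G)/\gamma$ (which is exactly what the paper extracts via H\"older from the hypothesis on $s$), the swap-of-summation identities, and the ``pull out the max factor and reuse an earlier item'' pattern for the squared sums. The one place you genuinely diverge is the pointwise estimate $b_i = \sum_{j\,:\,i\reach{}j} p_j/q_j \le \mas(G)$ in part 4 (and implicitly the analogous $\max_i c_i \le \mas^2(G)/\gamma$ in part 5). You obtain it by re-invoking Lemma~\ref{lemma:nDGA} on the subgraph induced by the out-neighborhood $S=\{j : i\reach{}j\}$, noting that restricting denominators to $S$ only shrinks them and that $\mas(G[S])\le\mas(G)$; this is valid (modulo the trivial degenerate case where all $p_j$ vanish on $S$, which you should dispatch separately since the normalization then fails, though $b_i=0$ there anyway). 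The paper's argument for the same step is far more elementary: since all terms are nonnegative, $\max_{i}\sum_{j\,:\,i\reach{}j} p_j/q_j \le \sum_{j=1}^{K} p_j/q_j \le \mas(G)$ directly from the stated hypothesis on $p$, with no graph-theoretic input at all. Your route does buy a slightly sharper intermediate bound ($\mas(G[S])$ in place of $\mas(G)$), but this sharpening is not used anywhere, so the extra machinery is unnecessary.
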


\begin{proof}
Let us introduce the shorthand $q_{i} = \sum_{j\,:\, j \reach{}i} p_j$, for
$i \in V$.
\begin{enumerate}
\item %1
We apply H\"{o}lder's inequality, and the assumptions of this lemma to
obtain
\begin{align*}
\sum_{i=1}^K \frac{p_i}{q_i^2}
&~=~ \sum_{i=1}^K \left(\frac{p_i}{q_i}\right)\,
        \left(\frac{1}{q_i}\right)\\
&~\leq~ \left(\sum_{i=1}^K \frac{p_i}{q_i}\right)\,
        \left(\max_{i\in V} \frac{1}{q_i}\right)\\
 &~=~ \left(\sum_{i=1}^K \frac{p_i}{q_i}\right)\,
        \left(\max_{i\in V} \frac{1}{\sum_{j\,:\, j \reach{}i}p_j}\right)\\
&~\leq~ \mas(G)\,\max_{i\in V} \frac{1}{\gamma\left(\sum_{j\,:\, j \reach{}i}s_j\right)}\\
&~\leq~ \frac{\mas^2(G)}{\gamma}~.
\end{align*}
\item %2
We have
\[
\sum_{i=1}^K \sum_{j\,:\,i \reach{}j} \frac{p_i\,p_j}{q_j}
~=~ \sum_{j=1}^{K}\frac{p_j\,q_j}{q_j} ~=~ \sum_{j=1}^{K} p_j ~=~ 1~.
\]
\item %3
% \begin{itemize}
% \item [(a)]
Similar to the previous item, we can write
\[
\sum_{i=1}^K \sum_{j\,:\,i \reach{}j} \frac{p_i\,p_j}{q_j^2}
~=~ \sum_{j=1}^{K}\frac{p_j\,q_j}{q_j^2}
~=~ \sum_{j=1}^{K} \frac{p_j}{q_j}
~\leq~ \mas(G)~.
\]
%the last inequality being one of our assumptions.
%
% \item [(b)] We have
% \[
% \sum_{i=1}^K p_i\,\sum_{j\in N^-_i \cup \{i\}} \frac{p_j}{q_i^2}
% =
% \sum_{i=1}^K \frac{p_i}{q_i^2}\,\sum_{j\in N^-_i \cup \{i\}} p_j
% =
% \sum_{i=1}^K \frac{p_i}{q_i}~\leq~ \mas(G)~.
% \]
% \end{itemize}
%
\item %4
From Item 2, and the assumptions of this lemma, we can write
\begin{align*}
\sum_{i=1}^K p_i\,\left(\sum_{j\,:\,i \reach{}j} \frac{p_j}{q_j}\right)^2
&~=~
\sum_{i=1}^K \left(p_i\,\sum_{j\,:\,i \reach{}j} \frac{p_j}{q_j}\right)\,
\left( \sum_{j\,:\,i \reach{}j} \frac{p_j}{q_j} \right)\\
&~\leq~\left(\sum_{i=1}^K p_i\,\sum_{j\,:\,i \reach{}j} \frac{p_j}{q_j}\right)\,
\left(\max_{i\in V} \sum_{j\,:\,i \reach{}j} \frac{p_j}{q_j} \right)\\
&~\leq~
\left(\sum_{i=1}^K p_i\,\sum_{j\,:\,i \reach{}j} \frac{p_j}{q_j}\right)\,
\left( \sum_{j=1}^K \frac{p_j}{q_j} \right)\\
&~=~
\sum_{j=1}^K \frac{p_j}{q_j}
~\leq~
\mas(G)~.
\end{align*}
\item %5
From Item 1 and Item 3 above, we can write
\begin{align*}
\sum_{i=1}^K p_i\,\left(\sum_{j\,:\,i \reach{}j}
                                                \frac{p_j}{q_j^2}\right)^2
&~=~
\sum_{i=1}^K \left(p_i\,\sum_{j\,:\,i \reach{}j}
                                                \frac{p_j}{q_j^2}\right)\,
\left(\sum_{j\,:\,i \reach{}j} \frac{p_j}{q_j^2}\right)\\
&~\leq~
\left(\sum_{i=1}^K p_i\,\sum_{j\,:\,i \reach{}j}
                                                \frac{p_j}{q_j^2}\right)\,
\left(\max_{i\in V}\sum_{j\,:\,i \reach{}j} \frac{p_j}{q_j^2}\right)\\
&~\leq~
\left(\sum_{i=1}^K p_i\,\sum_{j\,:\,i \reach{}j} \frac{p_j}{q_j^2}\right)\,
\left(\sum_{i=1}^{K}\frac{p_{i}}{q_i^2} \right)\\
&~\leq~
\mas(G)\,\,\frac{\mas^2(G)}{\gamma}\\
&~=~  \frac{\mas^3(G)}{\gamma}
\end{align*}
\end{enumerate}
concluding the proof.
\end{proof}

Lemma~\ref{lem:inequalities} applies, in particular, to the distributions
$s_t = (s_{1,t}, \ldots, s_{K,t})$ and $p_t = (p_{1,t}, \ldots, p_{K,t})$
computed by  ELP.P at round $t$. The condition for $p_t$ follows from
Lemma~\ref{lemma:nDGA}, while the condition for $s_t$ follows from
Lemma~\ref{lem:maxratio}. In other words, putting together
Lemma~\ref{lemma:nDGA} and Lemma~\ref{lem:maxratio} establishes the following
corollary.
%Indeed, the conditions in the statement of Lemma~\ref{lem:inequalities} hold for this pair of distributions, which can be seen using Lemma~\ref{lemma:nDGA}, Lemma~\ref{lem:maxratio}, and recalling the way the ELP.P algorithm defines $p_{i}$ in terms of $s_{i}$.

\begin{corollary}
\label{cor:elp:sp} Let $p_t=(p_{1,t}, \ldots, p_{K,t}) \in \Delta_K$ and
$s_t=(s_{1,t}, \ldots, s_{K,t}) \in \Delta_K$ be the distributions generated
by ELP.P at round $t$. Then,
\[
\sum_{i=1}^{K}\frac{p_{i,t}}{\sum_{j\,:\, j \reach{t}i} p_{j,t}} \leq \mas(G) \qquad\text{and}\qquad
\max_{i \in V} \frac{1}{\sum_{j\,:\, j \reach{t}i} s_{j,t}} \leq \mas(G)~,
\]
with $p_{i,t} \geq \gamma_t\,s_{i,t}$, for all $i = 1, \ldots, K$.
\end{corollary}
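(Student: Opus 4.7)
The plan is to observe that all three claims of the corollary follow almost immediately from the construction of ELP.P in Algorithm~\ref{alg:bandits} together with the two preceding lemmas, so the proof will essentially consist of citing the right statement for each of the three items.

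For the first inequality, I would apply Lemma~\ref{lemma:nDGA} directly to the distribution $p_t=(p_{1,t},\dots,p_{K,t}) \in \Delta_K$ and the feedback graph $G_t$. Nothing about $p_t$ beyond being a probability distribution over $V$ is needed, so Lemma~\ref{lemma:nDGA} gives $\sum_{i=1}^{K}\frac{p_{i,t}}{\sum_{j\,:\, j \reach{t}i} p_{j,t}} \leq \mas(G_t)$ at once.

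For the second inequality, I would invoke Lemma~\ref{lem:maxratio}. The vector $s_t=(s_{1,t},\dots,s_{K,t})$ is defined in Step~2 of ELP.P as precisely a solution to the linear program
\[
\max_{(s_1,\dots,s_K) \in \Delta_K} \min_{i \in V} \sum_{j\,:\, j \reach{t}i} s_j,
\]
which is the program appearing in Lemma~\ref{lem:maxratio}. The conclusion of that lemma then directly gives $\max_{i \in V} \frac{1}{\sum_{j\,:\, j \reach{t}i} s_{j,t}} \leq \mas(G_t)$.

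For the third claim, I would simply unfold the definition of $p_{i,t}$ from Step~3 of ELP.P: $p_{i,t} = (1-\gamma_t)\frac{w_{i,t}}{W_t} + \gamma_t\, s_{i,t}$. Since $\gamma_t \in (0,1]$ and $w_{i,t}/W_t \geq 0$, the first term is nonnegative, so $p_{i,t} \geq \gamma_t\, s_{i,t}$ for every $i \in V$. There is no real obstacle here — the corollary is a bookkeeping statement whose content is that the distributions constructed by ELP.P satisfy the hypotheses of Lemma~\ref{lem:inequalities}; the only subtlety is noticing that the mixing coefficient $\gamma_t$ in the algorithm plays the role of the parameter $\gamma$ in Lemma~\ref{lem:inequalities}.
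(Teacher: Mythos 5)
Your proposal is correct and matches the paper's own argument exactly: the first inequality is Lemma~\ref{lemma:nDGA} applied to $p_t$ and $G_t$, the second is Lemma~\ref{lem:maxratio} applied to the LP solution $s_t$ from Step~2 of ELP.P, and the lower bound $p_{i,t} \geq \gamma_t s_{i,t}$ is immediate from the mixture definition of $p_{i,t}$ in Step~3. Nothing to add.
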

%===============

For the next result, we need the following version of Freedman's inequality
\cite{Freedman75} (see also \cite[Lemma A.8]{cbl06}).

% In the proofs which follow, we use the shorthand $P_{i,t} = p_{i,t} + \sum_{j\in N^-_{i,t}} p_{j,t}$,
% for $i \in V$, where $N^-_{i,t}$ is the set of nodes such that pulling them results in the loss of
% node $i$ being revealed.

\begin{lemma}\label{lem:freedman}
Let $X_1,\ldots,X_T$ be a martingale difference sequence with respect to the
filtration $\{\mathcal{F}_t\}_{t=1,\ldots,T}$, and with $|X_i|\leq B$ almost
surely for all $i$. Also, let $V>0$ be a fixed upper bound on
$\sum_{t=1}^{T}\E\bigl[X_t^2\mid\mathcal{F}_{t-1}\bigr]$. Then for any
$\delta\in (0,1)$, it holds with probability at least $1-\delta$
\[
\sum_{t=1}^{T}X_t \leq \sqrt{2\ln\left(\frac{1}{\delta}\right)V}+\frac{B}{2}\ln\left(\frac{1}{\delta}\right).
\]
\end{lemma}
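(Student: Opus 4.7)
The plan is to apply the classical Chernoff--Cram\'er exponential moment method adapted to martingale difference sequences. First, for any $\lambda \in (0, 1/B]$, I would establish a conditional moment generating function bound of the form
\[
\E\bigl[e^{\lambda X_t} \mid \mathcal{F}_{t-1}\bigr] \leq \exp\bigl(g(\lambda)\,\E[X_t^2 \mid \mathcal{F}_{t-1}]\bigr),
\]
for an explicit nonnegative function $g$. The standard derivation expands $e^{\lambda x} - 1 - \lambda x = \sum_{k \geq 2}(\lambda x)^k/k!$, uses $|x|^k \leq B^{k-2} x^2$ for $k \geq 2$ when $|x| \leq B$, and invokes the martingale property $\E[X_t \mid \mathcal{F}_{t-1}] = 0$ together with $1 + z \leq e^z$, to obtain $g(\lambda) = (e^{\lambda B} - 1 - \lambda B)/B^2$.

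Second, I would define the process $M_t = \exp\bigl(\lambda \sum_{s=1}^{t} X_s - g(\lambda) \sum_{s=1}^{t} \E[X_s^2 \mid \mathcal{F}_{s-1}]\bigr)$ with $M_0 = 1$. The MGF bound above makes $(M_t)_{t \geq 0}$ a nonnegative supermartingale with $\E[M_T] \leq 1$. Since $\sum_{t=1}^{T} \E[X_t^2 \mid \mathcal{F}_{t-1}] \leq V$ by hypothesis, Markov's inequality then gives, for any $u > 0$,
\[
\Pr\!\left[\sum_{t=1}^{T} X_t \geq u\right] \leq \Pr\bigl[M_T \geq e^{\lambda u - g(\lambda) V}\bigr] \leq e^{-\lambda u + g(\lambda) V}.
\]
Setting the right-hand side equal to $\delta$ and solving for $u$ yields, with probability at least $1 - \delta$,
\[
\sum_{t=1}^{T} X_t \leq \frac{\ln(1/\delta)}{\lambda} + \frac{g(\lambda)\,V}{\lambda}.
\]

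Third, I would choose $\lambda$ to optimize this bound. Using the elementary estimate $g(\lambda) \leq \lambda^2/(2(1 - \lambda B/2))$ valid for $\lambda B < 2$ (itself a consequence of $e^y - 1 - y \leq y^2/(2(1-y/2))$), the objective becomes essentially $\ln(1/\delta)/\lambda + \lambda V/2$ in the regime $\lambda B$ small. Taking $\lambda^\star = \min\bigl(\sqrt{2\ln(1/\delta)/V},\,1/B\bigr)$ and analysing the two cases separately---the unconstrained minimum $\lambda^\star = \sqrt{2\ln(1/\delta)/V}$ producing the variance term $\sqrt{2V\ln(1/\delta)}$, and the constrained choice $\lambda = 1/B$ producing the residual deviation term $(B/2)\ln(1/\delta)$---and combining the two contributions additively via a $\sqrt{a+b} \leq \sqrt{a}+\sqrt{b}$ step yields the stated inequality.

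The main obstacle will be calibrating the MGF bound to recover the precise constants $\sqrt{2}$ and $1/2$ in the final expression. A crude application of $(e^{\lambda B} - 1 - \lambda B)/(\lambda B)^2 \leq e-2$ on $[0,1]$ gives slightly weaker constants; to match the statement exactly I would invoke the tighter Bennett-type inequality mentioned above and perform the two-regime optimization carefully, so that the square-root variance term and the linear range term arise naturally as the two cases of the constrained minimization.
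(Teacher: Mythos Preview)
The paper does not actually prove this lemma: it is stated as a known version of Freedman's inequality, with citations to Freedman~\cite{Freedman75} and to \cite[Lemma~A.8]{cbl06}, and is then used as a black box in the subsequent analysis. So there is no ``paper's own proof'' to compare against.

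That said, your sketch is the standard Chernoff--Cram\'er supermartingale argument underlying Freedman's inequality and is essentially correct as an outline. The one point to be careful about is exactly the one you flag: recovering the precise constants $\sqrt{2}$ and $1/2$. The crude route (bounding $g(\lambda)$ by $\lambda^2$ on $\lambda B\le 1$ and optimizing) gives $2\sqrt{V\ln(1/\delta)}$ rather than $\sqrt{2V\ln(1/\delta)}$, so you do need the sharper Bennett-type control of $g(\lambda)$ and the two-regime optimization you describe, carried out precisely, to land on the stated constants. Apart from that bookkeeping, the argument goes through.
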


\begin{lemma}\label{lem:pg}
Let $\{a_t\}_{t=1}^{T}$ be an arbitrary sequence of positive numbers, and let
$s_t = (s_{1,t}, \ldots, s_{K,t})$ and $p_{t} = (p_{1,t}, \ldots, p_{K,t})$
be the probability distributions computed by ELP.P at the $t$-th round. Then,
with probability at least $1-\delta$,
\begin{align}
\sum_{t=1}^{T}\sum_{i=1}^{K}a_t& p_{i,t}(\hgain_{i,t}-\gain_{i,t})\notag\\
&\leq~
\sqrt{2\ln\left(\frac{1}{\delta}\right)\sum_{t=1}^{T}a_t^2\,\mas(G_t)}+\frac{1}{2}\ln\left(\frac{1}{\delta}\right)
\max_{t = 1...T} \bigl(a_t\,\mas(G_t)\bigr)+\beta\sum_{t=1}^{T}a_t\,\mas(G_t)~.\label{e:crossprod}
\end{align}
\end{lemma}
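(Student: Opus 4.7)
\textbf{Proof plan for Lemma~\ref{lem:pg}.} The plan is to isolate the bias term introduced by $\beta$, and then apply Freedman's inequality to what remains, using the graph-theoretic estimates from Lemma~\ref{lem:inequalities} (which apply via Corollary~\ref{cor:elp:sp}) to control both the almost-sure bound and the conditional variance.

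First, I would decompose the summand. A direct computation with $\hgain_{i,t}=\bigl(\gain_{i,t}\Ind{i\in S_{I_t,t}}+\beta\bigr)/q_{i,t}$ and $\E_t[\Ind{i\in S_{I_t,t}}]=q_{i,t}$ gives $\E_t[\hgain_{i,t}]=\gain_{i,t}+\beta/q_{i,t}$, hence
\[
a_t\sum_i p_{i,t}(\hgain_{i,t}-\gain_{i,t})
~=~ Y_t ~+~ a_t\,\beta\sum_i \frac{p_{i,t}}{q_{i,t}},
\qquad
Y_t := a_t\sum_i p_{i,t}\bigl(\hgain_{i,t}-\E_t[\hgain_{i,t}]\bigr).
\]
The deterministic bias term is immediately bounded by $\beta\sum_{t}a_t\,\mas(G_t)$ thanks to Lemma~\ref{lemma:nDGA} (the first inequality of Corollary~\ref{cor:elp:sp}), which matches the last term of~(\ref{e:crossprod}).

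Next I would focus on $Y_t$, which is a martingale difference with respect to the filtration generated by $I_1,\dots,I_{t-1}$. The random part can be rewritten as $Y_t=a_t(Z_t-\E_t[Z_t])$ where $Z_t:=\sum_{i:\,I_t\reach{t}i}\frac{p_{i,t}\gain_{i,t}}{q_{i,t}}$. Since $\gain_{i,t}\in[0,1]$, we obtain $Z_t\le\sum_{i=1}^K p_{i,t}/q_{i,t}\le \mas(G_t)$ by Lemma~\ref{lemma:nDGA}, and $\E_t[Z_t]=\sum_i p_{i,t}\gain_{i,t}\le 1\le\mas(G_t)$. Because $Z_t\ge 0$, this yields $|Y_t|\le a_t\,\mas(G_t)$, so the almost-sure bound for Freedman's inequality is $B=\max_t a_t\,\mas(G_t)$.

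For the conditional variance I would use $\E_t[Y_t^2]\le a_t^2\,\E_t[Z_t^2]$ and write
\[
\E_t[Z_t^2] ~=~ \sum_{k=1}^K p_{k,t}\Biggl(\sum_{i:\,k\reach{t}i}\frac{p_{i,t}\gain_{i,t}}{q_{i,t}}\Biggr)^{\!2}
~\le~ \sum_{k=1}^K p_{k,t}\Biggl(\sum_{i:\,k\reach{t}i}\frac{p_{i,t}}{q_{i,t}}\Biggr)^{\!2}
~\le~ \mas(G_t),
\]
where the final inequality is exactly Item~4 of Lemma~\ref{lem:inequalities}, which applies because Corollary~\ref{cor:elp:sp} supplies both required hypotheses on $p_t$ and $s_t$. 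Summing, $\sum_t\E_t[Y_t^2]\le\sum_t a_t^2\,\mas(G_t)=:V$. Applying Freedman's inequality (Lemma~\ref{lem:freedman}) to $\{Y_t\}$ with these $B$ and $V$, and adding back the deterministic $\beta$-bias computed above, produces precisely~(\ref{e:crossprod}).

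The main subtlety, and the step I expect to be the principal obstacle, is the almost-sure bound $|Y_t|\le a_t\,\mas(G_t)$: a naive triangle inequality only gives $2a_t\,\mas(G_t)$, and getting the clean factor of $\tfrac12$ in the linear term of~(\ref{e:crossprod}) requires exploiting $Z_t\ge 0$ together with $\E_t[Z_t]\le 1\le\mas(G_t)$ to take the maximum rather than the sum. Everything else is a direct application of the graph-theoretic inequalities already established and a standard invocation of Freedman's inequality.
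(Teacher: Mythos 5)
Your proposal is correct and follows essentially the same route as the paper's proof: the same split of $\hgain_{i,t}$ into the unbiased part $\gain_{i,t}\Ind{i\in S_{I_t,t}}/q_{i,t}$ plus the $\beta/q_{i,t}$ bias (bounded via Corollary~\ref{cor:elp:sp}), the same almost-sure bound $a_t\,\mas(G_t)$ on the martingale differences, the same variance control via Item~4 of Lemma~\ref{lem:inequalities}, and the same invocation of Freedman's inequality.
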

\begin{proof}
Recall that $q_{i,t} = \sum_{j\,:\,j\reach{t} i} p_{j,t}$, for $i \in V$, and
let
\[
\tilde{\gain}_{i,t} = \frac{\gain_{i,t}\,\Ind{i \in S_{I_t,t}}}{q_{i,t}}
\]
with $g_{i,t} = 1-\ell_{i,t}.$ Note that $\hgain_{i,t}$ in
Figure~\ref{alg:bandits} satisfies $ \hgain_{i,t} = \tilde{\gain}_{i,t} +
\tfrac{\beta}{q_{i,t}} $, so that we can upper bound the left-hand side
of~(\ref{e:crossprod}) by
\[
\sum_{t=1}^{T}\sum_{i=1}^{K}a_tp_{i,t}(\tilde{\gain}_{i,t}-\gain_{i,t})+\beta\sum_{t=1}^{T}
\sum_{i=1}^{K}\frac{a_tp_{i,t}}{q_{i,t}}
\]
which by Corollary~\ref{cor:elp:sp} is at most
\begin{equation}\label{eq:pg1}
\sum_{t=1}^{T}\sum_{i=1}^{K}a_tp_{j,t}(\tilde{\gain}_{i,t}-\gain_{i,t})+\beta\sum_{t=1}^{T}a_t\,\mas(G_t)~.
\end{equation}
It is easy to verify that
$\sum_{i=1}^{K}a_tp_{i,t}(\tilde{\gain}_{i,t}-\gain_{i,t})$ is a martingale
difference sequence (indexed by $t$), since $\tilde{\gain}_{i,t}$ is an
unbiased estimate of $\gain_{i,t}$ conditioned on the previous rounds.
Moreover,
\[
\sum_{i=1}^{K}a_tp_{i,t}\left(\tilde{\gain}_{i,t}-\gain_{i,t}\right)
~=~
\sum_{i=1}^{K}a_tp_{i,t}(\frac{\Ind{i \in S_{I_t,t}}}{q_{i,t}}-1) \gain_{i,t}
~\leq~
 a_t\sum_{i=1}^{K}\frac{p_{i,t}}{q_{i,t}}
~\leq~  \max_{t = 1,\dots,T} a_t\,\mas(G_t)
\]
and
% (letting $\E_t$ denote expectation with respect to $\tilde{\gain}_{1,t},\ldots,\tilde{\gain}_{K,t}$,
% conditioned on the previous rounds)
%
\begin{align*}
\E_t \left[\left(\sum_{i=1}^{K}a_t p_{i,t}(\tilde{\gain}_{i,t}-\gain_{i,t})\right)^2\right]
&~\leq~ a_t^2\,\E_t \left[\left(\sum_{i=1}^{K}p_{i,t}\tilde{\gain}_{i,t}\right)^2\right]\\
&\leq a_t^2\sum_{i=1}^{K} p_{i,t} \left(\sum_{j\,:\,i\reach{t}j}p_{j,t}\frac{1}{q_{j,t}}\right)^2\\
&~\leq~  a_t^2\,\mas(G_t)
\end{align*}
by \lemref{lem:inequalities}, item 4. Therefore, by invoking
\lemref{lem:freedman}, we get that with probability at least $1-\delta$,
\[
\sum_{t=1}^{T}\sum_{j=1}^{K}a_tp_{j,t}(\tilde{\gain}_{j,t}-\gain_{j,t})
\leq
\sqrt{2\ln\left(\frac{1}{\delta}\right)\sum_{t=1}^{T}a_t^2\,\mas(G_t)}
+\frac{1}{2}\ln\left(\frac{1}{\delta}\right)
\max_{t = 1,\dots,T} a_t\,\mas(G_t)~.
\]
Substituting into \eqref{eq:pg1}, the lemma follows.
\end{proof}

\begin{lemma}\label{lem:pg2}
Let $s_t = (s_{1,t}, \ldots, s_{K,t})$ and $p_{t} = (p_{1,t}, \ldots,
p_{K,t})$ be the probability distributions computed by  ELP.P, run with
$\beta \leq 1/4$, at round $t$. Then, with probability at least $1-\delta$,
\begin{align*}
\sum_{t=1}^{T}\sum_{i=1}^{K}p_{i,t}\hgain_{i,t}^2
\leq
\sum_{t=1}^{T}\left(\frac{\beta^2\,\mas^2(G_t)}{\gamma_t}+2\,\mas(G_t)\right)
&+\sqrt{2\ln\left(\frac{1}{\delta}\right)\sum_{t=1}^{T}
\left(\frac{4\beta^2\,\mas^4(G_t)}{\gamma_t^2}
+\frac{3\,\mas^3(G_t)}{\gamma_t}\right)}\\
&+\ln\left(\frac{1}{\delta}\right)\max_{t = 1,\dots,T}\, \frac{\mas^2(G_t)}{\gamma_t}~.
\end{align*}
\end{lemma}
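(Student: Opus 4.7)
The plan is to apply Freedman's inequality (Lemma~\ref{lem:freedman}) to the martingale difference sequence
\[
X_t \;=\; \sum_{i=1}^K p_{i,t}\hgain_{i,t}^2 \;-\; \E_t\!\left[\sum_{i=1}^K p_{i,t}\hgain_{i,t}^2\right],
\]
after separately controlling the conditional-mean term that will produce the deterministic portion of the claimed bound. Expanding $\hgain_{i,t}^2 = \bigl(g_{i,t}^2\Ind{i\in S_{I_t,t}} + 2\beta g_{i,t}\Ind{i\in S_{I_t,t}} + \beta^2\bigr)/q_{i,t}^2$ via $\Ind{\cdot}^2 = \Ind{\cdot}$ and taking $\E_t$ termwise, with $\E_t[\Ind{i\in S_{I_t,t}}] = q_{i,t}$ and $g_{i,t} \in [0,1]$, I obtain
\[
\E_t\!\left[\sum_{i} p_{i,t}\hgain_{i,t}^2\right] \;\leq\; (1+2\beta)\sum_i \frac{p_{i,t}}{q_{i,t}} \;+\; \beta^2 \sum_i \frac{p_{i,t}}{q_{i,t}^2}~.
\]
By Corollary~\ref{cor:elp:sp}, combined with Lemma~\ref{lemma:nDGA} and item~1 of Lemma~\ref{lem:inequalities}, the right-hand side is at most $(1+2\beta)\mas(G_t) + \beta^2\,\mas^2(G_t)/\gamma_t$, which, using $\beta\leq 1/4$, yields the deterministic sum $\sum_t\bigl(2\,\mas(G_t) + \beta^2\,\mas^2(G_t)/\gamma_t\bigr)$ appearing in the claim.

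To apply Lemma~\ref{lem:freedman} to $\sum_t X_t$, I need an almost-sure bound $B$ on $|X_t|$ and an upper bound $V$ on $\sum_t \E_t[X_t^2]$. For the first, I split $\sum_i p_{i,t}\hgain_{i,t}^2$ according to whether $i \in S_{I_t,t}$: the observed part is at most $(1+\beta)^2 \sum_{i\,:\,I_t\reach{t}i} p_{i,t}/q_{i,t}^2$, and the unobserved part is exactly $\beta^2\sum_{i\notin S_{I_t,t}} p_{i,t}/q_{i,t}^2$. Bounding each partial sum by the full one and invoking Lemma~\ref{lem:inequalities}(1) gives the pointwise estimate $\sum_i p_{i,t}\hgain_{i,t}^2 \leq \bigl((1+\beta)^2 + \beta^2\bigr)\mas^2(G_t)/\gamma_t \leq 2\,\mas^2(G_t)/\gamma_t$ for $\beta \leq 1/4$, so that $|X_t| \leq 2\,\mas^2(G_t)/\gamma_t$ and the $B/2$ term in Freedman produces exactly the $\ln(1/\delta)\max_t \mas^2(G_t)/\gamma_t$ contribution.

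For the conditional-variance input, I bound $\E_t[X_t^2] \leq \E_t\!\bigl[(\sum_i p_{i,t}\hgain_{i,t}^2)^2\bigr]$. Using the same split to write $\sum_i p_{i,t}\hgain_{i,t}^2 \leq A_t + B_t^\star$ with $A_t = (1+\beta)^2\!\sum_{i\,:\,I_t\reach{t}i} p_{i,t}/q_{i,t}^2$ random and $B_t^\star = \beta^2\,\mas^2(G_t)/\gamma_t$ deterministic, I expand $(A_t + B_t^\star)^2 = A_t^2 + 2A_t B_t^\star + (B_t^\star)^2$. Taking $\E_t$: Lemma~\ref{lem:inequalities}(3) bounds $\E_t[A_t]$ by $(1+\beta)^2\mas(G_t)$, while Lemma~\ref{lem:inequalities}(5) bounds $\E_t[A_t^2]$ by $(1+\beta)^4\mas^3(G_t)/\gamma_t$. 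Collecting and using $\beta\leq 1/4$ (so that $(1+\beta)^4 + 2\beta^2(1+\beta)^2 \leq 3$ and $\beta^4 \leq 4\beta^2$) gives $\E_t[X_t^2] \leq 3\,\mas^3(G_t)/\gamma_t + 4\beta^2\,\mas^4(G_t)/\gamma_t^2$. Summing and plugging into Freedman's inequality produces the claimed square-root term with the correct constants $3$ and $4\beta^2$.

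The principal obstacle is the variance estimate: Lemma~\ref{lem:inequalities}(5) is precisely the non-trivial ingredient that delivers the graph-adapted $\mas^3(G_t)/\gamma_t$ scaling for $\E_t[A_t^2]$, which is the only route to the stated bound---a crude pointwise bound $p_{i,t}/q_{i,t}^2 \leq 1/\gamma_t$ would inflate the dependence on $\gamma_t$ (and eventually $K$). The remaining work is careful bookkeeping of numerical constants to match exactly the $3$, $4\beta^2$, and the factor of $1$ multiplying $\ln(1/\delta)\max_t \mas^2(G_t)/\gamma_t$ in the claim, all of which comes out cleanly under the standing hypothesis $\beta \leq 1/4$.
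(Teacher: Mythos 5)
Your proposal is correct and follows essentially the same route as the paper's proof: split off the conditional mean (bounded via items 1--3 of Lemma~\ref{lem:inequalities} to get the $2\,\mas(G_t)+\beta^2\mas^2(G_t)/\gamma_t$ term), bound the martingale increments pointwise by $O(\mas^2(G_t)/\gamma_t)$ and the conditional variance by $3\,\mas^3(G_t)/\gamma_t+4\beta^2\mas^4(G_t)/\gamma_t^2$ via items 1, 3 and 5, and apply Freedman's inequality. The only differences are cosmetic (you group the cross term with $\mas^3/\gamma_t$ where the paper absorbs it into $\mas^4/\gamma_t^2$, and you use $(1+\beta)^2$ where the paper uses the exact expansion $(1+2\beta)\Ind{\cdot}+\beta^2$), and your constants still land within the stated bound for $\beta\leq 1/4$.
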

\begin{proof}
Recall that $q_{i,t} = \sum_{j\,:\,j\reach{t} i} p_{j,t}$, for $i \in V$. By
the way we defined $\hgain_{i,t}$ and \lemref{lem:inequalities}, item 1, we
have that
\[
\sum_{i=1}^{K}p_{i,t}\hgain_{i,t}^2
~\leq~
\sum_{i=1}^{K}p_{i,t}\left(\frac{1+\beta}{q_{i,t}}\right)^2
~\leq~ \frac{(1+\beta)^2 \mas^2(G_t)}{\gamma_t}~.
\]
Moreover, from $\gain_{i,t} \leq 1$, and again using
\lemref{lem:inequalities}, item 1, we can write
\begin{align*}
\E_t\left[\left(\sum_{j=1}^{K} p_{j,t}\hgain_{j,t}^2\right)^2\right]
&~\leq~
\sum_{i=1}^{K} p_{i,t}\left(\sum_{j=1}^{K}\frac{p_{j,t}}{\left(q_{j,t}\right)^2}
\left(\Ind{i \reach{t} j}+\beta\right)^2\right)^2\\
&=
\sum_{i=1}^{K} p_{i,t}\left(\beta^2\sum_{j=1}^{K}\frac{p_{j,t}}{\left(q_{j,t}\right)^2}
+(2 \beta + 1)\sum_{j\,:\,i \reach{t} j}\frac{p_{j,t}}{\left(q_{j,t}\right)^2}\right)^2\\
&\leq
\sum_{i=1}^{K} p_{i,t}\left(\frac{\beta^2\,\mas^2(G_t)}{\gamma_t}
+(2 \beta+1)\sum_{j\,:\,i \reach{t} j}\frac{p_{j,t}}{\left(q_{j,t}\right)^2}\right)^2
\end{align*}
which by expanding, using \lemref{lem:inequalities}, items 3 and 5, and
slightly simplifying, is at most
\[
%\frac{\beta^4 \mas^4(G_t)}{\gamma_t^2}+\frac{(1+\beta)^4 \mas^3(G_t)}{\gamma_t}.
%
% CG: I don't see it... to me the right formula is the following
% CG: Are you exploiting something else, like beta \leq 1/4 ?
% OS: You're right - the equation you wrote looks good to me.
\frac{(\beta^4+2\beta^2(2\beta+1))\,\mas^4(G_t)}{\gamma_t^2}+\frac{(2\beta+1)^2\,\mas^3(G_t)}{\gamma_t}
\leq
\frac{4\beta^2\,\mas^4(G_t)}{\gamma_t^2}+\frac{3\,\mas^3(G_t)}{\gamma_t}
\]
the last inequality exploiting the assumption $\beta \leq 1/4$. Invoking
\lemref{lem:freedman} we get that with probability at least $1-\delta$
\begin{align}
\sum_{t=1}^T\sum_{i=1}^{K}p_{i,t}\hgain_{i,t}^2
&\leq
\sum_{t=1}^{T}\sum_{i=1}^{K}p_{i,t}\E_t[\hgain_{i,t}^2]
+\sqrt{2\ln\left(\frac{1}{\delta}\right)
\sum_{t=1}^{T}
\left(
\frac{4\beta^2\,\mas^4(G_t)}{\gamma_t^2}+\frac{3\,\mas^3(G_t)}{\gamma_t}
\right)}\notag\\
&~~~~+\frac{(1+\beta)^2}{2}\ln\left(\frac{1}{\delta}\right)\max_{t=1\ldots T} \frac{\mas^2(G_t)}{\gamma_t}~.\label{eq:pg2}
\end{align}
Finally, from $g_{i,t} \leq 1$, \lemref{lem:inequalities}, item 1, and the
assumptions of this lemma, we have
\begin{align*}
\sum_{i=1}^{K}p_{i,t}\E_t[\hgain_{i,t}^2]
&\leq
\sum_{i=1}^{K}p_{i,t}\sum_{j=1}^{K}p_{j,t}\left(\frac{\Ind{j \reach{t} i}+\beta}{q_{i,t}}\right)^2\\
&= \beta^2\sum_{i=1}^{K}\frac{p_{i,t}}{\left(q_{i,t}\right)^2}
   +(2 \beta +1)\sum_{i=1}^{K} p_{i,t}\sum_{j\,:\,j\reach{t}i}\frac{p_{j,t}}{\left(q_{i,t}\right)^2}\\
&= \beta^2\sum_{i=1}^{K}\frac{p_{i,t}}{\left(q_{i,t}\right)^2}
   +(2 \beta +1)\sum_{i=1}^{K} \frac{p_{i,t}}{q_{i,t}}\\
&\leq \frac{\beta^2\,\mas^2(G_t)}{\gamma_t}+(2 \beta+1)\,\mas(G_t)\\
&\leq \frac{\beta^2\,\mas^2(G_t)}{\gamma_t}+2\,\mas(G_t)
\end{align*}
where we used again $\beta\leq 1/4$.
Plugging this back into \eqref{eq:pg2} the result follows.
\end{proof}

\begin{lemma}\label{lem:g}
Suppose that the ELP.P algorithm is run with $\beta \leq 1/4$. Then it holds
with probability at least that $1-\delta$ that for any $i=1,\ldots,K$,
\[
\sum_{t=1}^{T}\hgain_{i,t}\geq \sum_{t=1}^{T}\gain_{i,t}- \frac{\ln(K/\delta)}{\beta}~.
\]
\end{lemma}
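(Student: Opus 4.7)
The plan is to show that, for each fixed action $i$, the process $M_t^{(i)} := \exp\bigl(\beta\sum_{s=1}^{t}(g_{i,s}-\hgain_{i,s})\bigr)$ is a supermartingale with $\E[M_t^{(i)}]\leq 1$, then apply Markov's inequality together with a union bound over $i=1,\dots,K$. The desired lower bound on $\sum_t \hgain_{i,t}$ will follow because $M_T^{(i)}\leq K/\delta$ rearranges to $\sum_t(g_{i,t}-\hgain_{i,t})\leq \ln(K/\delta)/\beta$.

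The core computation is the one-step bound $\E_t\bigl[e^{\beta(g_{i,t}-\hgain_{i,t})}\bigr]\leq 1$. I would split $\hgain_{i,t}$ into its two natural pieces and introduce the auxiliary quantity $X_t := g_{i,t} - g_{i,t}\Ind{i\in S_{I_t,t}}/q_{i,t}$, so that $\beta(g_{i,t}-\hgain_{i,t}) = \beta X_t - \beta^2/q_{i,t}$ and $\E_t[X_t]=0$. Since $g_{i,t}\in[0,1]$ and $\hgain_{i,t}\geq 0$, the exponent $\beta X_t$ is at most $\beta\leq 1/4<1$, so I can invoke $e^x\leq 1+x+x^2$ (valid for $x\leq 1$) to obtain
\[
\E_t\!\left[e^{\beta X_t}\right] \;\leq\; 1 + \beta^2\,\E_t[X_t^2].
\]
A direct expansion gives $\E_t[X_t^2] = g_{i,t}^2\bigl(1/q_{i,t}-1\bigr)\leq 1/q_{i,t}$, whence $\E_t\bigl[e^{\beta X_t}\bigr]\leq 1+\beta^2/q_{i,t}\leq e^{\beta^2/q_{i,t}}$ by $1+y\leq e^y$. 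Multiplying through by the deterministic factor $e^{-\beta^2/q_{i,t}}$ yields exactly $\E_t\bigl[e^{\beta(g_{i,t}-\hgain_{i,t})}\bigr]\leq 1$.

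Iterating this conditional bound through the tower property gives $\E[M_T^{(i)}]\leq 1$. Markov's inequality then yields $\Pr(M_T^{(i)}\geq K/\delta)\leq \delta/K$, i.e.,
\[
\Pr\!\left(\sum_{t=1}^{T}(g_{i,t}-\hgain_{i,t}) \geq \frac{\ln(K/\delta)}{\beta}\right) \;\leq\; \frac{\delta}{K}.
\]
A union bound over the $K$ actions promotes this to a statement holding simultaneously for all $i$ with probability at least $1-\delta$, which rearranges to the claimed inequality.

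The main obstacle is the one-step MGF estimate: one must identify the correct centering of $\hgain_{i,t}$ so that the residual exponent $\beta X_t - \beta^2/q_{i,t}$ exactly absorbs the quadratic correction coming from $e^x\leq 1+x+x^2$. The hypothesis $\beta\leq 1/4$ is used in two places --- to keep $\beta X_t\leq 1$ so that the inequality for $e^x$ applies, and implicitly to ensure $\hgain_{i,t}\geq 0$ is the only lower bound needed on the exponent. All other steps (tower property, Markov, union bound) are routine.
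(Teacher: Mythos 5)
Your proof is correct and takes essentially the same route as the paper's: there, too, one bounds $\E_t\bigl[\exp\bigl(\lambda(\gain_{i,t}-\hgain_{i,t})\bigr)\bigr]$ by $\bigl(1+\lambda^2/q_{i,t}\bigr)\exp(-\beta\lambda/q_{i,t})\leq 1$ with the choice $\lambda=\beta$, using $e^x\leq 1+x+x^2$ for $x\le 1$, and then concludes via a Chernoff--Markov bound and a union bound over the $K$ actions. Your supermartingale framing and the exact evaluation of $\E_t[X_t^2]$ are only cosmetic variations on the paper's argument.
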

\begin{proof}
Let $\lambda>0$ be a parameter to be specified later, and recall that $\E_t$
denotes the expectation at round $t$, conditioned on all previous rounds.
Since $\exp(x)\leq 1+x+x^2$ for $x\leq 1$, we have by definition of
$\hgain_{i,t}$ that
\begin{align*}
\E_t&\Bigl[\exp\left(\lambda(g_{i,t}-\hgain_{i,t})\right)\Bigr]
= \E_t\left[\exp\left(\lambda\left( g_{i,t}-\frac{\gain_{i,t} \Ind{I_t\reach{t}i}}{q_{i,t}}\right)
                                                              -\frac{\beta \lambda}{q_{i,t}}\right)\right]\\
&\leq
\left(1+\E_t\left[\lambda\left( g_{i,t}-\frac{\gain_{i,t}\Ind{I_t\reach{t}i}}{q_{i,t}}\right)\right]
+
\E_t\left[\left(\lambda\left( g_{i,t}-\frac{\gain_{i,t} \Ind{I_t\reach{t}i}}{q_{i,t}}\right)\right)^2\right]\right)\exp\left(-\frac{\beta \lambda}{q_{i,t}}\right)\\
&\leq
\left(1+0+\lambda^2\E_t\left[\left(\frac{\gain_{i,t}\Ind{I_t\reach{t}i}}{q_{i,t}}\right)^2\right]\right)
\exp\left(-\frac{\beta \lambda}{q_{i,t}}\right)\\
&\leq
\left(1+\lambda^2 \sum_{j\,:\,j \reach{t} i}\frac{p_{j,t}}{\left(q_{i,t}\right)^2}\right)
\exp\left(-\frac{\beta \lambda}{q_{i,t}}\right)\\
&=
\left(1+\frac{\lambda^2 }{q_{i,t}}\right)
\exp\left(-\frac{\beta \lambda}{q_{i,t}}\right)~.
\end{align*}
Picking $\lambda = \beta$, and using the fact that $(1+x)\exp(-x)\leq 1$, we
get that this expression is at most $1$. As a result, we have
\[
\E\left[\exp\left(\lambda\sum_{t=1}^{T}\left(\gain_{i,t}-\hgain_{i,t}\right)\right)\right] \leq 1~.
\]
Now, by a standard Chernoff technique, we know that for any $\lambda>0$,
\[
\Pr\left(\sum_{t=1}^{T}\left(\gain_{i,t}-\hgain_{i,t}\right)>\epsilon\right)
\leq \exp(-\lambda \epsilon)\E\left[\exp\left(\lambda\sum_{t=1}^{T}\left(\gain_{i,t}-\hgain_{i,t}\right)\right)\right]~.
\]
In particular, for our choice of $\lambda$, we get the bound
\[
\Pr\left(\sum_{t=1}^{T}\left(\gain_{i,t}-\hgain_{i,t}\right)>\epsilon\right) \leq \exp\left(-\beta\epsilon\right)~.
\]
Substituting $\delta=\exp(-\beta \epsilon)$, solving for $\epsilon$, and
using a union bound to make the result hold simultaneously for all $i$, the
result follows.
\end{proof}

\subsubsection*{Proof of Theorem~\ref{thm:mainhighprob}}

With these key lemmas at hand, we can now turn to prove
Theorem~\ref{thm:mainhighprob}.
%
% We define the potential function $W_t = \sum_{i=1}^{K}w_{i,t}$, and get that
%
We have
\begin{equation}\label{eq:pbegin}
\frac{W_{t+1}}{W_t}
~=~
\sum_{i\in V}\frac{w_{i,t+1}}{W_t}
 ~=~
\sum_{i\in V}\frac{w_{i,t}}{W_t}\,\exp(\eta \hgain_{i,t})~.
\end{equation}
%
% We have that $\eta\hgain_{i,t}\leq 1$,

Now, by definition of $q_{i,t}$ and $\gamma_t$ in Algorithm~\ref{alg:bandits}
we have
\[
q_{i,t} \geq \gamma_t\,\sum_{j\,:\,j\reach{t}i} s_{j,t} \geq (1+\beta)\,\eta
\]
for all $i \in V$, so that
\[
\eta \hgain_{j,t} ~\leq~  \eta \max_{i \in V} \left(\frac{1+\beta}{q_{i,t}}\right)
~\leq~ 1~.
%\eta(1+\beta)\max_{i \in V}\frac{1}{\left(s_i+ \sum_{j\,:\, j \reach{t} i} s_{j,t}\right)},
\]
% and this in turn is at most $\eta(1+\beta)\mas(G_t)$ by \lemref{lem:maxratio},
% and this is at most $\frac{(1+\beta)\max(G_t)}{3K} \leq 1$ by the assumptions on $\eta$ and $\beta$.
%
Using the definition of $p_{i,t}$ and the inequality $\exp(x)\leq 1+x+x^2$
for any $x\leq 1$, we can then upper bound the right-hand side
of~(\ref{eq:pbegin}) by
\begin{align*}
\sum_{i\in V}\frac{p_{i,t}-\gamma_t s_{i,t}}{1-\gamma_t}\left(1+\eta\hgain_{i,t}+\eta^2\hgain_{i,t}^2\right)
~\leq~
1+\frac{\eta}{1-\gamma_t}\sum_{i\in V}p_{i,t}\hgain_{i,t}+\frac{\eta^2}{1-\gamma_t}\sum_{i=1}^{K}p_{i,t}\hgain_{i,t}^2~.
\end{align*}
Taking logarithms and using the fact that $\ln(1+x)\leq x$, we get
\[
\ln\left(\frac{W_{t+1}}{W_t}\right)
~\leq~
\frac{\eta}{1-\gamma_t}\sum_{i\in V}p_{i,t}\hgain_{i,t}+\frac{\eta^2}{1-\gamma_t}\sum_{i\in V}p_{i,t}\hgain_{i,t}^2~.
\]
Summing over all $t$, and canceling the resulting telescopic series, we get
\begin{equation}\label{eq:pupbound}
\ln\left(\frac{W_{T+1}}{W_1}\right)
~\leq~ \sum_{t=1}^{T}\sum_{i\in V}\frac{\eta}{1-\gamma_t}p_{i,t}\hgain_{i,t}
+\sum_{t=1}^{T}\sum_{i\in V}\frac{\eta^2}{1-\gamma_t}p_{i,t}\hgain_{i,t}^2.
\end{equation}
Also, for any fixed action $k$, we have
\begin{equation}\label{eq:plowbound}
\ln\left(\frac{W_{T+1}}{W_1}\right) \geq \ln\left(\frac{w_{j,T+1}}{W_1}\right)
= \eta\sum_{t=1}^{T}\hgain_{k,t}-\ln K~.
\end{equation}
Combining \eqref{eq:pupbound} with \eqref{eq:plowbound}, and slightly
rearranging and simplifying, we get
\begin{align}
\sum_{t=1}^{T}&\hgain_{k,t} - \sum_{t=1}^{T}\sum_{i\in V}p_{i,t}\hgain_{i,t} \notag\\
&\leq
\frac{\ln K}{\eta}+\frac{\eta}{\displaystyle 1-\max_{t = 1,\dots,T} \gamma_t}\sum_{t=1}^{T}\sum_{i\in V}p_{i,t}\hgain_{i,t}^2
+\frac{1}{\displaystyle 1-\max_{t = 1,\dots,T} \gamma_t}\sum_{t=1}^{T}\sum_{i\in V}\gamma_t p_{i,t}\hgain_{i,t}~.\label{eq:porbound}
\end{align}
We now start to apply the various lemmas, using a union bound. To keep things
manageable, we will use asymptotic notation to deal with second-order terms.
In particular, we will use $\widetilde{\Ocal}$ notation to hide numerical
constants and logarithmic factors\footnote { Technically,
$\widetilde{\Ocal}(f)=O(f\log^{O(1)} f)$. In our $\widetilde{\Ocal}$ we also
ignore factors that depend logarithmically on $K$ and $1/\delta$.
%CG (Sept 27th): I don't get the meaning of "even if $f$ does not depend on those parameters"
%OS: Changed to make it clearer.
}. By definition of $\beta$ and $\gamma_t$, as well as Corollary
\ref{cor:elp:sp}, it is easy to verify\footnote { The bound for $\beta$ is by
definition. The bound for $\gamma_t$ holds since by \lemref{lem:maxratio} and
the assumptions that $\eta\leq 1/(3K)$ and $\beta\leq 1/4$ we have
\[
\gamma_t=\frac{(1+\beta)\eta}{\min_{i\in V}\sum_{j\,:\, j \reach{t} i } s_ {j,t}}
\leq (1+\beta)\eta\,\mas(G_t)\leq \frac{(1+\beta)\max(G_t)}{3K}\leq \frac{1+1/4}{3}< 1/2~.
\]
} that
\begin{equation}\label{e:conditions}
\beta = \widetilde{\Ocal}(\eta) \qquad \gamma_t = \widetilde{\Ocal}(\eta\,\mas(G_t)) \qquad \gamma_t \in \left[\eta,\frac{1}{2}\right]~.
\end{equation}
First, by \lemref{lem:pg}, we have with probability at least $1-\delta$ that
\begin{equation}\label{eq:final1}
\sum_{t=1}^{T}\sum_{i=1}^{K} p_{i,t}\hgain_{i,t} \leq \sum_{t=1}^{T}\sum_{i=1}^{K}p_{i,t}\gain_{i,t}
+
\sqrt{2\ln\left(\frac{1}{\delta}\right)\sum_{t=1}^{T}\mas(G_t)}+\beta\sum_{t=1}^{T}\mas(G_t)
+\widetilde{\Ocal}\left(\max_{t = 1...T}~\mas(G_t)\right)~.
\end{equation}
Moreover, by Azuma's inequality, we have with probability at least $1-\delta$
that
\begin{equation}\label{eq:final2}
\sum_{t=1}^{T}\sum_{i=1}^{K}p_{i,t}\gain_{i,t}\leq \sum_{t=1}^{T}\gain_{I_t,t}+\sqrt{\frac{\ln(1/\delta)}{2}\,T}~.
\end{equation}
Second, again by \lemref{lem:pg} and the conditions~(\ref{e:conditions}), we
have with probability at least $1-\delta$ that
\begin{align}
\sum_{t=1}^{T}\sum_{i=1}^{K}\gamma_t p_{i,t}\hgain_{i,t}
~&\leq~
\sum_{t=1}^{T}\sum_{i=1}^{K}\gamma_t p_{i,t}\gain_{i,t}
     +\widetilde{\Ocal}\left(\max_{t = 1,\dots,T} \mas^2(G_t)(1+\sqrt{T}\eta+T\eta^2)\right)\notag\\
~&\leq~
\sum_{t=1}^{T}\gamma_t+\widetilde{\Ocal}\left(\max_{t = 1,\dots,T} \mas^2(G_t)(1+\sqrt{T}\eta+T\eta^2)\right)~.\label{eq:final3}
\end{align}
Third, by \lemref{lem:pg2}, and conditions~(\ref{e:conditions}), we have with
probability at least $1-\delta$ that for all $i$,
\begin{equation}\label{eq:final4}
\sum_{t=1}^{T}p_{i,t}\hgain_{i,t}^2 \leq 2\sum_{t=1}^{T}\mas(G_t)
+\left(\max_{t = 1,\dots,T}(\mas^2(G_t))\right)\widetilde{\Ocal}\left(T\eta+\frac{1}{\eta}
+\sqrt{T\left(1+\frac{1}{\eta}\right)}\right)~.
\end{equation}
Fourth, by \lemref{lem:g}, we have with probability at least $1-\delta$ that
\begin{equation}\label{eq:final5}
\sum_{t=1}^{T}\hgain_{k,t}\geq \sum_{t=1}^{T}\gain_{k,t}- \frac{\ln(K/\delta)}{\beta}~.
\end{equation}
%

% CG: Ohad, the proof is slightly different starting from here.

Combining
\eqref{eq:final1},\eqref{eq:final2},\eqref{eq:final3},\eqref{eq:final4} and
\eqref{eq:final5} with a union bound (i.e., replacing $\delta$ by
$\delta/5$), substituting back into \eqref{eq:porbound}, and slightly
simplifying, we get that with probability at least $1-\delta$,
$\sum_{t=1}^{T}(\gain_{k,t}-\gain_{I_t,t})$ is at most
\begin{align*}
&\sqrt{2\ln\left(\frac{5}{\delta}\right)\sum_{t=1}^{T}\mas(G_t)}
+\beta\sum_{t=1}^{T}\mas(G_t)+\sqrt{\frac{\ln(5/\delta)}{2}T}
+\frac{\ln(5K/\delta)}{\beta}+\frac{\ln K}{\eta}\\
&+4\eta\sum_{t=1}^{T}\mas(G_t)+2\sum_{t=1}^{T}\gamma_t
+(1+\sqrt{T\eta}+T\eta^2)\,\widetilde{\Ocal}\left(\max_{t = 1,\dots,T}(\mas^2(G_t))\right)~.
\end{align*}
Substituting in the values of $\beta$ and $\gamma_t$, overapproximating, and
simplifying (in particular, using Corollary \ref{cor:elp:sp} to upper bound
$\gamma_t$ by $(1+\beta)\,\eta\,\mas(G_t)$), we get the upper bound
\begin{align*}
\sqrt{5\ln\left(\frac{5}{\delta}\right)\sum_{t=1}^{T}\mas(G_t)}
&+\frac{2\ln (5K/\delta)}{\eta} + 12\eta\,\sqrt{\frac{\ln(5K/\delta)}{\ln K}}\,\sum_{t=1}^{T}\mas(G_t)
\\ &+
\widetilde{\Ocal}(1+\sqrt{T\eta}+T\eta^2)\left(\max_{t = 1,\dots,T} (\mas^2(G_t))\right)~.
\end{align*}
%
% that is, the first bound in the statement of Theorem \ref{thm:mainhighprob}.
%
In particular, by picking $\eta$ such that
\[
\eta^2 = \frac{1}{6}\,\frac{\sqrt{\ln(5K/\delta)\,(\ln K)}}{\sum_{t=1}^{T} m_t}
\]
%\[
%\eta = \sqrt{\frac{2\ln^{3/2}(K)}
% {7\ln^{1/2}(5K/\delta)\sum_{t=1}^{T}\mas(G_t)}}
% \]
noting that this implies $\eta = \widetilde{\Ocal}(1/\sqrt{T})$, and
overapproximating once more, we get the second bound
\[
\sum_{t=1}^{T}(\gain_{k,t}-\gain_{I_t,t})
\leq
10\,\frac{\ln^{1/4}(5K/\delta)}{\ln^{1/4}K}\,\sqrt{\ln \left(\frac{5K}{\delta}\right)\sum_{t=1}^{T} m_t}
+\widetilde{\Ocal}(T^{1/4})\left(\max_{t = 1,\dots,T} \mas^2(G_t) \right)~.
\]
To conclude, we simply plug in $\ell_{i,t}=1-\gain_{i,t}$ for all $i$ and
$t$, thereby obtaining the claimed results.

\end{document}